\definecolor{citecolor}{HTML}{0071bc}
\title{ScaleLong: Towards More Stable Training of Diffusion Model via Scaling Network Long Skip Connection}
\newtheorem{theorem}{Theorem}[section]
\newtheorem{proposition}[theorem]{Proposition}
\newtheorem{lemma}[theorem]{Lemma}
\newtheorem{corollary}[theorem]{Corollary}
\author{\textbf{Zhongzhan Huang$^{1,2}$} \quad \textbf{Pan Zhou$^{2}$\thanks{Corresponding author.}} \quad \textbf{Shuicheng Yan$^2$} \quad \textbf{Liang Lin$^{1*}$}\\
\textsuperscript{\rm 1}Sun Yat-Sen University,
\textsuperscript{\rm 2}Sea AI Lab\\
  \texttt{huangzhzh23@mail2.sysu.edu.cn; zhoupan@sea.com;} \\
  \texttt{shuicheng.yan@gmail.com; linliang@ieee.org}
}
\begin{document}

\maketitle

\begin{abstract}
In diffusion models, UNet  is the most popular  network backbone, since its  long skip connects (LSCs)  to  connect  distant  network blocks can aggregate long-distant  information  and alleviate vanishing gradient.
  Unfortunately,  UNet often suffers from  unstable training in diffusion models which can be  alleviated by scaling its LSC coefficients smaller.  However,  theoretical understandings of the instability of UNet in diffusion models and also the  performance
  improvement of LSC scaling remain absent yet.   To solve this issue, we theoretically  show that  the coefficients of LSCs in UNet have big effects on  the stableness of the forward and  backward propagation and robustness of  UNet. Specifically,  the  hidden feature  and gradient of UNet at any layer can oscillate and their  oscillation  ranges are actually large which explains the instability of UNet training.  Moreover, UNet is also provably sensitive to perturbed input, and  predicts an output distant from the desired output, yielding   oscillatory loss and thus oscillatory gradient. 
  Besides, we also observe the theoretical benefits of  the LSC coefficient scaling of UNet  in the stableness of hidden features  and gradient  and also robustness. 
  Finally,   inspired by our theory, we propose an effective coefficient scaling framework ScaleLong  that scales the coefficients of LSC  in UNet and better improve the  training stability of UNet.  Experimental results on four famous datasets show that our  methods are superior to stabilize  training, and yield about 1.5$\times$ training acceleration  on different diffusion models with UNet or UViT backbones.  \href{https://github.com/sail-sg/ScaleLong}{Click here for Code.}

\end{abstract}

\vspace{-0.3cm}
	\section{Introduction}
	\label{sec:intro}
			\begin{wrapfigure}{r}{0.5\textwidth}
			\vspace{-1.1em}
		\includegraphics[width=0.5\textwidth]{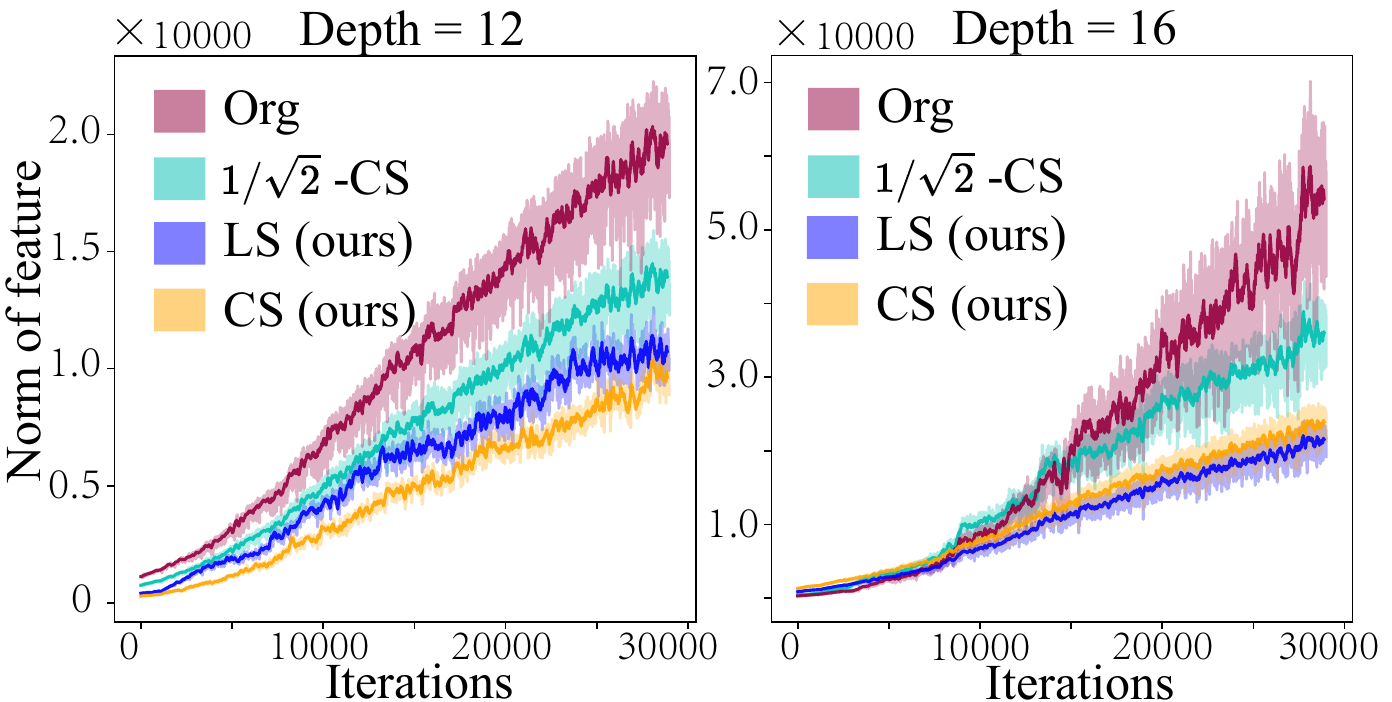}
			\vspace{-1.5em}
			\caption{The feature oscillation in UNet. }
		\label{fig:stable}
		\vspace{-1.3em}
	\end{wrapfigure}  
	
	Recently,  diffusion models (DMs) \cite{lugmayr2022repaint,hang2023efficient,nichol2021improved,ho2020denoising,austin2021structured,croitoru2023diffusion,Nair2022ATDDPMRF,Choi2021ILVRCM,Wyatt2022AnoDDPMAD} have become the most successful   generative models because of their superiority  on  modeling realistic data distributions. The methodology of DMs includes a forward diffusion process and a  corresponding reverse diffusion process. For   forward  process, it progressively injects Gaussian noise into  a realistic  sample until the sample becomes  a Gaussian noise, while  the reverse   process  trains a  neural network to denoise the injected noise at each step for gradually mapping the Gaussian noise into the vanilla  sample.   By decomposing the complex generative task  into a sequential application of denoising small noise,  DMs  achieve much better  synthesis performance than other  generative models, e.g., generative adversarial networks \cite{goodfellow2020generative} and variational autoencoders \cite{zhao2019infovae}, on image \cite{kim2022diffusionclip,avrahami2022blended,nichol2022glide,gao2023masked}, 3D \cite{poole2022dreamfusion,wang2022rodin} and video \cite{ho2022video,molad2023dreamix,esser2023structure,blattmann2023align} data and beyond.

	\textbf{Motivation.}  Since the reverse diffusion  process in  DMs  essentially addresses a denoising task, most DMs follow previous image denoising and restoration works \cite{zhang2023kbnet,zamir2021multi,wang2022uformer,zamir2022restormer,chen2022simple} to employ  UNet  as their de-facto backbone. This is  because  UNet uses   long skip connects (LSCs) to   connect the distant and symmetrical network blocks in a residual network,  which helps long-distant  information aggregation  and alleviates the   vanishing gradient issue, yielding  superior  image denoising and restoration performance.  However,  in DMs, its reverse    process  uses a shared UNet to predict the injected noise at any  step, even though the input noisy data have time-varied distribution due to the  progressively injected noise.  This inevitably leads to the training difficulty  and instability of UNet in DMs.  Indeed,  as shown  in Fig.~\ref{fig:stable}, the  output of hidden layers in UNet, especially for deep UNet, oscillates rapidly along with  training iterations. This  feature oscillation also indicates that   there are some layers whose  parameters suffer from oscillations  which often impairs  the parameter learning speed.  Moreover, the unstable hidden features also result in an unstable input for  subsequent layers and greatly increase their learning difficulty.  Some works \cite{Song2020ScoreBasedGM,Saharia2021ImageSV,Saharia2022PhotorealisticTD,Ho2020DenoisingDP,Karras2018ASG} empirically find that  $1/\sqrt{2}$-constant scaling ($1/\sqrt{2}$-CS), that scales the coefficients of LSC (i.e. $\{\kappa_i\}$ in Fig.~\ref{fig:archunet}) from one used in UNet to  $1/\sqrt{2}$,  alleviates the oscillations to some extent as shown in Fig.~\ref{fig:stable}, which, however, lacks   intuitive and theoretical  explanation and still suffers from feature and gradient oscillations. 	 So it is  unclear yet 1) why UNet in DMs is unstable and also 2) why scaling  the coefficients of LSC  in UNet helps stabilize training,    which hinders the designing new and more advanced DMs in a principle way.

\begin{wrapfigure}[13]{r}{0.35\textwidth}
		\vspace{-1.5em}
  \includegraphics[width=0.35\textwidth]{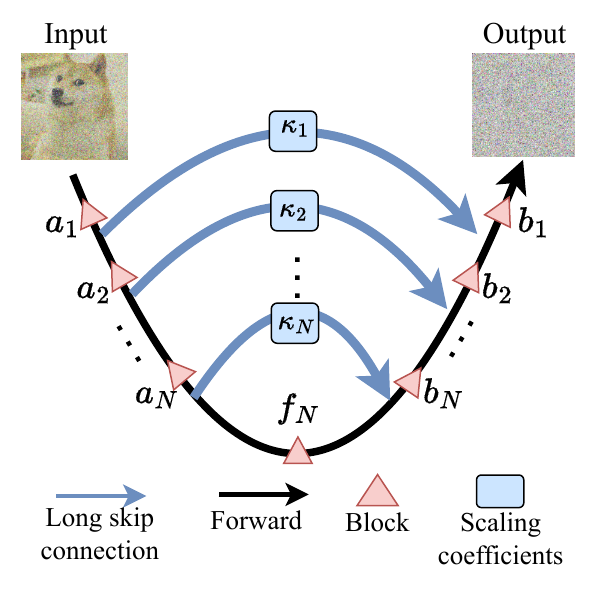}
  \vspace{-0.80cm}
\caption{The diagram of UNet.}
  \label{fig:archunet}
\vspace{-4.9em}
\end{wrapfigure}  
 
\textbf{Contribution.} 
In this work,   we  address the above two fundamental questions and contribute to deriving some new results and alternatives for UNet. 
Particularly,  we theoretically analyze the above training instability of UNet in DMs,  and identify the key role of the LSC coefficients    in UNet for its unstable  training, interpreting the stabilizing effect and limitations of the $1/\sqrt{2}$-scaling technique. Inspired by our theory, we propose the framework ScaleLong including two novel-yet-effective coefficient scaling methods which scale the coefficients of LSC  in UNet and better improve the  training stability of UNet.  
Our  main contributions are highlighted below.

Our first contribution is proving that the coefficients of LSCs in UNet have big effects on  the stableness of  the forward and  backward propagation and robustness of  UNet.  Formally,  \textit{for forward propagation}, we show that the norm of  the hidden feature at any layer can be lower-  and also upper-bounded, and the oscillation range between lower and upper bounds is of the order    $\mathcal{O}\big(m \sum_{i=1}^N\kappa_j^2   \big)$, where $\{\kappa_i\}$  denotes the   scaling coefficients  of $N$ LSCs  and   the  input dimension  $m$ of UNet is often of hundreds.  Since standard UNet sets $\kappa_i =1 \ (\forall i)$, the above oscillation range   becomes  $\mathcal{O}\big(m N\big)$ and is large, which partly explains the oscillation and instability of hidden features as observed in  Fig.~\ref{fig:stable}.  The $1/\sqrt{2}$-CS technique can slightly reduce the oscillation range, and thus helps stabilize UNet. Similarly, \textit{for backward propagation}, the gradient magnitude in UNet is upper bounded by $\mathcal{O}\big(m \sum_{i=1}^N\kappa_i^2   \big)$. In  standard UNet, this   bound becomes a    large bound $\mathcal{O}\big(m N\big)$,  and    yields the possible  big incorrect parameter update, impairing the parameter learning speed during training. This also explains the big oscillation of hidden features, and reveals the stabilizing effects of the  $1/\sqrt{2}$-CS technique.  Furthermore, \textit{for robustness} that measures the prediction change when adding a small perturbation into an input,   we  show   robustness bound of UNet is   $\mathcal{O}(  \sum\nolimits_{i=1}^N\kappa_i M_0^i )$ with a model-related constant $M_0>1$. This result also implies a big robustness bound of standard UNet, and  thus shows the sensitivity of UNet to the noise which  can give a big incorrect gradient and explain the unstable training.  It also shows that $1/\sqrt{2}$-CS technique  improves the robustness.

Inspired  by our theory, we further propose a novel framework ScaleLong including two scaling methods to adjust the coefficients of the LSCs  in UNet for stabilizing the training: 1) constant scaling method ({\color{black}\text{\textbf{CS}}} for short) and 2)  learnable scaling method ({\color{black}\text{\textbf{LS}}}). CS sets the coefficients $\{\kappa_i\}$ as a serir of exponentially-decaying constants, i.e., $\kappa_i = \kappa^{i-1}$ with a contant $ \kappa\in(0,1]$.  
As a result, CS   greatly stabilizes the  UNettraining    by largely reducing the robustness error bound from $\mathcal{O}(M_0^N)$ to $\mathcal{O}\big(M_0(\kappa M_0)^{N-1} \big)$, which is also $\mathcal{O}(\kappa^{N-2})$ times smaller than  the bound  $ \mathcal{O}(\kappa M_0^N)$ of the universal scaling method, i.e., $\kappa_i = \kappa$, like $1/\sqrt{2}$-CS technique. 
Meanwhile, CS can ensure the information transmission of LSC without degrading into a feedforward network. Similarly, the oscillation range  of hidden features and also gradient magnitude can also be controlled by our CS.

For LS, it uses a tiny shared network for all LSCs to predict the scaling coefficients for each LSC. In this way,  LS is more flexible and adaptive than the CS method, as it can learn the scaling coefficients according to the training data, and  can also adjust the scaling coefficients along with training iterations which could benefit the training in terms of stability and  convergence speed.  Fig.~\ref{fig:stable} shows the effects of CS and LS in stabilizing the UNet training in DMs.  

Extensive experiments  on 
CIFAR10 \cite{Krizhevsky2009LearningML}, CelebA \cite{liu2015faceattributes}, ImageNet \cite{Russakovsky2014ImageNetLS}, and COCO \cite{Lin2014MicrosoftCC}, show the effectiveness of our CS and LS methods in enhancing  training stability and thus accelerating  the learning speed by at least 1.5$\times$ in most cases   across  several DMs, e.g. UNet and UViT networks under the unconditional \cite{ho2020denoising,nichol2021improved,austin2021structured}, class-conditional \cite{bao2022all,Ho2022ClassifierFreeDG} and text-to-image \cite{zhong2023adapter,Ruiz2022DreamBoothFT,Kumari2022MultiConceptCO,Gu2021VectorQD,Xu2022Dream3DZT} settings. 

	\section{Preliminaries and other related works}
	\label{sec:preli}
\vspace{-0.1cm}

	\textbf{Diffusion Model (DM)}.  DDPM-alike DMs \cite{nichol2021improved,ho2020denoising,austin2021structured,lugmayr2022repaint,hang2023efficient,croitoru2023diffusion,Choi2021ILVRCM,Wyatt2022AnoDDPMAD,Khader2022DenoisingDP} generates a sequence of noisy samples $\{\mathbf{x}_i\}_{i=1}^T$ by repeatedly adding Gaussian noise to a    sample $\mathbf{x}_0$ until attatining  $\mathbf{x}_T\sim \mathcal{N}(\mathbf{0},\mathbf{I})$.  This noise injection process, a.k.a.  the forward process, can be formalized as a Markov chain  $q\left(\mathbf{x}_{1: T} | \mathbf{x}_0\right)=\prod_{t=1}^T q\left(\mathbf{x}_t |\mathbf{x}_{t-1}\right)$,
	where $q(\mathbf{x}_t|\mathbf{x}_{t-1})=\mathcal{N}(\mathbf{x}_t|\sqrt{\alpha_t}\mathbf{x}_{t-1},\beta_t\mathbf{I})$, 
$\alpha_t$ and $\beta_t$ depend on $t$ and satisfy $\alpha_t+\beta_t=1$. By using the properties of  Gaussian distribution, the forward process can be written as 

	\begin{equation}
		q(\mathbf{x}_t|\mathbf{x}_0) = \mathcal{N}(\mathbf{x}_t; \sqrt{\bar{\alpha}_t}\mathbf{x}_0,(1-\bar{\alpha}_t)\mathbf{I}), 
		\label{eq:xt}
	\end{equation}
	where $\bar{\alpha}_t = \prod_{i=1}^t \alpha_i$.
	 Next, 
	 one can sample $\mathbf{x}_t = \sqrt{\bar{\alpha}_t}\mathbf{x}_0 + \sqrt{1 - \bar{\alpha}_t}\epsilon_t$,  where $\epsilon_t \sim \mathcal{N}(\mathbf{0},\mathbf{I})$. Then  DM  adopts  a neural network $\hat{\epsilon}_\theta(\cdot,t)$ to invert the forward process and  predict the noise $\epsilon_t$ added at each time step. This process is  to recover data $\mathbf{x}_0$ from a Gaussian noise  by  minimizing the loss
	\begin{equation}
		\ell_{\text{simple}}^t(\theta) = \mathbb{E} \| \epsilon_t - \hat{\epsilon}_\theta (\sqrt{\bar{\alpha}_t}\mathbf{x}_0 + \sqrt{1 - \bar{\alpha}_t}\epsilon_t, t)  \|_2^2 .
		\label{eq:loss_main}
	\end{equation}

	\vspace{-0.25cm}

	\textbf{UNet-alike Network}.  Since the  network $\hat{\epsilon}_\theta(\cdot,t)$ in DMs predicts the noise to   denoise,  it plays a similar  role  of  UNet-alike network widely in  image restoration tasks, e.g.    image de-raining, image denoising \cite{zhang2023kbnet,zamir2021multi,wang2022uformer,zamir2022restormer,chen2022simple,Chen2021HINetHI,Tai2017MemNetAP}.  This inspires DMs to use UNet-alike  network  in \eqref{eq:unet} that    uses LSCs to connect distant parts of the network  for long-range information transmission and aggregation

 	\begin{equation}
			\mathbf{UNet}(x)=f_0(x), \quad 
			f_i(x)=b_{i+1} \circ \left[\kappa_{i+1} \cdot a_{i+1} \circ x+f_{i+1}\left(a_{i+1} \circ x\right)\right], \ \ 0 \leq i \leq N-1
		\label{eq:unet}
	\end{equation}
	where $x \in \mathbb{R}^m$ denotes the input, $a_i$ and $b_i$ $ (i\geq 1)$ are the trainable parameter of the $i$-th block, $\kappa_i > 0\ (i\geq 1)$ are the scaling coefficients and are set to 1 in standard UNet. $f_N$ is the middle block of UNet. For 
	the vector operation $\circ$, it can be designed  to implement different  networks.

	W.l.o.g, in this paper, we consider $\circ$  as matrix multiplication, and set  the $i$-th block as a stacked network \cite{Zhang2019StabilizeDR,zhang2021on} which is implemented  as 
	$a_i \circ x = \mathbf{W}^{a_i}_l\phi (\mathbf{W}^{a_i}_{l-1} ... \phi(\mathbf{W}^{a_i}_1 x) )$ with a ReLU activation function $\phi$  and learnable matrices $\mathbf{W}^{a_i}_j \in \mathbb{R}^{m\times m}\ (j\geq 1)$. Moreover, let  $f_N$ also have the same  architecture, i.e. $f_N(x) =  \mathbf{W}^{f_i}_l\phi (\mathbf{W}^{f_i}_{l-1} ... \phi(\mathbf{W}^{f_i}_1 x) )$. 
	Following~\cite{bao2022all}, the above UNet has absorbed  the fusion operation of the feature   
	$a_{i} \circ x$ and $f_{i}\left(a_{i} \circ x\right)$  into UNet backbone.
 So for   simplicity, we analyze  the  UNet  in~\eqref{eq:unet}.   Moreover, as there  are many variants of UNet, e.g.  transformer UNet, it is hard to provide a unified analysis for all of them. Thus, here we only analyze the most widely used UNet  in (\ref{eq:unet}), and leave the analysis of other  UNet variants in future work.
 
\textbf{Other related works.} Previous works \cite{Bachlechner2020ReZeroIA,Zhang2019FixupIR,De2020BatchNB,xiong2020on,He2023DeepTW,Hayou2020StableR,Zhang2019StabilizeDR,hanin2018start,balduzzi2017shattered} mainly studied classification task and observed  performance improvement of scaling block output instead of skip connections in ResNet and  Transformer, i.e.  $\mathbf{x}_{i+1} = \mathbf{x}_{i} + \kappa_{i} f_i(\mathbf{x}_i)$ where $f_i(\mathbf{x}_i)$ is the $i$-th block. In contrast, we study UNet for DMs which uses long skip connections (LSC) instead of short skip connections, and stabilize UNet training via scaling LSC via Eq.~(\ref{eq:unet}) which is  superior to scaling block output as shown in Section~\ref{exp}.

\vspace{-0.1cm}
	\section{Stability Analysis on UNet}
	\label{sec:scale}
 \vspace{-0.1cm}
	As shown in Section \ref{sec:intro}, standard U-Net  ($\kappa_i=1, \forall i$) often suffers from the unstable training issue in diffusion models (DMs),  while  scaling the long skip connection (LSC) coefficients $\kappa_i ( \forall i)$ in UNet can help to  stabilize the  training.  However, theoretical understandings	of the instability of  U-Net  and also the effects of scaling LSC coeffeicents remain absent yet, hindering the development of new and more advanced DMs in a principle way.  To address this problem,  in this section, we will theoretically and comprehensively analyze 1) why UNet in DMs is unstable and also 2) why scaling  the coeffeicents of LSC  in UNet helps stablize training. To this end, in the following,  we will analyze the stableness of  the forward    and backward propagation of  UNet,  and investigate the robustness of UNet to the noisy input. All these analyses not only deepen the understanding of  the instability of UNet, but also inspire an effective solution for a more stable UNet as introduced in Section \ref{sec:method}.
	
\begin{figure*}[t]
  \centering
 \includegraphics[width=0.99\linewidth]{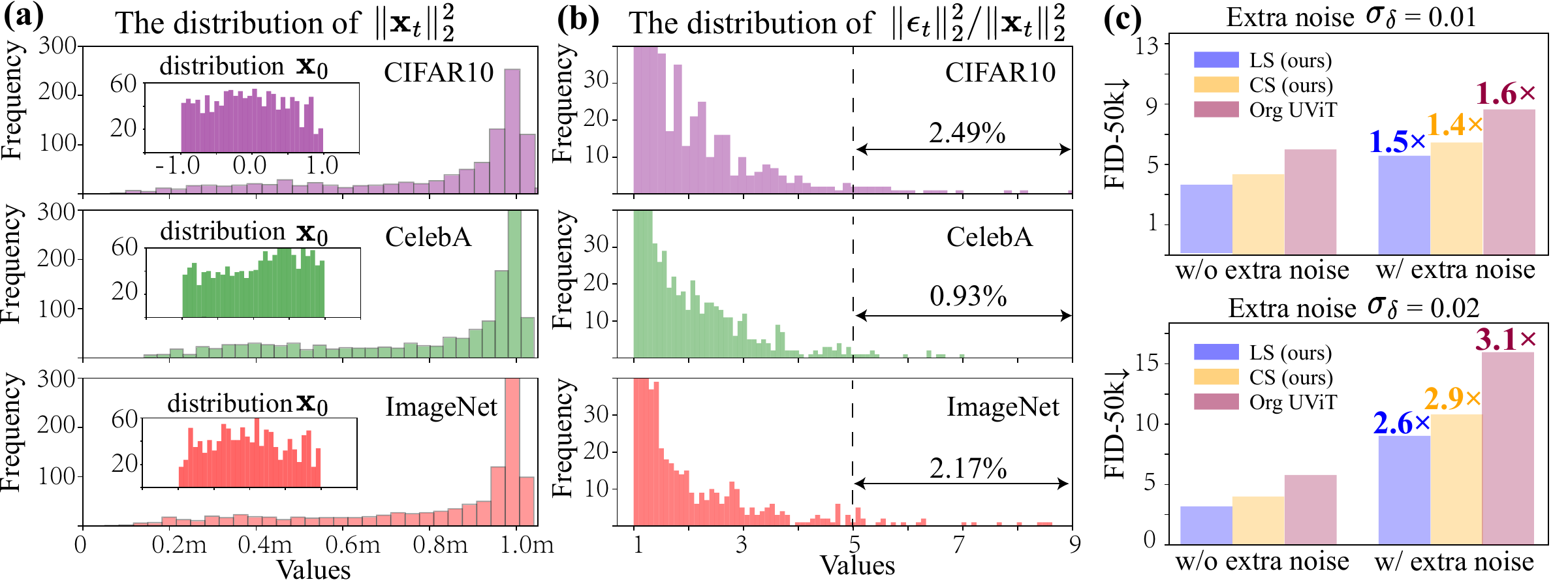}
  \caption{Distribution of $\mathbf{x}_0$, $\|\mathbf{x}_t\|_2^2$ and $\|\epsilon_t\|_2^2/\|\mathbf{x}_t\|_2^2$, and effect of extra noise to performance. }
\label{fig:stat}
\end{figure*}
	\subsection{Stability of Forward Propagation }
	\label{sec:forward}

	Here we study the stableness of the hidden  features of UNet.  This is very important, since for a certain layer, if its  features vary significantly along with the training iterations,  it means there is at least a layer whose  parameters suffer from oscillations which often impairs  the parameter learning speed.  Moreover, the unstable hidden features also result in an unstable input for  subsequent layers and greatly increase their learning difficulty, since a network cannot easily learn an unstable distribution, e.g.  Internal Covariate Shift \cite{Awais2020RevisitingIC,Huang2020AnIC},  which is also shown in many works \cite{Ioffe2015BatchNA,Santurkar2018HowDB,Cooijmans2016RecurrentBN}.  
	  In the following, we theoretically characterize the hidden feature output by bounding its norm.
	
	
	\begin{theorem}
		\label{theo:forward}
		Assume  that   all learnable matrices of  UNet in Eq.~\eqref{eq:unet} are independently initialized as Kaiming's initialization, i.e., $\mathcal{N}(0,\frac{2}{m})$.  Then for any $\rho\in(0,1]$, by minimizing  the  training loss  Eq.~\eqref{eq:loss_main} of DMs, with probability at least $1-\mathcal{O}(N) \exp[-\Omega(m\rho^2)]$, we have 
		\begin{equation}
			(1-\rho)^2 \left[ c_1 \|\mathbf{x}_t\|_2^2 \cdot \sum\nolimits_{j=i}^N\kappa_j^2    + c_2 \right]  \lesssim  \|h_i\|_2^2 \lesssim (1+\rho)^2 \left[ c_1 \|\mathbf{x}_t\|_2^2   \cdot \sum\nolimits_{j=i}^N\kappa_j^2  + c_2 \right] ,
		\end{equation}
		where the hidden feature $h_i$ is the output of $f_i$ defined in Eq.~(\ref{eq:unet}),   $\mathbf{x}_t = \sqrt{\bar{\alpha}_t}\mathbf{x}_0 + \sqrt{1 - \bar{\alpha}_t}\epsilon_t$ is the input of UNet; $c_1$ and $c_2$ are two    constants;  $\kappa_i$ is the scaling coefficient of the $i$-th LSC in UNet.  
	\end{theorem}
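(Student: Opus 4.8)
The plan is to reduce the statement to a single \emph{norm-preservation} property of the elementary building blocks and then propagate it through the recursive definition~\eqref{eq:unet}, with all randomness coming from the Kaiming initialization and $\mathbf{x}_t$ treated as a fixed input. First, fix a deterministic $u\in\mathbb{R}^m$ and a matrix $\mathbf{W}$ with i.i.d.\ $\mathcal{N}(0,2/m)$ entries independent of $u$: each coordinate of $\mathbf{W}u$ is $\mathcal{N}(0,\tfrac{2}{m}\|u\|_2^2)$, so $\|\mathbf{W}u\|_2^2$ is a scaled $\chi^2$ variable with $m$ degrees of freedom and mean $2\|u\|_2^2$, which by the usual sub-exponential tail bound lies in $(1\pm\rho)\cdot 2\|u\|_2^2$ with probability $1-2\exp(-\Omega(m\rho^2))$; and because $\mathbf{W}u$ is a centered Gaussian vector, its sign pattern is independent of the magnitudes, so $\phi$ retains half the energy in expectation and $\|\phi(\mathbf{W}u)\|_2^2$ concentrates around $\|u\|_2^2$ with a bound of the same form. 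Composing $l$ such layers, conditioning at each layer on the already-drawn matrices so that its input is deterministic and independent of its matrix, and taking a union bound over the $l$ layers shows that any block $g\in\{a_i\circ\cdot,\,b_i\circ\cdot,\,f_N(\cdot)\}$ satisfies $(1-\rho)^2 c_0\|u\|_2^2\le\|g(u)\|_2^2\le(1+\rho)^2 c_0\|u\|_2^2$ for a constant $c_0>0$ depending only on $l$ and the interleaving pattern, with probability $1-\mathcal{O}(l)\exp(-\Omega(m\rho^2))$; this $c_0$ is what the theorem's $c_1,c_2$ will absorb.

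Next, I would unroll~\eqref{eq:unet}. Writing $z_i:=a_i\circ\cdots\circ a_1\circ\mathbf{x}_t$ for the argument at which the sub-network $f_i$ is invoked, one gets the exact identity $h_i=b_{i+1}\circ[\kappa_{i+1} z_{i+1}+h_{i+1}]$ with $h_N=f_N(z_N)$, so that $\|h_i\|_2^2\asymp\|\kappa_{i+1} z_{i+1}+h_{i+1}\|_2^2=\kappa_{i+1}^2\|z_{i+1}\|_2^2+2\kappa_{i+1}\langle z_{i+1},h_{i+1}\rangle+\|h_{i+1}\|_2^2$ by the block bound applied to $b_{i+1}$. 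The two norm terms are each $\asymp\|\mathbf{x}_t\|_2^2$ up to a constant (the block bound for the $a$-blocks forming $z_{i+1}$, and the inductive hypothesis for $h_{i+1}$), and the cross term is lower order: although $h_{i+1}$ depends on $z_{i+1}$, the last matrix entering $h_{i+1}$, namely $\mathbf{W}^{b_{i+2}}_l$, is independent of $z_{i+1}$ and of every other quantity feeding $h_{i+1}$, so conditionally $\langle z_{i+1},h_{i+1}\rangle$ is a centered Gaussian of variance $\tfrac{2}{m}\|z_{i+1}\|_2^2\|v\|_2^2$ for a deterministic pre-activation $v$, hence $|\langle z_{i+1},h_{i+1}\rangle|\le\rho\|z_{i+1}\|_2\|h_{i+1}\|_2$ with probability $1-2\exp(-\Omega(m\rho^2))$. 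A backward induction on $i$ starting from $\|h_N\|_2^2\asymp\|\mathbf{x}_t\|_2^2$ then yields $\|h_i\|_2^2\asymp\|\mathbf{x}_t\|_2^2\big(\mathrm{const}+\sum_{j\ge i+1}\kappa_j^2\big)$, which is the claimed two-sided bound once $c_1,c_2$ and the one-step shift in the summation index are accounted for. Each use of the block bound or the cross-term bound fails with probability $\mathcal{O}(l)\exp(-\Omega(m\rho^2))$, and there are $\mathcal{O}(N)$ of them, giving $1-\mathcal{O}(N)\exp(-\Omega(m\rho^2))$ after a union bound.

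The routine part is the single-layer $\chi^2$ concentration. The two delicate points, which I expect to be the real obstacles, are: making the conditioning in the cross-term step fully rigorous so that the skip path $z_{i+1}$ and the inner branch $h_{i+1}$ genuinely behave as near-orthogonal despite their dependence; and, more seriously, controlling the \emph{accumulation} of the $(1\pm\rho)$ multiplicative errors along a composition/recursion of depth $\Theta(Nl)$ — a naive layer-by-layer union bound produces an error band of order $(1\pm\rho)^{\Theta(Nl)}$ rather than the $(1\pm\rho)^2$ in the statement. The way around this is to view $\log\|z_i\|_2^2$, and likewise the running $\log$-norms in the recursion, as a sum of bounded, nearly independent increments and to apply a Bernstein/Azuma-type inequality, so that the total deviation scales like the square root of the number of layers and can be kept below $\log(1+\rho)$ after reparametrizing $\rho$; this is the step that needs the most care.
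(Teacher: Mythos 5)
Your proposal is correct and follows essentially the same route as the paper's proof: both reduce to the norm-preservation property of random ReLU blocks (the paper imports this as Lemma~\ref{lemma:zhu}/Corollary~\ref{coro:zhu} from Allen-Zhu et al., which internally does the martingale-type control of error accumulation you flag as the delicate step), and both kill the cross terms between the skip path and the inner branch by conditioning on the final Gaussian matrix, so that the squared norms add with weights $\kappa_j^2$. The only difference is bookkeeping --- you run a backward induction on the recursion $h_i = b_{i+1}\circ[\kappa_{i+1}z_{i+1}+h_{i+1}]$, whereas the paper unrolls $h_i$ directly into the sum of $N-i$ skip-connection paths and treats that sum as a single Gaussian vector via the shared last-layer matrix $\mathbf{W}^{b_{i+1}}_l$ --- which yields the same computation.
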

	 
	Theorem \ref{theo:forward} reveals that with high probability, the norm of  hidden features $h_i$ of UNet can be both lower- and also upper-bounded, and its  biggest  oscillation range is of the order $  \mathcal{O}\big( \|\mathbf{x}_t\|_2^2 \sum_{j=i}^N\kappa_j^2   \big)$ which is decided by the  scaling coefficients   $\{\kappa_i\}$ and  the UNet input $\mathbf{x}_t$.  So when $\|\mathbf{x}_t\|_2^2$ or $\sum_{i=1}^N\kappa_i^2$ is large, the above oscillation range  is  large which allows hidden feature $h_i$ to  oscillate largely as shown in Fig.~\ref{fig:stable},   leading to the instability of the forward processing in UNet.
 
 In fact, the following Lemma~\ref{lemma:xt_norm} shows that    $\|\mathbf{x}_t\|_2^2$ is around  feature dimension   $m$ in  DDPM.  
\begin{lemma}
\label{lemma:xt_norm}
For $\mathbf{x}_t = \sqrt{\bar{\alpha}_t}\mathbf{x}_0 + \sqrt{1 - \bar{\alpha}_t}\epsilon_t$ defined in Eq.~(\ref{eq:xt}) as a input of UNet, $\epsilon_t \sim \mathcal{N}(0,\mathbf{I})$, if $\mathbf{x}_0$ follow the uniform distribution $U[-1,1]$, 
then
 we have 
 \begin{equation}
     \mathbb{E}\|\mathbf{x}_t\|_2^2 = (1-2\mathbb{E}_t \bar{\alpha}_t/3)m = \mathcal{O}(m).
 \end{equation}
  \end{lemma}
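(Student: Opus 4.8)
The plan is to compute $\mathbb{E}\|\mathbf{x}_t\|_2^2$ directly by expanding the square and using independence of $\mathbf{x}_0$ and $\epsilon_t$. Writing $\mathbf{x}_t = \sqrt{\bar\alpha_t}\,\mathbf{x}_0 + \sqrt{1-\bar\alpha_t}\,\epsilon_t$, I would first expand
\[
\|\mathbf{x}_t\|_2^2 = \bar\alpha_t \|\mathbf{x}_0\|_2^2 + 2\sqrt{\bar\alpha_t(1-\bar\alpha_t)}\,\langle \mathbf{x}_0,\epsilon_t\rangle + (1-\bar\alpha_t)\|\epsilon_t\|_2^2 .
\]
Taking expectations, the cross term vanishes because $\epsilon_t$ is zero-mean and independent of $\mathbf{x}_0$, so $\mathbb{E}\langle \mathbf{x}_0,\epsilon_t\rangle = \langle \mathbb{E}\mathbf{x}_0, \mathbb{E}\epsilon_t\rangle = 0$. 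This leaves $\mathbb{E}\|\mathbf{x}_t\|_2^2 = \bar\alpha_t\,\mathbb{E}\|\mathbf{x}_0\|_2^2 + (1-\bar\alpha_t)\,\mathbb{E}\|\epsilon_t\|_2^2$.

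Next I would evaluate the two remaining moments coordinatewise. Since $\epsilon_t\sim\mathcal{N}(0,\mathbf{I}_m)$, each coordinate has unit variance and zero mean, so $\mathbb{E}\|\epsilon_t\|_2^2 = m$. Since each coordinate of $\mathbf{x}_0$ is uniform on $[-1,1]$, we have $\mathbb{E}[(\mathbf{x}_0)_k^2] = \int_{-1}^1 u^2\,\tfrac{1}{2}\,du = \tfrac13$, hence $\mathbb{E}\|\mathbf{x}_0\|_2^2 = m/3$. Substituting gives $\mathbb{E}\|\mathbf{x}_t\|_2^2 = \bar\alpha_t m/3 + (1-\bar\alpha_t)m = (1 - \tfrac23\bar\alpha_t)m$ for a fixed $t$. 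Finally, taking the expectation over $t$ (interpreting the statement as averaging over the time step, as the $\mathbb{E}_t$ notation in the conclusion suggests) yields $\mathbb{E}\|\mathbf{x}_t\|_2^2 = (1 - \tfrac23\mathbb{E}_t\bar\alpha_t)m$, and since $\bar\alpha_t\in[0,1]$ the prefactor lies in $[\tfrac13,1]$, so the quantity is $\Theta(m)$ and in particular $\mathcal{O}(m)$.

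There is essentially no hard step here; the only things to be careful about are (i) making the independence of $\mathbf{x}_0$ and $\epsilon_t$ explicit so the cross term is justifiably dropped, and (ii) being precise about what $\mathbb{E}$ ranges over — the per-coordinate distribution of $\mathbf{x}_0$, the Gaussian noise $\epsilon_t$, and the time index $t$ — since the statement folds all three into one symbol. Once those are pinned down the result is an elementary second-moment computation.
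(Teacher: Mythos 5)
Your proposal is correct and follows essentially the same route as the paper: the paper conditions on $\mathbf{x}_0$ and $t$, invokes the second moment of the (noncentral) chi-square distribution of $\|\mathbf{x}_t\|_2^2$ to obtain $\mathbb{E}\|\mathbf{x}_t\|_2^2 = \mathbb{E}_{t,\mathbf{x}_0}\bigl(\bar\alpha_t\sum_{j}\mathbf{x}_0^2(j) + (1-\bar\alpha_t)m\bigr)$, and then integrates the uniform coordinates to get $m/3$ exactly as you do. Your expansion of the inner product plus independence is just a more elementary derivation of the same intermediate identity, so the two arguments coincide in substance.
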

Indeed, the conditions and conclusions of Lemma~\ref{lemma:xt_norm} hold universally in commonly used datasets, such as CIFAR10, CelebA, and ImageNet, which can be empirically valid in Fig.~\ref{fig:stat} (a). In this way, in  standard UNet whose scaling coefficients  satisfy  $\kappa_i = 1$,  the oscillation range of hidden feature $\|h_i\|_2^2$ becomes a very large bound $\mathcal{O}(N m)$, where $N$ denotes  the number of LSCs.   Accordingly, UNet suffers from the unstable forward process caused by the possible  big oscillation of the hidden feature $h_i$ which accords with the observations in Fig.~\ref{fig:stable}.  To relieve the instability issue, one possible solution is to scale the coefficients $\{\kappa_i\}$ of LSCs, which could  reduce  the oscillation range of hidden feature $\|h_i\|_2^2$ and yield more stable hidden features $h_i$ to benefit the subsequent layers.  This is also one reason why $1/\sqrt{2}$-scaling technique \cite{Song2020ScoreBasedGM,Saharia2021ImageSV,Saharia2022PhotorealisticTD,Ho2020DenoisingDP,Karras2018ASG} can help stabilize the UNet.  However, it is also important to note that these coefficients cannot be too small, as they could degenerate a network  into a feedforward  network and negatively affect the  network representation learning ability.

\subsection{Stability of Backward Propagation}
	\label{sec:gradient}

	As mentioned in Section \ref{sec:forward}, the reason behind the instability of UNet forward process in DMs  is   the parameter    oscillation. Here we go a step further to study why these UNet parameters oscillate. Since parameters are updated by the gradient given in the backward propagation process, we analyze the gradient of UNet in DMs.   A stable gradient with proper  magnitude  can update  model parameters smoothly and also effectively,  and yields more stable training. On the contrary, if  gradients   oscillate largely as shown in Fig.~\ref{fig:stable}, they greatly affect the parameter learning and lead to the unstable hidden features which further increases the learning difficulty of subsequent layers.  In the following,  we analyze  the  gradients of  UNet, and investigate the effects of  LSC  coefficient scaling on gradients.

	For brivity, we use $ {\mathbf{W}}$ to denote all the learnable matrices of the $i$-th block (see  Eq.~(\ref{eq:unet})), i.e.,  $ {\mathbf{W}} = [\mathbf{W}_l^{a_1},\mathbf{W}_{l-1}^{a_1},...,\mathbf{W}_1^{a_1},\mathbf{W}_l^{a_2}...,\mathbf{W}_2^{b_1},\mathbf{W}_1^{b_1}]$. Assume the training loss is $\frac{1}{n}\sum_{s=1}^n\ell_s( {\mathbf{W}})$, where  $\ell_s({\mathbf{W}})$ denotes the training loss of the $s$-th sample among the $n$ training samples.  Next, we  analyze the gradient $\nabla  \ell( {\mathbf{W}})$ of each training sample, and summarize the main results in Theorem \ref{theo:gradient}, in which we omit the subscript $s$ of $\nabla  \ell_s(\mathbf{W})$  for brevity. 		See the proof in Appendix. 

	\begin{theorem}
		\label{theo:gradient}
		Assume  that   all learnable matrices of  UNet in Eq.~\eqref{eq:unet} are independently initialized as Kaiming's initialization, i.e., $\mathcal{N}(0,\frac{2}{m})$.  Then for any $\rho\in(0,1]$, with probability at least $1-\mathcal{O}(nN) \exp[-\Omega(m)]$, for a sample $\mathbf{x}_t$ in training set, we have 
		\begin{equation}
			\|\nabla_{\mathbf{W}_p^{q}} \ell_s( {\mathbf{W}})\|_2^2 \lesssim \mathcal{O}\left( \ell( {\mathbf{W}}) \cdot \|\mathbf{x}_t\|_2^2 \cdot \sum\nolimits_{j=i}^N\kappa_j^2 + c_3 \right) ,\quad  (p\in \{1,2,...,l\}, q \in \{a_i,b_i\}),
			\label{eq:grad}
		\end{equation}
		where $\mathbf{x}_t = \sqrt{\bar{\alpha}_t}\mathbf{x}_0 + \sqrt{1 - \bar{\alpha}_t}\epsilon_t$ denotes the noisy sample of the $s$-th sample,   $\epsilon_t \sim \mathcal{N}(\mathbf{0},\mathbf{I})$,  $N$ is the   number of LSCs,  $c_3$ is a small constant. 
	\end{theorem}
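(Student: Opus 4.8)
The plan is to reduce the gradient bound to the forward-propagation control already established in Theorem~\ref{theo:forward}, since the loss $\ell(\mathbf{W}) = \|\epsilon_t - \hat\epsilon_\theta(\mathbf{x}_t,t)\|_2^2$ differentiated with respect to a weight matrix $\mathbf{W}_p^q$ deep inside block $q$ factors as a product of three pieces: the output-error vector $2(\hat\epsilon_\theta(\mathbf{x}_t,t)-\epsilon_t)$, whose squared norm is exactly $\ell(\mathbf{W})$; a Jacobian-type chain of matrices running from the output back to block $q$ (through the $b$-blocks, the LSC skip additions, and the ReLU diagonal masks); and the hidden activation feeding into $\mathbf{W}_p^q$, namely something of the form $\phi(\mathbf{W}_{p-1}^q\cdots)$ whose norm is comparable to $\|h_i\|_2$ or a sub-feature thereof. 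So the first step is to write $\nabla_{\mathbf{W}_p^q}\ell(\mathbf{W})$ explicitly via the chain rule as (error vector) $\times$ (backward product of layer operators) $\times$ (forward activation)$^\top$, and then bound the squared Frobenius/operator norm by the product of the squared norms of the three factors.

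Second, I would control each factor. The forward activation factor is handled directly by Theorem~\ref{theo:forward} (and the sub-block version of the same computation): its squared norm is $\lesssim (1+\rho)^2\big(c_1\|\mathbf{x}_t\|_2^2\sum_{j=i}^N\kappa_j^2 + c_2\big)$ with the stated high probability. The backward operator-product factor is where the concentration of Kaiming-initialized Gaussian matrices enters: each $\mathbf{W}\in\mathbb{R}^{m\times m}$ with i.i.d.\ $\mathcal{N}(0,2/m)$ entries has operator norm close to $2\sqrt{2}$ with probability $1-\exp[-\Omega(m)]$, and composed with ReLU masks (which only decrease norms) the backward chain through the $i$-th block contributes an $\mathcal{O}(1)$-per-layer factor; the LSC structure means the backward pass splits at each skip into a "through $f_{i+1}$" branch and a "$\kappa_{i+1}a_{i+1}$" branch, so the relevant scaling weight $\kappa_j$ for $j\ge i$ reappears exactly as in the forward bound, giving the $\sum_{j=i}^N\kappa_j^2$ term. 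The error-vector factor contributes the $\ell(\mathbf{W})$ multiplier. Multiplying the three and absorbing the remaining lower-order contributions (the $c_2$ cross term and the $\mathcal{O}(1)$ activation pieces from shallow sub-blocks that are independent of $\mathbf{x}_t$) into a single small constant $c_3$ yields Eq.~\eqref{eq:grad}. A union bound over the $n$ training samples and the $\mathcal{O}(N)$ layers/blocks produces the claimed failure probability $1-\mathcal{O}(nN)\exp[-\Omega(m)]$.

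The main obstacle, I expect, is handling the backward operator-product factor rigorously: unlike the forward norm (which Theorem~\ref{theo:forward} already packages), the backward chain is a product of many random matrices that are \emph{not} independent of the activations appearing along the path — the ReLU masks depend on the same weights — so one cannot naively multiply independent operator-norm bounds. The standard fix is to note that ReLU diagonal masks are $\{0,1\}$-valued and hence $1$-Lipschitz contractions regardless of their (data-dependent) pattern, so one can bound the chain's operator norm by the product of the \emph{unconditioned} operator norms of the raw Gaussian matrices, each of which concentrates around $2\sqrt2$ with exponentially small failure probability; care is needed that the number of matrices in the chain is $\mathcal{O}(lN)$ so the product of their norms is a constant raised to that power — this is exactly why the bound is stated up to $\mathcal{O}(\cdot)$ and why the per-layer factor must be tracked as a model-related constant rather than left as $2\sqrt2$. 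A secondary subtlety is making sure the skip-branch bookkeeping produces precisely $\sum_{j=i}^N\kappa_j^2$ and not, say, $\big(\sum_{j=i}^N\kappa_j\big)^2$; this follows because the branches at distinct skips enter additively and their cross terms vanish in expectation / are absorbed, mirroring the forward computation in Theorem~\ref{theo:forward}. I would lean on the appendix proof of Theorem~\ref{theo:forward} for these combinatorial details rather than re-deriving them.
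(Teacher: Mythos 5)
Your overall strategy --- chain-rule factorization of $\nabla_{\mathbf{W}_p^q}\ell_s$ into (error vector) $\times$ (backward operator product) $\times$ (forward activation), concentration for the Gaussian factors, and a union bound over $n$ samples and $\mathcal{O}(N)$ blocks --- is the same skeleton the paper uses. But there is a concrete accounting error in how you distribute the $\kappa$'s: you claim the forward-activation factor contributes $\|\mathbf{x}_t\|_2^2\sum_{j=i}^N\kappa_j^2$ (via Theorem~\ref{theo:forward}) \emph{and} that the backward operator product contributes another $\sum_{j=i}^N\kappa_j^2$ from the skip-branch splitting. Multiplying your three factors as written gives $\ell(\mathbf{W})\cdot\|\mathbf{x}_t\|_2^2\cdot\big(\sum_{j=i}^N\kappa_j^2\big)^2$, which is off by a factor of $\sum_{j=i}^N\kappa_j^2$ from the claimed bound. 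In fact exactly one of the two factors carries the $\kappa$'s, and which one depends on whether $q=a_i$ or $q=b_i$: the activation feeding $\mathbf{W}_p^{a_i}$ is the pure encoder chain $a_{i-1}\circ\cdots\circ a_1\circ\mathbf{x}_t$, whose norm is $\mathcal{O}(\|\mathbf{x}_t\|_2)$ with no $\kappa$ dependence (Theorem~\ref{theo:forward} does not apply to it --- it bounds $h_i=f_i(\cdot)$, the decoder-side feature), while the backward Jacobian from the output to $a_i$ splits over the skips $j\ge i$; conversely, for $q=b_i$ the input activation is the skip-aggregated feature carrying $\sum_{j\ge i}\kappa_j^2$ while the backward chain through $b_1,\dots,b_{i-1}$ is a single $\mathcal{O}(1)$ path with no further splits.

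The paper sidesteps this bookkeeping hazard by first decomposing the network output into $N$ single paths, $f_0(\mathbf{x}_t)\approx\sum_{j=1}^N\kappa_j\, g^j$ with $g^j=b_1\circ\cdots\circ b_j\circ a_j\circ\cdots\circ a_1\circ\mathbf{x}_t$, and applying the chain rule \emph{per path}: within one path the forward activation is $\mathcal{O}(\|\mathbf{x}_t\|_2^2)$, the backward operator product is $\mathcal{O}(1)$ (this is imported as a lemma from prior work rather than re-derived from operator-norm concentration, addressing the dependence issue you flag), and the single factor $\kappa_j^2$ enters through $\|\partial\ell_s/\partial g^j\|_2^2=\mathcal{O}(\kappa_j^2\,\ell_s(\mathbf{W}))$ because the output weights $g^j$ by $\kappa_j$; summing over $j\ge i$ then produces $\sum_{j\ge i}\kappa_j^2$ exactly once. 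Your secondary concern about $\sum\kappa_j^2$ versus $(\sum\kappa_j)^2$ is legitimate, and the path decomposition is also what localizes that issue to a single triangle-type inequality over the $N-i+1$ paths. To repair your proposal, either adopt the path decomposition, or treat the $q=a_i$ and $q=b_i$ cases separately and make explicit that the $\kappa$-dependent factor appears in only one of the two positions in each case.
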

	
 Theorem \ref{theo:gradient} reveals that for any training sample in the training dataset, the gradient of any trainable parameter  $\mathbf{W}_p^{q}$ in UNet can be bounded by $\mathcal{O}(  \ell( {\mathbf{W}}) \cdot \|\mathbf{x}_t\|_2^2 \sum\nolimits_{j=i}^N\kappa_j^2 )$ (up to the constant term).  Since Lemma~\ref{lemma:xt_norm} shows   $ \|\mathbf{x}_t\|_2^2$ is at the order of $\mathcal{O}( m)$ in expectation, the bound of gradient norm becomes $\mathcal{O}(m  \ell_s( {\mathbf{W}}) \sum\nolimits_{j=i}^N\kappa_j^2 )$.  Consider the feature dimension $m$ is often hundreds and the loss  $\ell( {\mathbf{W}})$ would be large at the beginning of the training phase,  if the number $N$ of LSCs is relatively large, the gradient bound in standard UNet ($\kappa_i = 1 \ \forall i$) would become  $\mathcal{O}(m N \ell_s( {\mathbf{W}}))$ and is large. This means that   UNet  has the risk of instability gradient during training which is actually  observed in Fig.~\ref{fig:stable}. 
	To address this issue, similar to the analysis of Theorem \ref{theo:gradient},  we can  appropriately scale the coefficients $\{\kappa_i\}$ to ensure that  the model has enough representational power while preventing too big a gradient and avoid unstable UNet parameter updates. This also explains why   the $1/\sqrt{2}$-scaling technique can  improve the stability  of UNet in practice.

	\subsection{Robustness On Noisy Input}
	\label{sec:robust}

In Section \ref{sec:forward} and \ref{sec:gradient}, we have  revealed the reasons of instability of UNet and also the benign effects of  scaling coefficients of LSCs in UNet  during DM training. Here we will further discuss the stability of UNet in terms of the robustness on  noisy input. A  network is said to be robust if its output does not change much when adding a small noise perturbation to its input. This robustness is important for  stable training of UNet in DMs.  As shown in Section \ref{sec:preli},  DMs aim at using UNet to predict the noise $\epsilon_t\sim \mathcal{N}(0,\mathbf{I})$ in the noisy sample for denoising at each time step $t$.   However, in practice,  additional unknown noise caused by random minibatch, data collection, and preprocessing strategies  is inevitably  introduced into noise $\epsilon_t$ during the training process,  and greatly affects the training. It is because   if a network is sensitive to noisy input, its output would be distant from the desired output, and yields oscillatory loss which leads to oscillatory gradient and   parameters.  In contrast, a robust network would have stable output, loss and gradient which boosts the parameter learning speed.

	Here we use a practical  example to show the importance of  the robustness of UNet in DMs. 	According to Eq.~(\ref{eq:loss_main}),  DM aims to predict  noise $\epsilon_t$ from the UNet input  $\mathbf{x}_t \sim \mathcal{N}(\mathbf{x}_t; \sqrt{\bar{\alpha}_t}\mathbf{x}_0,(1-\bar{\alpha}_t)\mathbf{I})$.   Assume  an extra Gaussian noise $\epsilon_\delta \sim\mathcal{N}(0, \sigma_{\delta}^2\mathbf{I})$ is injected into  $\mathbf{x}_t$   which yields  a new input $\mathbf{x}_t' \sim \mathcal{N}(\mathbf{x}_t; \sqrt{\bar{\alpha}_t}\mathbf{x}_0,[(1-\bar{\alpha}_t) +\sigma_{\delta}^2] \mathbf{I})$.  
Accordingly, if the variance $\sigma_{\delta}^2$ of extra noise is large,  it  can dominate the noise $\epsilon_t$  in $\mathbf{x}_t'$, and thus hinders  predicting the desired noise $\epsilon_t$. In practice, the variance of this extra noise could vary along training  iterations, further exacerbating the instability of UNet training.  Indeed, Fig.~\ref{fig:stat} (c) shows that the extra noise $\epsilon_\delta$ with different $\sigma_{\delta}^2$ can significantly affect the performance of standard UNet for DMs, and the scaled LSCs can alleviate the impact of extra noise to some extent. 
 
 We then present the following theorem to quantitatively analyze these observations.

	\begin{theorem}
		\label{theo:robust}
		For   UNet in Eq.~(\ref{eq:unet}), assume $M_0 = \max \{ \| b_i\circ a_i\|_2 , 1\leq i\leq N\}$ and $f_N$ is $L_0$-Lipschitz continuous. $c_0$ is a constant related to $M_0$ and $L_0$. Suppose  $\mathbf{x}_t^{\epsilon_\delta}$ is an perturbated input of the vanilla input $\mathbf{x}_t$ with a small perturbation $ \epsilon_\delta =\|\mathbf{x}_t^{\epsilon_\delta} - \mathbf{x}_t\|_2 $.  Then we have 
		\begin{equation}
			\|\mathbf{UNet}(\mathbf{x}_t^{\epsilon_\delta}) - \mathbf{UNet}(\mathbf{x}_t)\|_2 \leq \epsilon_\delta \left[\sum\nolimits_{i=1}^N\kappa_i M_0^i +c_0\right],
			\label{eq:theo2}
   \vspace{-0.1cm}
		\end{equation}
		where $\mathbf{x}_t = \sqrt{\bar{\alpha}_t}\mathbf{x}_0 + \sqrt{1 - \bar{\alpha}_t}\epsilon_t$,   $\epsilon_t \sim \mathcal{N}(\mathbf{0},\mathbf{I})$,  $N$ is the number of the long skip connections. 
	\end{theorem}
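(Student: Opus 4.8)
Unlike Theorems~\ref{theo:forward} and~\ref{theo:gradient}, this is a purely deterministic statement about a fixed UNet with given $M_0,L_0$, so no concentration argument is needed; the plan is to unroll the recursion in Eq.~\eqref{eq:unet} and track how an input perturbation is amplified by the Lipschitz constants of the constituent maps. The two ingredients are: (i) each atomic block $x\mapsto a_i\circ x$ and $x\mapsto b_i\circ x$ is a bias-free stacked ReLU network, hence Lipschitz with constant at most the product of the spectral norms of its weight matrices (since the ReLU $\phi$ is $1$-Lipschitz), and by the definition of $M_0$ the product of the two constants for $a_i$ and $b_i$ is bounded by $\|b_i\circ a_i\|_2\le M_0$; (ii) the nested identity Eq.~\eqref{eq:unet}, which expresses $f_i$ through $f_{i+1}$, together with the $L_0$-Lipschitz continuity of the middle block $f_N$.

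The first step is to rewrite the forward pass along a single ``downsampling'' path. Set $v_0=x$ and $v_i=a_i\circ v_{i-1}=a_i\circ\cdots\circ a_1\circ x$; a trivial induction shows that inside the computation of $\mathbf{UNet}(x)$ the subroutine $f_i$ is always evaluated at $v_i$, so that $f_i(v_i)=b_{i+1}\circ\bigl(\kappa_{i+1}v_{i+1}+f_{i+1}(v_{i+1})\bigr)$ for $0\le i\le N-1$, with $\mathbf{UNet}(x)=f_0(v_0)$ and $f_N(v_N)$ the middle block. Let $v_i'$ be the sequence generated by the perturbed input $\mathbf{x}_t^{\epsilon_\delta}$, and write $D_i=\|v_i-v_i'\|_2$ and $E_i=\|f_i(v_i)-f_i(v_i')\|_2$. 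Applying the triangle inequality and the Lipschitz bounds from ingredient (i) to the identity above yields the coupled recursions $D_i\le\mathrm{Lip}(a_i)\,D_{i-1}$ and $E_i\le\mathrm{Lip}(b_{i+1})\bigl(\kappa_{i+1}D_{i+1}+E_{i+1}\bigr)$ for $i\le N-1$, with base case $E_N\le L_0 D_N$ and $D_0=\epsilon_\delta$.

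Then I would solve these recursions explicitly. The $D$-recursion gives $D_i\le\bigl(\prod_{j=1}^i\mathrm{Lip}(a_j)\bigr)\epsilon_\delta$. Unrolling the $E$-recursion from $i=0$ down to the middle block, the term carrying $\kappa_i D_i$ picks up exactly the prefactor $\prod_{j=1}^i\mathrm{Lip}(b_j)$, so that $E_0\le\sum_{i=1}^N\bigl(\prod_{j=1}^i\mathrm{Lip}(b_j)\bigr)\kappa_i D_i+\bigl(\prod_{j=1}^N\mathrm{Lip}(b_j)\bigr)E_N$. Substituting the bound for $D_i$ and for $E_N$, and collapsing the products via $\mathrm{Lip}(b_j)\mathrm{Lip}(a_j)\le\|b_j\circ a_j\|_2\le M_0$, gives $\|\mathbf{UNet}(\mathbf{x}_t^{\epsilon_\delta})-\mathbf{UNet}(\mathbf{x}_t)\|_2=E_0\le\epsilon_\delta\bigl(\sum_{i=1}^N\kappa_i M_0^{\,i}+L_0 M_0^{\,N}\bigr)$, which is the claimed bound with $c_0=L_0 M_0^{\,N}$ (or the slightly tighter $c_0=L_0\prod_{j=1}^N\|b_j\circ a_j\|_2$), a constant depending only on $M_0$ and $L_0$.

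The difficulty is not any single deep step but the bookkeeping: (a) because each $b_i$ is nonlinear one cannot split $b_{i+1}\circ(\kappa_{i+1}v_{i+1}+f_{i+1}(v_{i+1}))$ into separate additive contributions, so the two summands must be kept together inside the norm until after Lipschitz continuity is applied; (b) one must check that the geometric accumulation is exact, i.e.\ that the $i$-th long skip term accrues precisely $i$ Lipschitz factors from the $a_j$'s and $i$ from the $b_j$'s, hence contributes $\kappa_i M_0^{\,i}$ rather than $\kappa_i M_0^{\,i\pm1}$; and (c) one must fix the reading of $\|b_i\circ a_i\|_2$ as a product of spectral norms (legitimate since the $a_i,b_i$ are bias-free stacked ReLU nets) so that $\mathrm{Lip}(a_i)\mathrm{Lip}(b_i)\le\|b_i\circ a_i\|_2\le M_0$. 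Everything else reduces to routine triangle-inequality and submultiplicativity estimates.
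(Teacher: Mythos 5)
Your proposal is correct and reaches the same bound with the same constant $c_0=L_0M_0^N$, but it takes a somewhat different technical route from the paper. The paper's proof Taylor-expands $f_1$ at $a_1\circ\mathbf{x}_t$, writes the output difference as $b_1\circ a_1\circ\bigl[\kappa_1\Delta x+\Delta x^{T}\nabla_z f_1(z)\bigr]$, bounds it by $\epsilon_\delta\|b_1\circ a_1\|_2\bigl(\kappa_1+\|\nabla_z f_1\|_2\bigr)$, and then recursively bounds the Jacobian norms via $\|\nabla_z f_i\|_2\le\|b_{i+1}\circ a_{i+1}\|_2\bigl(\kappa_{i+1}+\|\nabla_u f_{i+1}\|_2\bigr)$ down to the middle block, where $L_0$ enters. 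Your argument replaces this first-order/Jacobian recursion with an exact finite-difference Lipschitz recursion on $D_i=\|v_i-v_i'\|_2$ and $E_i=\|f_i(v_i)-f_i(v_i')\|_2$; the combinatorial unrolling is identical (each level of nesting accrues one factor of $\|b_j\circ a_j\|_2$, giving $\kappa_iM_0^i$ for the $i$-th skip and $L_0M_0^N$ for the middle block), but you avoid the Taylor remainder and the paper's implicit linearization when it commutes $a_1$ past the bracket in its Eq.~(\ref{eq:diffu}). The price you pay is the need for $\mathrm{Lip}(a_i)\,\mathrm{Lip}(b_i)\le\|b_i\circ a_i\|_2$, which fails under the reading of $\|b_i\circ a_i\|_2$ as the Lipschitz constant of the \emph{composition}; you correctly flag this and fix it by defining $\|b_i\circ a_i\|_2$ as the product of the spectral norms of the constituent weight matrices, which is no less principled than the paper's own (ambiguous) use of an ``operator norm'' of a nonlinear block. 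Net effect: your version is more elementary and arguably more rigorous, at the cost of a slightly looser per-block constant; both yield exactly the claimed inequality.
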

	 Theorem \ref{theo:robust} shows that for a perturbation magnitude $\epsilon_\delta$, the robustness error bound of UNet in DMs is   $\mathcal{O}(\epsilon_\delta (\sum_{i=1}^N\kappa_i M_0^i))$. For  standard UNet ($\kappa_i=1 \forall i$), this bound  becomes a very large bound $\mathcal{O}(N M_0^N)$. This implies that standard UNet is sensitive to extra noise in the input, especially  when LSC number $N$ is large ($M_0\geq1$ in practice, see Appendix). 
 Hence,  it is necessary to control the coefficients ${\kappa_i}$ of LSCs which accords with the  observations in Fig.~\ref{fig:stat} (c). 

 Therefore, setting appropriate scaling coefficients $\{\kappa_i\}$ can not only control the oscillation of hidden features and gradients during the forward and backward processes as described in Section \ref{sec:forward} and Section \ref{sec:gradient}, but also enhance the robustness of the model to input perturbations, thereby improving better stability for DMs as discussed at the beginning of Section~\ref{sec:robust}.

	\section{Theory-inspired Scaling Methods For Long Skip Connections}
	\label{sec:method}
In Section \ref{sec:scale}, we have theoretically shown that scaling the coefficients of LSCs in UNet can help to improve the training stability of UNet on DM generation tasks  
	by analyzing the stability of  hidden feature output and gradient, and   the robustness of UNet on noisy input. 
	These theoretical results can directly be used to explain the effectiveness of $1/\sqrt{2}$-scaling technique in \cite{Song2020ScoreBasedGM,Saharia2021ImageSV,Saharia2022PhotorealisticTD,Ho2020DenoisingDP,Karras2018ASG}  which scales  all LSC coefficients $\kappa_i$  from one in UNet to a constant, and is called ${1 / \sqrt{2}}$ ($1/\sqrt{2}$-CS for short). 
	Here  we will use these theoretical results to design a novel framework ScaleLong including two more effective scaling methods for DM stable training. 
	1) constant scaling method ({\color{orange}\text{\textbf{CS}}} for short) and 2)  learnable scaling method ({\color{blue}\text{\textbf{LS}}} for short). 
	
	\subsection{Constant Scaling Method}\label{sec:CS}
	In Theorem \ref{theo:forward} $\sim$ \ref{theo:robust}, we already show that adjusting the scaling coefficients $\{\kappa_i\}$  to smaller ones can effectively decrease the oscillation of hidden features, avoids too large  gradient and also  improves the robustness of U-Net to noisy input.  Motivated by this,  we propose an effective constant scaling method (CS) that exponentially scales the  coefficients $\{\kappa_i\}$ of LSCs in UNet: 
	\begin{equation}
		{\color{orange}\text{\textbf{CS}:}} \quad \kappa_i = \kappa^{i-1}, \qquad ( i=1,2,..,N, \quad \forall \kappa \in (0, 1]). 
		\label{eq:c-sc}
	\end{equation}
	When $i=1$, the scaling coefficient of the first long skip connection  is 1, and thus  ensures that at least one $\kappa_i$ is not too small to cause network degeneration.

	 This exponential scaling in CS method is more effective than the universal scaling methods which universally scales $\kappa_i=\kappa$, e.g.~$\kappa=1/\sqrt{2}$ in the $1/\sqrt{2}$-CS method,  in terms of reducing the training oscillation of UNet. We first look at  the robustness error bound in Theorem  \ref{theo:robust}. For our CS, its bound is at the order of   
	$
	\mathcal{O}\big(M_0(\kappa M_0)^{N-1} \big)$ which is $\mathcal{O}(\kappa^{N-2})$ times smaller than  the bound  $ 
	 \mathcal{O}(\kappa M_0^N)$ of the universal scaling method.  This shows  the superiority of  our CS method.
	Moreover, from the analysis in Theorem \ref{theo:forward} ,  our CS method can effectively compress the oscillation  range  of   the  hidden feature $h_i$ of the $i$-th layer to $\mathcal{O}\big(\kappa^{2i}m\big)$. Compared with the oscillation  range  $\mathcal{O}\big(Nm\big)$ of standard UNet, CS also greatly alleviates the oscillation since $\kappa^{2i} \ll N$. A similar analysis on Theorem \ref{theo:gradient} also shows that CS can effectively control the gradient magnitude, helping   stabilize UNet training.

Now we discuss how to set the value of $\kappa$  in Eq.~\eqref{eq:c-sc}.  To this end, we  use  the widely used DDPM framework  on  the benchmarks (CIFAR10, CelebA and ImageNet) to estimate $\kappa$. 
	Firstly, Theorem \ref{theo:forward} reveals that the norm of UNet output is of order $ \mathcal{O}(\|\mathbf{x}_t\|_2^2  \sum_{j=1}^N\kappa_j^2 )$.  
	Next, from the loss function of DMs, i.e., Eq.~(\ref{eq:loss_main}), the target output is a noise $\epsilon_t \sim \mathcal{N}(0,\mathbf{I})$.  
	Therefore, we expect that $\mathcal{O}(\|\mathbf{x}_t\|_2^2 \sum_{j=1}^N\kappa_j^2 ) \approx \|\epsilon_t\|_2^2 $, and we have $\sum_{j=1}^N\kappa_j^2 \approx \mathcal{O}(\|\epsilon_t\|_2^2/\|\mathbf{x}_t\|_2^2)$. Moreover, Fig.~\ref{fig:stat} (b) shows the distribution of $\|\epsilon_t\|_2^2/\|\mathbf{x}_t\|_2^2$ for the dataset considered, which is long-tailed and more than 97\% of the values are smaller than 5. 
For this distribution, during the training, these long-tail samples result in a larger value of $\|\epsilon_t\|_2^2/\|\mathbf{x}_t\|_2^2$. They may cause the output order $\mathcal{O}(\|\mathbf{x}_t\|_2^2 \sum_{j=1}^N\kappa_j^2)$ to deviate more from the $\|\epsilon_t\|_2^2$, leading to unstable loss and impacting training. Therefore, it is advisable to appropriately increase $\sum_{j=1}^N\kappa_j^2$ to address the influence of long-tail samples. \cite{Wu2021AdversarialRU,Chen2022ImbalanceFD,He2021ARS,Wang2020LongtailedRB,Alshammari2022LongTR}. So the value of $\sum_{j=1}^N\kappa_j^2$ should be between the mean values of $\|\epsilon_t\|_2^2/\|\mathbf{x}_t\|_2^2$, i.e., about 1.22, and a rough upper bound of 5. Combining Eq.~(\ref{eq:c-sc}) and the settings of $N$ in UViT and UNet, we can estimate $\kappa \in [0.5, 0.95]$. In practice, the hyperparameter $\kappa$ can be adjusted   around this range.

		\subsection{Learnable Scaling Method}
	\begin{wrapfigure}{r}{0.35\textwidth}
		\vspace{-0.5cm}
		\includegraphics[width=0.35\textwidth]{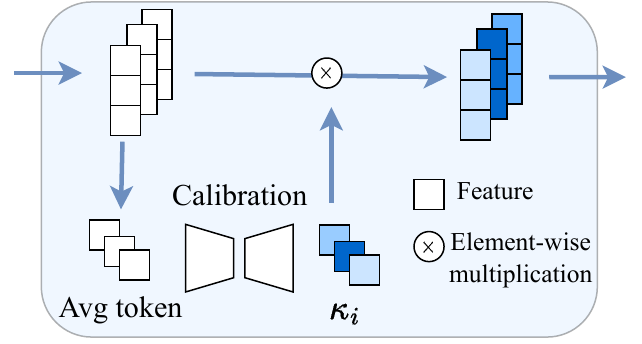}
		\vspace{-0.6cm}
		\caption{The diagram of LS. }
		\label{fig:arch}
		\vspace{-0.5cm}
	\end{wrapfigure}  
	In Section~\ref{sec:CS},   an effective constant scaling (CS) method   is derived from theory,  and can improve the training stability of UNet in practice as shown in Section~\ref{sec:exp}.  Here, we provide an alternative  solution, namely, the learnable scaling method (LS), which uses a tiny network to predict the scaling coefficients for each long skip connection. Accordingly,  LS is more flexible and adaptive than the constant scaling method, since it can learn the coefficients according to the training data, and  can also adjust the coefficients along with training iterations which could benefit the training in terms of stability and  convergence speed, which will be empirically demonstrated in Section \ref{sec:exp}.
	
	As shown in Fig.~\ref{fig:arch}, LS  designs  a calibration network usually shared by all long skip connections   to predict scaling coefficients $\{\kappa_i\}$. Specifically,  for the $i$-th long skip connection, its input feature is $x_i \in \mathbb{R}^{B\times N \times D}$,  where $B$ denotes the batch size. For convolution-based UNet, $N$  and $D$ respectively denote the channel number and spatial dimension ($H\times W$) of the feature map; for transformer-based UViT \cite{bao2022all}, 	$N$  and $D$ respectively represent the  token number and  token dimension. See more discussions on input features of  other network architectures in the Appendix.  Accordingly, LS feeds $x_i \in \mathbb{R}^{B\times N \times D}$ into the tiny calibration network $\zeta_\phi$ parameterized by $\phi$ for prediction: 
	\begin{equation}
		{\color{blue}\text{\textbf{LS}:}} \quad \kappa_i = \sigma(\zeta_\phi[\text{GAP}(x_i)]) \in \mathbb{R}^{B\times N\times 1}, 1\leq i \leq N,
	\end{equation}
	where  $\text{GAP}$ denotes the global average pooling  that averages $x_i$ along the last dimension to obtain a feature map of size $B\times N\times 1$;  $\sigma$ is a sigmoid activation function. After learning $\{\kappa_i\}$, LS uses $\kappa_i$ to scale the input $x_i$ via element-wise multiplication.  In practice, network $\zeta_\phi$ is very small, and has only about 0.01M parameters in our all experiments which  brings an ignorable cost compared with the cost of UNet but greatly improves the training stability and generation performance of UNet.  In fact, since the parameter count of $\zeta_\phi$ itself is not substantial, to make LS more versatile for different network architecture, we can individually set a scaling module into each long skip connection. This configuration typically does not lead to performance degradation and can even result in improved performance, while maintaining the same inference speed as the original LS.

  \vspace{-0.8em}
\section{Experiment}\label{exp}
  \vspace{-0.5em}

In this section, we evaluate our methods by using UNet \cite{nichol2021improved,ho2020denoising} and also UViT \cite{bao2022all,Bao2023OneTF,You2023DiffusionMA} under the unconditional \cite{ho2020denoising,nichol2021improved,austin2021structured} , class-conditional \cite{bao2022all,Ho2022ClassifierFreeDG} and text-to-image \cite{zhong2023adapter,Ruiz2022DreamBoothFT,Kumari2022MultiConceptCO,Gu2021VectorQD,Xu2022Dream3DZT} settings  on several commonly used datasets, including CIFAR10 \cite{Krizhevsky2009LearningML}, CelebA \cite{liu2015faceattributes}, ImageNet \cite{Russakovsky2014ImageNetLS}, and MS-COCO \cite{Lin2014MicrosoftCC}. We follow the settings of \cite{bao2022all} and defer the  specific implementation details  to the Appendix.
\label{sec:exp}
\begin{figure}[t]
  \centering
 \includegraphics[width=0.90\linewidth]{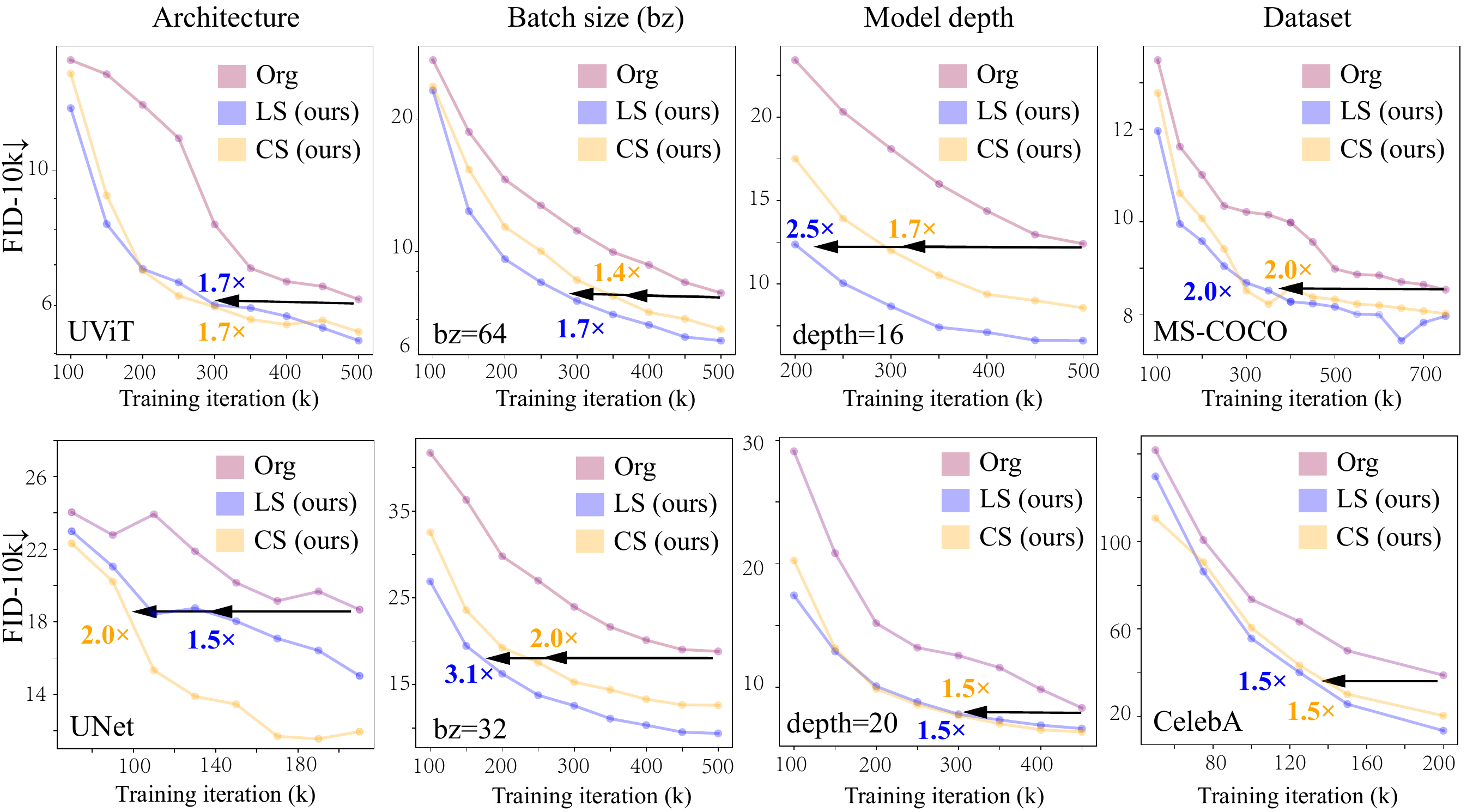}
  \caption{Training curve comparison under different training settings in which all experiments here adopt 12-layered UViT and a batch size  128 on CIFAR10 unless specified in the figures. }
  \label{fig:speed}
\end{figure}
\label{sec:exp}

\begin{table}[htbp]
  \centering
  \caption{Synthesis performance comparison under  different diffusion model settings.}
  \resizebox*{1.0\linewidth}{!}{
    \begin{tabular}{lcccrlccc}
\cmidrule{1-4}\cmidrule{6-9}    \rowcolor[rgb]{ .949,  .949,  .949} \textbf{Model (CIFAR10)} & \textbf{Type} & \textbf{\#Param} & \textbf{FID↓} & \cellcolor[rgb]{ 1,  1,  1} & \textbf{Model (ImageNet 64)} & \textbf{Type} & \textbf{\#Param} & \textbf{FID↓} \\
 \cmidrule{1-4}\cmidrule{6-9}    
    EDM \cite{Karras2022ElucidatingTD}   & Uncondition & 56M   & 1.97  &       & Glow \cite{Kingma2018GlowGF}  & -     & -     & 3.81  \\
    IDDPM \cite{nichol2021improved} & Uncondition & 53M   & 2.90  &       & IDDPM (small) \cite{nichol2021improved} & Class-condition & 100M  & 6.92  \\
    DDPM++ cont. \cite{Song2020ScoreBasedGM} & Uncondition & 62M   & 2.55  &       & IDDPM (large) \cite{nichol2021improved} & Class-condition & 270M  & 2.92  \\
    GenViT \cite{yang2022your} & Uncondition & 11M   & 20.20  &       & ADM \cite{dhariwal2021diffusion}   & Class-condition & 296M  & 2.07  \\
    UNet \cite{ho2020denoising}  & Uncondition & 36M   & 3.17  &       & EDM \cite{Karras2022ElucidatingTD}   & Class-condition & 296M  & 1.36  \\
    UViT-S/2 \cite{bao2022all} & Uncondition & 44M   & 3.11  &       & UViT-M/4 \cite{bao2022all} & Class-condition & 131M  & 5.85  \\
    UNet+$1/\sqrt{2}$-CS & Uncondition & 36M   & 3.14  &       & UViT-L/4 \cite{bao2022all} & Class-condition & 287M  & 4.26  \\
    UViT-S/2-$1/\sqrt{2}$-CS & Uncondition & 44M   & 3.11  &       & UViT-M/4-$1/\sqrt{2}$-CS & Class-condition & 131M  & 5.80  \\
\cmidrule{1-4}    UNet+CS (ours) & Uncondition & 36M   & 3.05  &       & UViT-L/4-$1/\sqrt{2}$-CS & Class-condition & 287M  & 4.20  \\
    UNet+LS (ours) & Uncondition & 36M   & 3.07  &       & UNet \cite{nichol2021improved}  & Class-condition & 144M     & 3.77  \\
    UViT-S/2+CS (ours) & Uncondition & 44M   & 2.98  &       & UNet-$1/\sqrt{2}$-CS & Class-condition & 144M     & 3.66  \\
\cmidrule{6-9}    UViT-S/2+LS (ours) & Uncondition & 44M   & 3.01  &       & UNet+CS (ours) & Class-condition & 144M     & 3.48 \\
\cmidrule{1-4}          &       &       &       &       & UNet+LS (ours) & Class-condition & 144M     & 3.39  \\
\cmidrule{1-4}    \rowcolor[rgb]{ .949,  .949,  .949} \textbf{Model (MS-COCO)} &  \textbf{Type}     & \textbf{\#Param} & \textbf{FID↓} & \cellcolor[rgb]{ 1,  1,  1} & \cellcolor[rgb]{ 1,  1,  1}UViT-M/4+CS (ours) & \cellcolor[rgb]{ 1,  1,  1}Class-condition & \cellcolor[rgb]{ 1,  1,  1}131M & \cellcolor[rgb]{ 1,  1,  1}5.68  \\
\cmidrule{1-4}    VQ-Diffusion \cite{Gu2021VectorQD} & Text-to-Image & 370M  & 19.75  &       & UViT-M/4+LS (ours) & Class-condition & 131M  & 5.75  \\
    Friro \cite{Fan2022FridoFP} & Text-to-Image & 766M  & 8.97  &       & UViT-L/4+CS (ours) & Class-condition & 287M  & 3.83  \\
    UNet \cite{bao2022all}  & Text-to-Image & 260M  & 7.32  &       & UViT-L/4+LS (ours) & Class-condition & 287M  & 4.08 \\
\cmidrule{6-9}    UNet+$1/\sqrt{2}$-CS & Text-to-Image & 260M  & 7.19  &       &       &       &       &  \\
\cmidrule{6-9}    UViT-S/2+$1/\sqrt{2}$-CS & Text-to-Image & 252M  & 5.88  &       & \cellcolor[rgb]{ .949,  .949,  .949}\textbf{Model (CelebA)} & \cellcolor[rgb]{ .949,  .949,  .949}\textbf{Type} & \cellcolor[rgb]{ .949,  .949,  .949}\textbf{\#Param} & \cellcolor[rgb]{ .949,  .949,  .949}\textbf{FID↓} \\
\cmidrule{6-9}    UViT-S/2 \cite{bao2022all} & Text-to-Image & 252M  & 5.95  &       & DDIM \cite{song2020denoising}  & Uncondition & 79M   & 3.26  \\
    UViT-S/2 (Deep) \cite{bao2022all} & Text-to-Image & 265M  & 5.48  &       & Soft Truncation \cite{Kim2021SoftTA} & Uncondition & 62M   & 1.90  \\
\cmidrule{1-4}    UNet+CS (ours) & Text-to-Image & 260M  & 7.04  &       & UViT-S/2 \cite{bao2022all} & Uncondition & 44M   & 2.87  \\
    UNet+LS (ours) & Text-to-Image & 260M  & 6.89  &       & UViT-S/2-$1/\sqrt{2}$-CS & Uncondition & 44M   & 3.15  \\
\cmidrule{6-9}    UViT-S/2+CS (ours) & Text-to-Image & 252M  & 5.77  &       & UViT-S/2+CS (ours) & Uncondition & 44M   & 2.78  \\
    UViT-S/2+LS (ours) & Text-to-Image & 252M  & 5.60  &       & UViT-S/2+LS (ours) & Uncondition & 44M   & 2.73  \\
\cmidrule{1-4}\cmidrule{6-9}    \end{tabular}%
  \label{tab:mainresult}%
  }
  \vspace{-1.7em}
\end{table}%

\textbf{Training Stability.}  Fig.~\ref{fig:stable} shows that our   CS and LS methods can significantly alleviate the oscillation in  UNet training  under different DM settings,   which is consistent with the analysis for controlling hidden features and gradients in Section \ref{sec:scale}. Though the   $1/\sqrt{2}$-CS method can stabilize training to some extent, there are   relatively large oscillations yet during training, particularly for deep networks. Additionally, Fig.~\ref{fig:stat} (c) shows that CS and LS can resist extra noise interference to some extent, further demonstrating the effectiveness of our proposed methods in improving training stability.

\textbf{Convergence Speed}. Fig.~\ref{fig:speed} shows the training efficiency of our  CS and LS methods under different settings which is a direct result of stabilizing effects of our  methods. Specifically, during training, for most cases,  they consistently  accelerate the learning speed  by at least  1.5$\times$ faster in terms of training steps across different datasets, batch sizes, network depths, and architectures. This greatly saves the actual training time of DMs. For example, when using a 32-sized batch, LS trained for   6.7 hours (200 k steps) achieves superior performance than standard UViT trained for   15.6 hours (500 k steps). All these results  reveal the effectiveness of the long skip connection scaling for faster training of DMs.

\textbf{Performance Comparison}. Tables \ref{tab:mainresult} show that our CS and LS  methods can consistently enhance the widely used UNet and UViT  backbones under the unconditional \cite{ho2020denoising,nichol2021improved,austin2021structured}, class-conditional \cite{bao2022all,Ho2022ClassifierFreeDG} and text-to-image \cite{zhong2023adapter,Ruiz2022DreamBoothFT,Kumari2022MultiConceptCO,Gu2021VectorQD,Xu2022Dream3DZT} settings with almost no additional computational cost.  Moreover, on both  UNet and UViT,    our CS and LS also  achieve much better performance than $1/\sqrt{2}$-CS.  All these results are consistent with the analysis in Section \ref{sec:CS}. 

\textbf{Robustness to  Batch Size and Network Depth}. UViT is often sensitive to  batch size and network depth \cite{bao2022all},  since  small batch sizes  yield more noisy stochastic gradients that further aggravate gradient instability. Additionally, Section \ref{sec:scale} shows that increasing network depth  impairs the stability of both forward and backward propagation and  its robustness to noisy inputs.  So in Fig.~\ref{fig:abla}, we evaluate our methods on  standard UViT  with different batch sizes and network depths. The results show that our CS and LS methods can greatly help UViT on CIFAR10 by showing big improvements on standard UViT and  $1/\sqrt{2}$-CS-based UViT under different training batch sizes and network depth. 

\begin{figure}[t]
  \centering
 \includegraphics[width=0.92\linewidth]{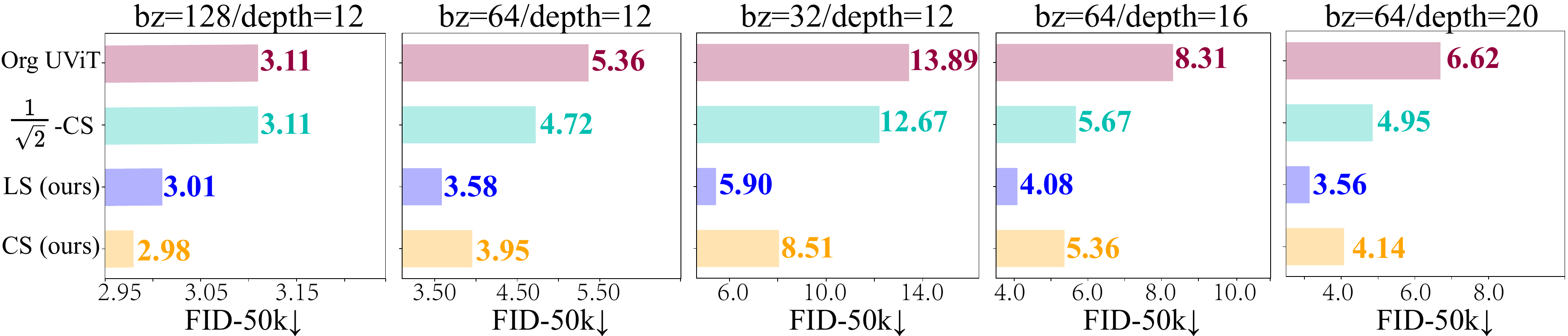}
 \vspace{-0.5em}
  \caption{Synthesis performance of proposed methods under small batch size and deep architecture. }
\label{fig:abla}
 \vspace{0.5em}
\end{figure}

\setlength{\intextsep}{0pt}
\setlength{\textfloatsep}{0pt}
\begin{wraptable}[5]{r}{0.55\textwidth}
\vspace{-0.4em}
  \centering
  \caption{Robustness of LS  to  network design}
  \vspace{-0.6em}
  \resizebox*{0.99\linewidth}{!}{
    \begin{tabular}{cc|cc|cc}
    \toprule
    \rowcolor[rgb]{ .949,  .949,  .949} \textbf{Ratio }$r$ & \textbf{FID↓} & \textbf{Activation} & \textbf{FID↓} & \textbf{Modules} & \textbf{FID↓} \\
    \midrule
    4     & 3.06  & ELU \cite{Clevert2015FastAA}   & 3.08  & Learnable $\kappa_i$ & 3.46 \\
    8     & 3.10  & ReLU \cite{Agarap2018DeepLU} (ours) & \textbf{3.01}  & IE \cite{Liang2019InstanceEB}   & 3.09 \\
    16 (ours) & \textbf{3.01}  & Mish \cite{Misra2020MishAS}  & 3.08  & SE \cite{Hu2017SqueezeandExcitationN} (ours) & \textbf{3.01} \\
    \bottomrule
    \end{tabular}%
    }
    \vspace{0.3cm}
  \label{tab:abal}%
\end{wraptable}

\textbf{Robustness of LS  to  Network Design}.   
For the calibration module in LS,  here we   investigate the robustness of LS to the module designs. 
Table \ref{tab:abal} shows that using advanced modules, e.g. IE \cite{Liang2019InstanceEB,Zhong2022SwitchableSM} and SE \cite{Hu2017SqueezeandExcitationN}, can  improve LS performance compared with simply treating $\kappa_i$ as learnable coefficients. So we use SE module \cite{Hu2017SqueezeandExcitationN,Woo2018CBAMCB,Huang2019DIANetDA} in our experiments (see Appendix). Furthermore, we observe that  LS is robust to different activation functions and compression ratios $r$ in the SE module. Hence, our LS does not require elaborated model crafting and adopts default  $r=16$ and  ReLU   throughout this work.

\setlength{\intextsep}{0pt}
\setlength{\textfloatsep}{0pt}
\begin{wraptable}[6]{r}{0.25\textwidth}
\vspace{-0.3em}
  \centering
  \caption{Other scaling}
  \vspace{-0.6em}
  \resizebox*{0.99\linewidth}{!}{
    \begin{tabular}{lc}
    \toprule
    \rowcolor[rgb]{ .949,  .949,  .949} Mehtod (CIFAR10) & FID↓ \\
    \midrule
    stable $\tau=0.1$ \cite{Zhang2019StabilizeDR} & 3.77 \\
    stable $\tau=0.3$ \cite{Zhang2019StabilizeDR} & 3.75 \\
    ReZero \cite{Bachlechner2020ReZeroIA} & 3.46 \\
    \bottomrule
    \end{tabular}%
    }
    \vspace{0.5cm}
  \label{tab:abal2}%
\end{wraptable}
\textbf{Other scaling methods}.  Our LS and CS methods  scale LSC, unlike previous scaling methods \cite{Bachlechner2020ReZeroIA,Zhang2019FixupIR,De2020BatchNB,xiong2020on,He2023DeepTW,Hayou2020StableR,Zhang2019StabilizeDR} that primarily scale block output in classification tasks. Table \ref{tab:abal2} reveals that these block-output scaling methods have worse  performance than LS (FID: 3.01) and CS (FID: 2.98) under diffusion model setting with  UViT as backbone.



  \vspace{-1.8em}
\section{Discussion}
\label{sec:disc}
  \vspace{-1.3em}

 	\begin{wrapfigure}[11]{r}{0.28\textwidth}
  \includegraphics[width=0.28\textwidth]{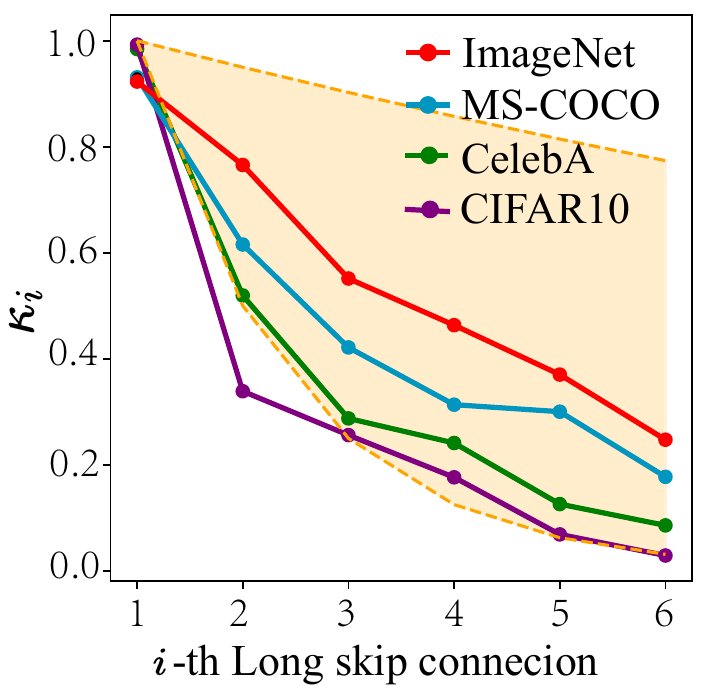}
  \vspace{-0.65cm}
\caption{The $\kappa_i$ from LS. }
  \label{fig:sevalue}
\end{wrapfigure}  
\textbf{Relationship between LS and CS}. Taking UViT on CIFAR10, MS-COCO, ImageNet and CelebA as baselines, we randomly sample 30 Gaussian noises from $\mathcal{N}(0,\mathbf{I})$ and measure the scaling coefficients, i.e., ${\kappa_i} \leq 1$, learned by LS for  each LSC. Fig.~\ref{fig:sevalue} shows  that LS and CS share similar characteristics. Firstly,  the predicted coefficients of  LS fall within the orange area which is the estimated coefficient range given by CS  in Section \ref{sec:CS}. Moreover, these learned coefficients ${\kappa_i}$ share the almost the same exponential curve with Eq.~\eqref{eq:c-sc} in CS. These shared characteristics have the potential to serve as crucial starting points for further optimization of DM network architectures in the future. Moreover, we try to preliminary analyze these shared characteristics.

First, for CS, how is the direction of applying the exponentially decayed scale determined? In fact, if we use the reverse direction, namely,  $\kappa_i = \kappa^{N-i+1} \ (\kappa<1)$, the stability bound in Theorem \ref{theo:robust} is extremely large. The main  term of stability bound in Theorem \ref{theo:robust}  can be written as 
$\mathcal{S}_{\text{r}} =\sum_{i=1}^N\kappa_i M_0^i =  \kappa^NM_0 + \kappa^{N-1}M_0^2+...+ \kappa M_0^N$, and could be very large,  since $M_0^N > 1$ is large when $N$ is large and scaling it by a factor $\kappa$ could not sufficiently control its magnitude. In contrast,  our default setting $\kappa_i=\kappa^{i-1}$ of CS  can well control the main  term in stability bound: 
$\mathcal{S} = \sum_{i=1}^N\kappa_i M_0^i =  M_0 + \kappa^{1}M_0^2+...+ \kappa ^{N-1}M_0^N$, 
where the larger  terms $M_0^{i+1}$ are weighted by smaller coefficients $\kappa^{i}$. In this way, $\mathcal{S} $ is much smaller than $\mathcal{S}_{\text{r}}$,  which shows the advantages of our default setting. Besides,  the following Table \ref{tab:rever}  also  compares the above two settings by using  UViT on Cifar10  (batch size = 64), and shows that our default setting exhibits significant advantages.

\begin{table}[!ht]
\caption{The FID-10k↓ result of different direction of scaling.}
    \centering
    \resizebox*{1.0\linewidth}{!}{
    \begin{tabular}{llllllllll}
    \hline
        Training step & 5k & 10k & 15k & 20k & 25k & 30k & 35k & 40k & 45k \\ \hline
        $\kappa_i = \kappa^{N-i+1}$ & 67.26 & 33.93 & 22.78 & 16.91 & 15.01 & 14.01 & 12.96 & 12.34 & 12.26 \\ 
        $\kappa_i=\kappa^{i-1}$ (ours) & 85.19 & 23.74 & 15.36 & 11.38 & 10.02 & 8.61 & 7.92 & 7.27 & 6.65 \\ \hline
    \end{tabular}
    }
    \label{tab:rever}
    
\end{table}

Next, for LS, there are two possible reasons  for why  LS discovers a decaying scaling curve similar to the CS. On the one hand, from a theoretical view, as discussed for the direction of scaling, for the $i$-th long skip connection $(1\leq i \leq N)$, the learnable $\kappa_i$ should be smaller to better control the magnitude of $M_0^i$ so that the stability bound, e.g. in Theorem \ref{theo:robust}, is small. This directly yields the decaying scaling strategy which is also learnt by the scaling network. On the other hand, we can also analyze this observation in a more intuitive manner. Specifically, considering the UNet architecture, the gradient that travels through the $i$-th long skip connection during the backpropagation process influences the updates of both the first $i$ blocks in the encoder and the last $i$ blocks in the UNet decoder. As a result, to ensure stable network training, it's advisable to slightly reduce the gradients on the long skip connections involving more blocks (i.e., those with larger $i$ values) to prevent any potential issues with gradient explosion.

\textbf{Limitations}.  CS does not require any additional parameters or  extra computational cost. But it can only  estimate a rough range of the scaling coefficient $\kappa$ as shown in Section \ref{sec:CS}. This means that one still needs to manually select $\kappa$ from the range.  In the future, we may be able to design better methods to assist DMs in selecting better values of $\kappa$, or to have better estimates of the range of $\kappa$.

Another effective  solution is to use our LS method which can automatically learn qualified scaling coefficient $\kappa_i$  for each LSC. But LS inevitably introduces additional parameters and computational costs. Luckily, the prediction network in LS is very tiny and has only about 0.01M parameters, working very well in all our experiments.  

Moreover, our CS and LS mainly focus on stabilizing the training of DMs, and indeed well achieve their target: greatly improving the stability of UNet and UViT whose effects is   learning speed boosting of them by   more than 1.5$\times$ as shown in Fig.~\ref{fig:speed}.  But our CS and LS do not bring a very big improvement in terms of the final FID performance in Table \ref{tab:mainresult}, although they consistently improve model performance. Nonetheless,  considering their  simplicity, versatility,  stabilizing ability to DMs, and faster learning speed,   LS and CS should be greatly valuable  in DMs.

\textbf{Conclusion}. We theoretically analyze the instability risks from widely used standard UNet for diffusion models (DMs). These risks are about the stability of forward and backward propagation, as well as the robustness of the network to extra noisy inputs. Based on these theoretical results, we propose a novel framework ScaleLong including two effective scaling methods, namely LS and CS, which can stabilize the training of DMs in different settings and lead to faster training speed and better generation performance.

\textbf{Acknowledgments}. This work was supported in part by National Key R\&D Program of China under Grant No.2021ZD0111601, National Natural Science Foundation of China (NSFC) under Grant No.61836012, 62325605, U21A20470, GuangDong Basic and Applied Basic Research Foundation under Grant No. 2023A1515011374, GuangDong Province Key Laboratory of Information Security Technology.

\medskip

{
	\small
	
	\bibliographystyle{unsrt}
	\bibliography{refs}
}


\clearpage

\appendix
The appendix is structured as follows. In Appendix \ref{app:imp}, we provide the training details and network architecture design for all the experiments conducted in this paper. Additionally, we present the specific core code used in our study. Appendix \ref{app:theorem} presents the proofs for all the theorems and lemmas introduced in this paper. Finally, in Appendix \ref{app:other}, we showcase additional examples of feature oscillation to support our argument about the training instability of existing UNet models when training diffusion models. Furthermore, we investigate the case of scaling coefficient $\kappa$ greater than 1 in CS, and the experimental results are consistent with our theoretical analysis.

\section{Implementation details}
\label{app:imp}

\begin{figure*}[h]
  \centering
 \includegraphics[width=0.99\linewidth]{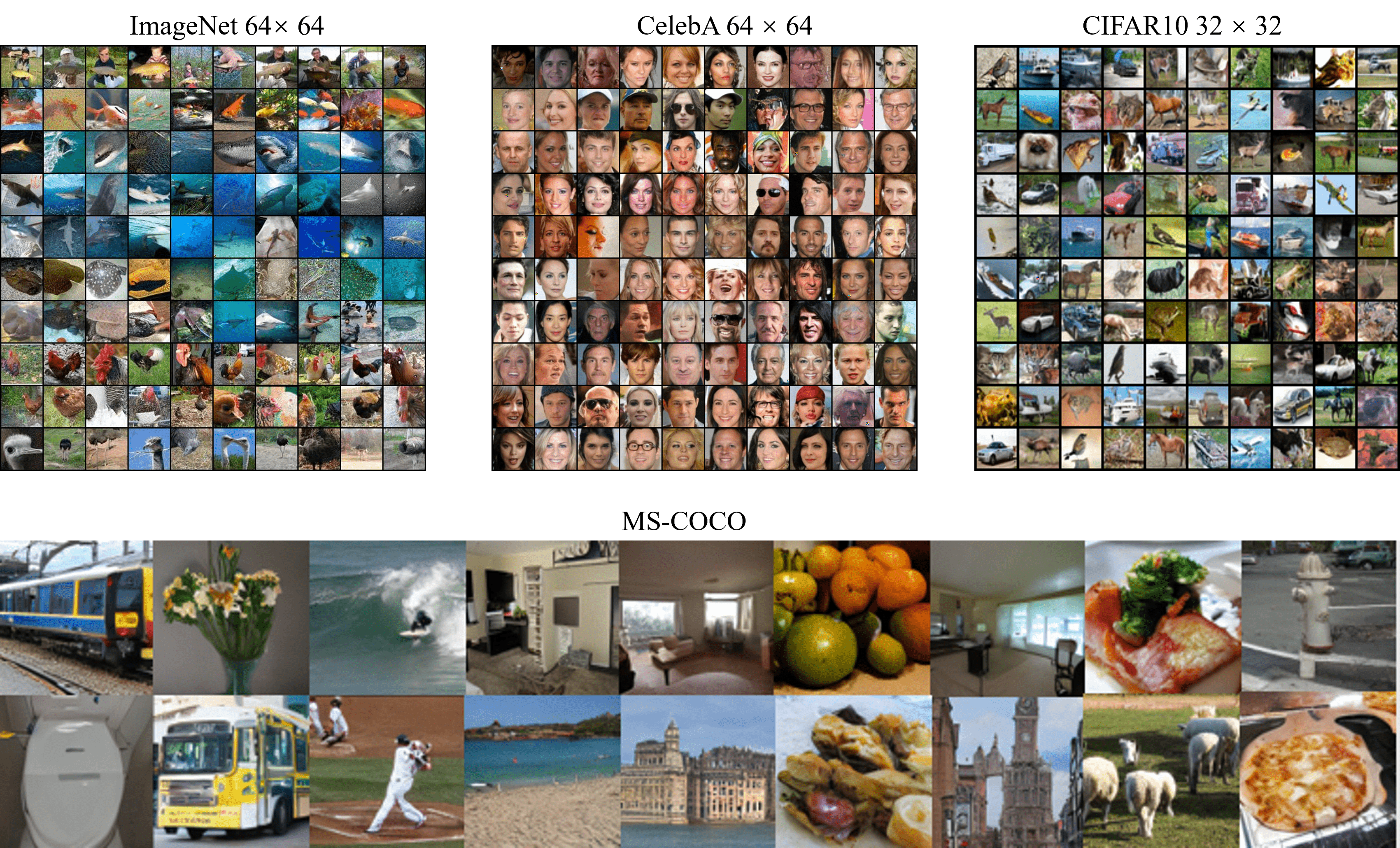}
  \caption{The examples of the generated images by UViT with our proposed LS. }
\label{fig:sample}
\end{figure*}

\subsection{Experimental setup}

In this paper, we consider four commonly used datasets for our experiments: CIFAR10 \cite{Krizhevsky2009LearningML}, CelebA \cite{liu2015faceattributes}, ImageNet \cite{Russakovsky2014ImageNetLS}, and MS-COCO \cite{Lin2014MicrosoftCC}.
For the unconditional setting, we use CIFAR10, which consists of 50,000 training images, and CelebA 64, which contains 162,770 training images of human faces. For the class-conditional settings, we utilize the well-known ImageNet dataset at 64 $\times$ 64 resolutions, which contains 1,281,167 training images from 1,000 different classes. For the text-to-image setting, we consider the MS-COCO dataset at 256 $\times$ 256 resolution. Unless otherwise specified, all of our experiments are conducted on A100 GPU and follow the experimental settings in \cite{bao2022all}.

\begin{table}[htbp]
  \centering
  \caption{The summary of the datasets for the experiments.}
    \begin{tabular}{lrrrr}
    \toprule
    \textbf{Dataset} & \multicolumn{1}{l}{\textbf{\#class}} & \multicolumn{1}{l}{\textbf{\#training}} & \multicolumn{1}{l}{\textbf{\#type}} & \textbf{\#Image size} \\
    \midrule
    CIFAR10 & 10    & 50,000 & Uncondition & 32 x 32 \\
    CelebA & 100   & 162,770 & Uncondition & 64 x 64 \\
    ImageNet 64 & 1000    & 1,281,167 & Class-condition & 64 x 64 \\
    MS-COCO & -   & 82,783 & Text-to-Image & 256 x 256 \\
    \bottomrule
    \end{tabular}%
  \label{tab:dataclassification}%
\end{table}%

\begin{table}[htbp]
  \centering
  \caption{The training settings for different dataset.}
    \begin{tabular}{lcccc}
    \toprule
    \textbf{Dataset} & \textbf{CIFAR10 64×64} & \textbf{CelebA 64×64} & \textbf{ImageNet 64×64} & \textbf{MS-COCO} \\
    \midrule
    Latent shape & -     & -     & -     & 32×32×4 \\
    Batch size & 128/64/32 & 128   & 1024  & 256 \\
    Training iterations & 500k  & 500k  & 300k  & 1M \\
    Warm-up steps & 2.5k  & 4k    & 5k    & 5k \\
    Optimizer & AdamW & AdamW & AdamW & AdamW \\
    Learning rate & 2.00E-04 & 2.00E-04 & 3.00E-04 & 2.00E-04 \\
    Weight decay & 0.03  & 0.03  & 0.03  & 0.03 \\
    Beta  & (0.99,0.999) & (0.99,0.99) & (0.99,0.99) & (0.9,0.9) \\
    \bottomrule
    \end{tabular}%
  \label{tab:addlabel}%
\end{table}%

\subsection{Network architecture details.}
The scaling strategies we proposed on long skip connections, namely CS and LS, are straightforward to implement and can be easily transferred to neural network architectures of different diffusion models. Fig. \ref{fig:cs} and Fig. \ref{fig:ls} compare the original UViT with the code incorporating CS or LS, demonstrating that our proposed methods require minimal code modifications. 

\begin{figure*}[h]
  \centering
 \includegraphics[width=0.99\linewidth]{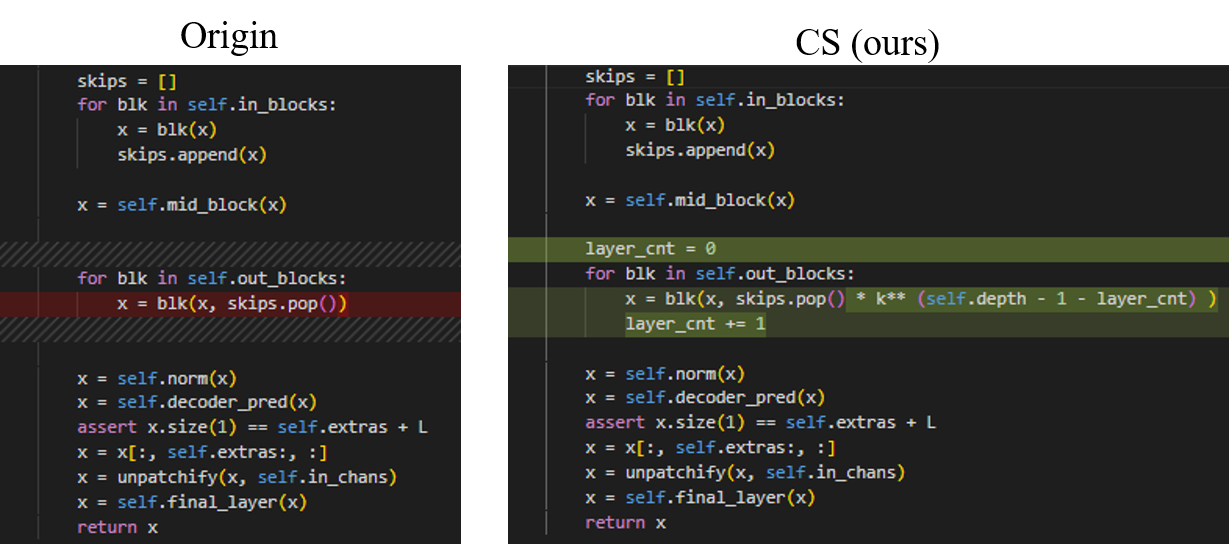}
  \caption{The comparison between the original UViT and our proposed CS. }
\label{fig:cs}
\end{figure*}

\begin{figure*}[h]
  \centering
 \includegraphics[width=0.89\linewidth]{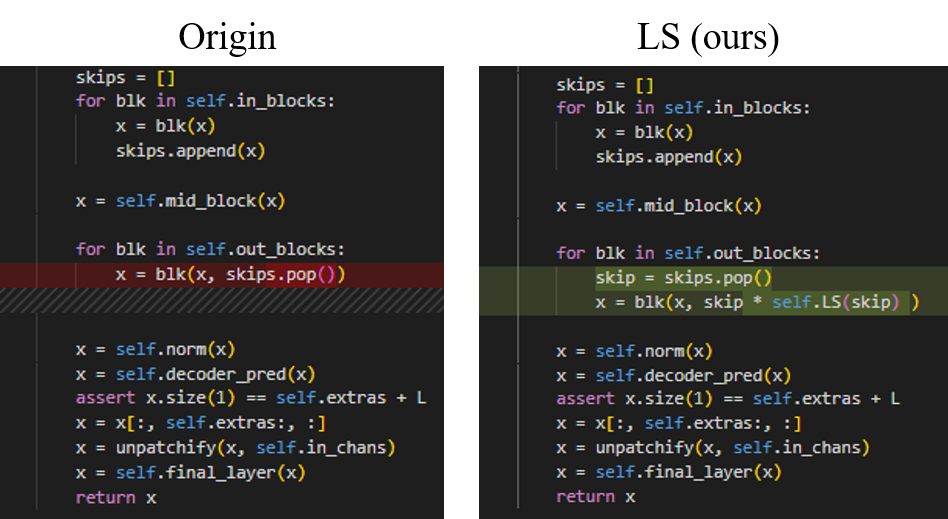}
  \caption{The comparison between the original UViT and our proposed LS. }
\label{fig:ls}
\end{figure*}

CS requires no additional parameters and can rapidly adapt to any UNet-alike architecture and dataset. In our experiments, we have chosen values of $\kappa$ as 0.5, 0.5, 0.7, and 0.8 for the four datasets illustrated in Table \ref{tab:dataclassification}, respectively. In various training environments, we recommend setting $\kappa$ within the range of 0.5 to 0.95.
LS  designs  a calibration network shared by all long skip connections   to predict scaling coefficients $\{\kappa_i\}$. Specifically,  for the $i$-th long skip connection, its input feature is $x_i \in \mathbb{R}^{B\times N \times D}$,  where $B$ denotes the batch size. For convolution-based UNet, $N$  and $D$ respectively denote the channel number and spatial dimension ($H\times W$) of the feature map; for transformer-based UViT \cite{bao2022all}, 	$N$  and $D$ respectively represent the  token number and  token dimension. 

In the main text, we define the LS as 
	\begin{equation}
		  \kappa_i = \sigma(\zeta_\phi[\text{GAP}(x_i)]) \in \mathbb{R}^{B\times N\times 1}, 1\leq i \leq N,
	\end{equation}

For $\zeta_\phi$, we employ a neural network structure based on SENet \cite{Hu2017SqueezeandExcitationN}, given by:
\begin{equation}
\zeta_\phi(x) = \mathbf{W}_1\circ \psi(\mathbf{W}_2\circ x),
\label{eq:senet}
\end{equation}
where $x\in \mathbb{R}^N$, $\circ$ denotes matrix multiplication, and $\psi$ represents the ReLU activation function. The matrices $\mathbf{W}_1 \in \mathbb{R}^{N\times N/r}$ and $\mathbf{W}1 \in \mathbb{R}^{N/r \times N}$, where $r$ is the compression ratio, are set to $r=16$ in this paper. In the ablation study of the main text, we have demonstrated that the choice of $r$ has a minimal impact on the performance of the diffusion model. It is worth noting that if LS is applied to long skip connections where the feature dimensions remain unchanged, such as in UViT, then the calibration network is only a shared $\zeta_\phi(x)$.

However, when LS is applied to neural networks where the dimension of features in certain long skip connections may change, such as in UNet with downsample and upsample operations, we need to provide a set of encoders and decoders in the calibration network to ensure dimension alignment for different feature dimensions.

Assuming there are $d$ different feature dimensions in the long skip connections, where the dimension of the $i$-th feature is ($B\times N_i\times D$), $1\leq i \leq d$, the input dimension of $\zeta_\phi(x)$ is set to $N_{\min} = \min{N_i}$. Additionally, we set up $d$ sets of encoders and decoders, where features with the same dimension share an encoder and a decoder. The $i$-th set of encoder $\zeta_{\text{en}}^{(i)}$ and decoder $\zeta_{\text{de}}^{(i)}$ also adopt the SENet structure, given by:
\begin{equation}
\zeta_{\text{en}}^{(i)}(x) = \mathbf{W}_{N_{\min}\times N_i/r_i}\circ \psi(\mathbf{W}_{N_i/r_i\times N_i}\circ x), \quad x \in \mathbb{R}^{N_i}
\end{equation}
where $r_i$ is the compression ratio, typically set to $\lfloor N_i/4 \rfloor$, and
\begin{equation}
\zeta_{\text{de}}^{(i)}(x) = \mathbf{W}_{N_i\times N_i/r_i}\circ \psi(\mathbf{W}_{N_i/r_i\times N_{\min}}\circ x), \quad x \in \mathbb{R}^{N_{\min}}.
\end{equation}

In fact, since the parameter count of $\zeta_\phi(x)$ itself is not substantial, to make LS more versatile, we can also individually set a scaling module into each long skip connection. This configuration typically does not lead to significant performance degradation and can even result in improved performance, while maintaining the same inference speed as the original LS.

\clearpage
\section{The proof of the Theorems and Lemma}
\label{app:theorem}
\subsection{Preliminaries}

	\textbf{Diffusion Model (DM)}.  DDPM-alike DMs \cite{nichol2021improved,ho2020denoising,austin2021structured,lugmayr2022repaint,hang2023efficient,croitoru2023diffusion,Choi2021ILVRCM,Wyatt2022AnoDDPMAD,Khader2022DenoisingDP} generates a sequence of noisy samples $\{\mathbf{x}_i\}_{i=1}^T$ by repeatedly adding Gaussian noise to a    sample $\mathbf{x}_0$ until attatining  $\mathbf{x}_T\sim \mathcal{N}(\mathbf{0},\mathbf{I})$.  This noise injection process, a.k.a.  the forward process, can be formalized as a Markov chain  $q\left(\mathbf{x}_{1: T} | \mathbf{x}_0\right)=\prod_{t=1}^T q\left(\mathbf{x}_t |\mathbf{x}_{t-1}\right)$,
	where $q(\mathbf{x}_t|\mathbf{x}_{t-1})=\mathcal{N}(\mathbf{x}_t|\sqrt{\alpha_t}\mathbf{x}_{t-1},\beta_t\mathbf{I})$, 
$\alpha_t$ and $\beta_t$ depend on $t$ and satisfy $\alpha_t+\beta_t=1$. By using the properties of  Gaussian distribution, the forward process can be written as 

	\begin{equation}
		q(\mathbf{x}_t|\mathbf{x}_0) = \mathcal{N}(\mathbf{x}_t; \sqrt{\bar{\alpha}_t}\mathbf{x}_0,(1-\bar{\alpha}_t)\mathbf{I}), 
		\label{eq:a_xt}
	\end{equation}
	where $\bar{\alpha}_t = \prod_{i=1}^t \alpha_i$.
	 Next, 
	 one can sample $\mathbf{x}_t = \sqrt{\bar{\alpha}_t}\mathbf{x}_0 + \sqrt{1 - \bar{\alpha}_t}\epsilon_t$,  where $\epsilon_t \sim \mathcal{N}(\mathbf{0},\mathbf{I})$. Then  DM  adopts  a neural network $\hat{\epsilon}_\theta(\cdot,t)$ to invert the forward process and  predict the noise $\epsilon_t$ added at each time step. This process is  to recover data $\mathbf{x}_0$ from a Gaussian noise  by  minimizing the loss
	\begin{equation}
		\ell_{\text{simple}}^t(\theta) = \mathbb{E} \| \epsilon_t - \hat{\epsilon}_\theta (\sqrt{\bar{\alpha}_t}\mathbf{x}_0 + \sqrt{1 - \bar{\alpha}_t}\epsilon_t, t)  \|_2^2 .
		\label{eq:a_loss_main}
	\end{equation}

\vspace{0.2cm}

	\textbf{UNet-alike Network}.  Since the  network $\hat{\epsilon}_\theta(\cdot,t)$ in DMs predicts the noise to   denoise,  it plays a similar  role  of  UNet-alike network widely in  image restoration tasks, e.g.    image de-raining, image denoising \cite{zhang2023kbnet,zamir2021multi,wang2022uformer,zamir2022restormer,chen2022simple,Chen2021HINetHI,Tai2017MemNetAP}.  This inspires DMs to use UNet-alike  network  in \eqref{eq:a_unet} that    uses LSCs to connect distant parts of the network  for long-range information transmission and aggregation
 	\begin{equation}
			\mathbf{UNet}(x)=f_0(x), \quad 
			f_i(x)=b_{i+1} \circ \left[\kappa_{i+1} \cdot a_{i+1} \circ x+f_{i+1}\left(a_{i+1} \circ x\right)\right], \ \ 0 \leq i \leq N-1
		\label{eq:a_unet}
	\end{equation}
	where $x \in \mathbb{R}^m$ denotes the input, $a_i$ and $b_i$ $ (i\geq 1)$ are the trainable parameter of the $i$-th block, $\kappa_i > 0\ (i\geq 1)$ are the scaling coefficients and are set to 1 in standard UNet. $f_N$ is the middle block of UNet. For 
	the vector operation $\circ$, it can be designed  to implement different  networks.

	W.l.o.g, in this paper, we consider $\circ$  as matrix multiplication, and set  the block $a_i$ and $b_i$ as the stacked networks \cite{Zhang2019StabilizeDR,zhang2021on} which is implemented  as 
 \begin{equation}
a_i \circ x = \mathbf{W}^{a_i}_l\phi (\mathbf{W}^{a_i}_{l-1} ... \phi(\mathbf{W}^{a_i}_1 x) ), \quad b_i \circ x = \mathbf{W}^{b_i}_l\phi (\mathbf{W}^{b_i}_{l-1} ... \phi(\mathbf{W}^{b_i}_1 x) )
     \label{eq:a_forward}
 \end{equation}
 with a ReLU activation function $\phi$  and learnable matrices $\mathbf{W}^{a_i}_j, \mathbf{W}^{b_i}_j \in \mathbb{R}^{m\times m}\ (j\geq 1)$. Moreover, let  $f_N$ also have the same  architecture, i.e. $f_N(x) =  \mathbf{W}^{f_i}_l\phi (\mathbf{W}^{f_i}_{l-1} ... \phi(\mathbf{W}^{f_i}_1 x) )$. 

\vspace{0.2cm}

\begin{proposition}[Amoroso distribution]~The Amoroso distribution is a four parameter, continuous, univariate, unimodal probability density, with semi-infinite range~\cite{crooks2012survey}. And its probability density function is
	\begin{equation}
	\mathbf{Amoroso}(X|a,\theta,\alpha,\beta) = \frac{1}{\Gamma(\alpha)}|\frac{\beta}{\theta}|(\frac{X-a}{\theta})^{\alpha\beta-1}\exp \left\{ -(\frac{X-a}{\theta})^\beta\right\},
	\end{equation}
	for $x,a,\theta, \alpha, \beta \in \mathbb{R}, \alpha >0$ and range $x \geq a$ if $\theta > 0$, $x \leq a$ if $\theta < 0$. The mean and variance of Amoroso distribution are
	\begin{equation}
	\mathbb{E}_{X \sim \mathbf{Amoroso}(X|a,\theta,\alpha,\beta)}X = a + \theta\cdot \frac{\Gamma(\alpha + \frac{1}{\beta})}{\Gamma(\alpha)},
	\label{equ:mean_amo}
	\end{equation}
	and
	\begin{equation}
	\mathbf{Var}_{X \sim \mathbf{Amoroso}(X|a,\theta,\alpha,\beta)}X = \theta^2 \left[ \frac{\Gamma(\alpha + \frac{2}{\beta})}{\Gamma(\alpha)} - \frac{\Gamma(\alpha + \frac{1}{\beta})^2}{\Gamma(\alpha)^2} \right].
	\label{equ:var_amo}
	\end{equation}
	\vspace{0.3cm}
	\label{pro:Amoroso}
\end{proposition}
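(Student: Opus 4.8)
The plan is to verify the two moment formulas (and, as a normalization check, that the density integrates to one) by collapsing every integral to the Euler Gamma integral $\Gamma(s)=\int_0^\infty u^{s-1}e^{-u}\,\mathrm{d}u$ via a single change of variables. Concretely, I would set $u=\big(\tfrac{X-a}{\theta}\big)^{\beta}$, so that $X=a+\theta\,u^{1/\beta}$ and $\mathrm{d}X=\tfrac{\theta}{\beta}\,u^{1/\beta-1}\,\mathrm{d}u$. On the support declared in the statement ($X\ge a$ if $\theta>0$, $X\le a$ if $\theta<0$) the ratio $\tfrac{X-a}{\theta}$ is nonnegative, so $u$ sweeps $[0,\infty)$. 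Combining the prefactor $\big|\tfrac{\beta}{\theta}\big|$ with the Jacobian yields exactly $u^{1/\beta-1}$, the power term $\big(\tfrac{X-a}{\theta}\big)^{\alpha\beta-1}$ becomes $u^{(\alpha\beta-1)/\beta}=u^{\alpha-1/\beta}$, and the exponential becomes $e^{-u}$; multiplying these gives the integrand $\tfrac{1}{\Gamma(\alpha)}\,u^{\alpha-1}e^{-u}$, so $\int \mathbf{Amoroso}(X\,|\,a,\theta,\alpha,\beta)\,\mathrm{d}X=\tfrac{1}{\Gamma(\alpha)}\int_0^\infty u^{\alpha-1}e^{-u}\,\mathrm{d}u=1$.

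For the mean I would write $\mathbb{E}[X]=a+\theta\,\mathbb{E}\big[\tfrac{X-a}{\theta}\big]$ and apply the same substitution: since $\tfrac{X-a}{\theta}=u^{1/\beta}$, one gets $\mathbb{E}\big[\tfrac{X-a}{\theta}\big]=\tfrac{1}{\Gamma(\alpha)}\int_0^\infty u^{\alpha+1/\beta-1}e^{-u}\,\mathrm{d}u=\tfrac{\Gamma(\alpha+1/\beta)}{\Gamma(\alpha)}$, which is \eqref{equ:mean_amo}. For the variance, by the same token $\mathbb{E}[(X-a)^2]=\theta^2\,\mathbb{E}\big[(\tfrac{X-a}{\theta})^2\big]=\theta^2\tfrac{\Gamma(\alpha+2/\beta)}{\Gamma(\alpha)}$, and since variance is invariant under the shift $X\mapsto X-a$, $\mathbf{Var}[X]=\mathbb{E}[(X-a)^2]-\big(\mathbb{E}[X-a]\big)^2=\theta^2\big[\tfrac{\Gamma(\alpha+2/\beta)}{\Gamma(\alpha)}-\tfrac{\Gamma(\alpha+1/\beta)^2}{\Gamma(\alpha)^2}\big]$, which is \eqref{equ:var_amo}.

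The only delicate point, and the main obstacle, is the sign and convergence bookkeeping in the substitution. When $\theta<0$ (or $\beta<0$) the map $X\mapsto u$ is orientation-reversing, so the limits of integration swap; the sign acquired there must be checked to cancel exactly against the sign of $\mathrm{d}X$, and it is precisely the absolute value $|\beta/\theta|$ in the density that makes this cancellation reproduce the standard Gamma integral on $[0,\infty)$ with a positive sign, regardless of the signs of $\theta$ and $\beta$. One also needs the moment integrals to converge, i.e.\ $\alpha+k/\beta>0$ for $k=1,2$; this is automatic when $\alpha>0$ and $\beta>0$, and I would record it as the implicit regularity condition in the case $\beta<0$. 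Apart from this accounting the argument is a routine computation, and the statement is classical, appearing in~\cite{crooks2012survey}.
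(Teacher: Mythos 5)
Your derivation is correct: the substitution $u=\bigl(\tfrac{X-a}{\theta}\bigr)^{\beta}$ reduces the normalization and both moment integrals to Euler Gamma integrals, and your sign bookkeeping (the factor $\lvert\beta/\theta\rvert$ cancelling against the orientation reversal when $\theta$ or $\beta$ is negative) is exactly what makes the computation go through. Note, however, that the paper offers no proof of this proposition at all — it is stated as background and attributed to Crooks's survey — so there is nothing to compare against; your argument is the standard textbook derivation, and your added remark that one needs $\alpha+k/\beta>0$ for the $k$-th moment to exist (automatic for $\beta>0$, a genuine restriction for $\beta<0$) is a condition the paper's statement leaves implicit.
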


\begin{proposition}[Stirling's formula] For big enough $x$ and $x \in \mathbb{R}^+$, we have an approximation of Gamma function:
	\begin{equation}
	\Gamma(x+1) \approx \sqrt{2\pi x}\left(\frac{x}{e}\right)^x.
	\end{equation}
	\label{pro:stir}
	\vspace{0.3cm}
\end{proposition}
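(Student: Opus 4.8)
The plan is to prove Stirling's formula by Laplace's method applied to the Euler integral $\Gamma(x+1)=\int_{0}^{\infty}t^{x}e^{-t}\,dt$. Writing the integrand as $\exp(g(t))$ with $g(t)=x\ln t-t$, the exponent attains its unique maximum at $t=x$, where $g(x)=x\ln x-x$ (so $e^{g(x)}=(x/e)^{x}$) and $g''(x)=-1/x$. This already suggests the answer, since a local Gaussian approximation gives $\Gamma(x+1)\approx(x/e)^{x}\int e^{-(t-x)^{2}/(2x)}\,dt=(x/e)^{x}\sqrt{2\pi x}$; the work is entirely in making the Gaussian approximation of the integral rigorous.

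First I would recenter and rescale through the substitution $t=x+\sqrt{x}\,u$, which maps $t=0$ to $u=-\sqrt{x}$ and $dt$ to $\sqrt{x}\,du$; a direct computation yields
\begin{equation}
\Gamma(x+1)=\Big(\frac{x}{e}\Big)^{x}\sqrt{x}\int_{-\sqrt{x}}^{\infty}\exp\!\big(x\,h(u/\sqrt{x})\big)\,du,\qquad h(v):=\ln(1+v)-v .
\end{equation}
It then suffices to show that the last integral converges to $\int_{-\infty}^{\infty}e^{-u^{2}/2}\,du=\sqrt{2\pi}$ as $x\to\infty$. Pointwise convergence is immediate: the Taylor expansion $\ln(1+v)=v-v^{2}/2+O(v^{3})$ gives $x\,h(u/\sqrt{x})=-u^{2}/2+O(|u|^{3}/\sqrt{x})\to-u^{2}/2$ for every fixed $u$, so the integrand tends to $e^{-u^{2}/2}$.

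To upgrade pointwise convergence to convergence of the integrals I would invoke the dominated convergence theorem, whose essential ingredient is an $x$-independent integrable majorant for $u\mapsto\exp(x\,h(u/\sqrt{x}))\,\mathbf{1}_{\{u>-\sqrt{x}\}}$. I would assemble it from elementary one-sided estimates on $h$: on $v\in(-1,0)$ one has $h(v)\le-v^{2}/2$, giving the bound $e^{-u^{2}/2}$ for $u<0$; on $v\in[0,1]$ one has $h(v)\le-v^{2}/6$, while for $v\ge1$ the inequality $h'(v)=-v/(1+v)\le-\tfrac12$ gives $h(v)\le-v/4$, and together these produce a majorant of the form $\max\{e^{-u^{2}/6},\,e^{-c\sqrt{x_{0}}\,u}\}$ on $u\ge0$ that is valid for all $x\ge x_{0}$. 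Both pieces are integrable, so dominated convergence applies and $\Gamma(x+1)\approx\sqrt{2\pi x}\,(x/e)^{x}$ follows.

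The step I expect to be the main obstacle is exactly this construction of a uniform integrable majorant near the two ends of the integration range: close to $u=-\sqrt{x}$ the quadratic Taylor approximation of $h$ fails (indeed $h(v)\to-\infty$ as $v\to-1^{+}$), so one must fall back on the elementary bound $h(v)\le-v^{2}/2$ there, and for very large positive $u$ the Gaussian-type bound must be replaced by the linear-decay bound $h(v)\le-v/4$; carefully patching these regimes is the only delicate point, as the substitution, the expansion of $\ln(1+v)$, and the value of the Gaussian integral are all routine. If only the stated leading-order approximation is needed in the sequel, one may instead simply invoke the classical statement, e.g.\ as in \cite{crooks2012survey}.
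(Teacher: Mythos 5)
Your proof is correct. Note that the paper itself offers no proof of this proposition: Stirling's formula is stated as a classical fact (with a citation to a survey) and is only consumed downstream in the proof of Lemma A.4, so there is no in-paper argument to compare against. Your Laplace-method derivation is nonetheless complete and sound: the substitution $t=x+\sqrt{x}\,u$ correctly produces the factor $(x/e)^x\sqrt{x}$ and the integral $\int_{-\sqrt{x}}^{\infty}\exp(x\,h(u/\sqrt{x}))\,du$ with $h(v)=\ln(1+v)-v$; the pointwise limit $-u^2/2$ is immediate from the Taylor expansion; and the three one-sided bounds you propose all check out ($h(v)\le -v^2/2$ on $(-1,0)$ since $\frac{d}{dv}[\ln(1+v)-v+v^2/2]=v^2/(1+v)>0$ with equality at $0$; $h(v)\le -v^2/6$ on $[0,1]$ since the corresponding derivative $v(v-2)/(3(1+v))\le 0$ there; and $h(v)\le -v/4$ for $v\ge 1$ from $h'(v)\le -1/2$ together with $h(1)=\ln 2-1<-1/4$). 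The resulting majorant $\max\{e^{-u^2/6},e^{-\sqrt{x_0}u/4}\}$ on $u\ge 0$ and $e^{-u^2/2}$ on $u<0$ is integrable and independent of $x\ge x_0$, so dominated convergence gives $\int\to\sqrt{2\pi}$ and the asymptotic follows. Since the paper only needs the leading-order equivalence inside a ratio of Gamma functions, simply invoking the classical statement (as the paper does) would also suffice, but your argument fills the gap rigorously.
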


\begin{proposition}[Scaled Chi distribution]~Let $X = (x_1,x_2,...x_k)$ and $x_i,i=1,...,k$ are $k$ independent, normally distributed random variables  with mean 0 and standard deviation $\sigma$. The statistic $\ell_2(X) = \sqrt{\sum_{i=1}^kx_i^2}$ follows Scaled Chi distribution~\cite{crooks2012survey}. Moreover, $\ell_2(X)$ also follows $\mathbf{Amoroso}(x|0,\sqrt{2}\sigma,\frac{k}{2},2)$. By Eq.~(\ref{equ:mean_amo}) and Eq.~(\ref{equ:var_amo}), the mean and variance of Scaled Chi distribution are
	\begin{equation}
	\mathbb{E}_{X \sim N(\mathbf{0},\sigma ^{2}\cdot \mathbf{I_k})} [\ell_2(X)]^j = 2^{j/2}\sigma^j\cdot \frac{\Gamma(\frac{k+j}{2})}{\Gamma(\frac{k}{2})},
	\label{equ:mean_chi}
	\end{equation}
	and
	\begin{equation}
	\mathbf{Var}_{X \sim N(\mathbf{0},\sigma ^{2}\cdot \mathbf{I_k})} \ell_2(X) = 2\sigma^2 \left[ \frac{\Gamma(\frac{k}{2}+1)}{\Gamma(\frac{k}{2})} - \frac{\Gamma(\frac{k+1}{2})^2}{\Gamma(\frac{k}{2})^2} \right].
	\label{equ:var_chi}
	\end{equation}
	\vspace{0.3cm}
	\label{pro:chi}
\end{proposition}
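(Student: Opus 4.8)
Since Proposition~\ref{pro:chi} is a standard distributional identity, the plan is to reduce everything to the Amoroso facts of Proposition~\ref{pro:Amoroso} in three steps: (i) compute in closed form the density of $R:=\ell_2(X)=\sqrt{\sum_{i=1}^{k}x_i^2}$ when $X\sim\mathcal{N}(\mathbf{0},\sigma^2\mathbf{I}_k)$; (ii) verify that this density is exactly $\mathbf{Amoroso}(\cdot\,|\,0,\sqrt{2}\sigma,\tfrac{k}{2},2)$; and (iii) deduce the moment formula \eqref{equ:mean_chi} and the variance \eqref{equ:var_chi} from the Amoroso moment formulas \eqref{equ:mean_amo}--\eqref{equ:var_amo}. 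None of these steps is deep; the only real work is keeping the normalizing constants straight.

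For step~(i) I would pass to polar coordinates: starting from the joint density $(2\pi\sigma^2)^{-k/2}\exp(-\|x\|_2^2/2\sigma^2)$ and integrating out the angular variables, using that the surface area of the unit sphere $S^{k-1}\subset\mathbb{R}^k$ equals $2\pi^{k/2}/\Gamma(k/2)$, one obtains for $r\ge 0$
\[
f_R(r)=\frac{2\pi^{k/2}}{\Gamma(k/2)}\,(2\pi\sigma^2)^{-k/2}\,r^{k-1}\exp\!\Big(-\tfrac{r^2}{2\sigma^2}\Big)=\frac{2}{\Gamma(k/2)\,(2\sigma^2)^{k/2}}\,r^{k-1}\exp\!\Big(-\tfrac{r^2}{2\sigma^2}\Big).
\]
An equivalent route, which also explains the name ``scaled chi,'' is to note that $\sum_i(x_i/\sigma)^2\sim\chi_k^2$ and change variables $t\mapsto\sqrt{t}\,\sigma$ in the $\chi_k^2$ density. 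For step~(ii), substituting $a=0$, $\theta=\sqrt{2}\sigma$, $\alpha=k/2$, $\beta=2$ into the Amoroso density of Proposition~\ref{pro:Amoroso} and using $\sqrt{2}\sigma\cdot(2\sigma^2)^{(k-1)/2}=(2\sigma^2)^{k/2}$ collapses it to exactly the $f_R(r)$ above, which proves the claimed identity.

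For step~(iii), the variance \eqref{equ:var_chi} and the mean (the $j=1$ case of \eqref{equ:mean_chi}) follow by plugging $a=0$, $\theta=\sqrt{2}\sigma$, $\alpha=k/2$, $\beta=2$ directly into \eqref{equ:mean_amo} and \eqref{equ:var_amo}. For the general $j$-th moment in \eqref{equ:mean_chi} I would either invoke the obvious generalization $\mathbb{E}(X-a)^j=\theta^j\,\Gamma(\alpha+j/\beta)/\Gamma(\alpha)$ of \eqref{equ:mean_amo} (whose proof is the $j=1$ argument verbatim), giving $(\sqrt{2}\sigma)^j\,\Gamma(\tfrac{k+j}{2})/\Gamma(\tfrac{k}{2})=2^{j/2}\sigma^j\,\Gamma(\tfrac{k+j}{2})/\Gamma(\tfrac{k}{2})$, or compute $\mathbb{E}[R^j]=\int_0^\infty r^j f_R(r)\,dr$ directly: the substitution $u=r^2/2\sigma^2$ turns it into $\tfrac{(2\sigma^2)^{j/2}}{\Gamma(k/2)}\int_0^\infty u^{(k+j)/2-1}e^{-u}\,du=(2\sigma^2)^{j/2}\,\Gamma(\tfrac{k+j}{2})/\Gamma(\tfrac{k}{2})$, which is exactly \eqref{equ:mean_chi}. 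Taking $j=2$ and $j=1$ and forming $\mathbb{E}[R^2]-(\mathbb{E}[R])^2$ then reproduces \eqref{equ:var_chi}.

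The main obstacle is purely clerical: in step~(ii) the factors $2\pi^{k/2}$, $(2\pi\sigma^2)^{-k/2}$, $|\beta/\theta|=2/(\sqrt{2}\sigma)$ and $(\sqrt{2}\sigma)^{-(k-1)}$ must be combined so that both expressions reduce to the common form $\tfrac{2}{\Gamma(k/2)(2\sigma^2)^{k/2}}\,r^{k-1}e^{-r^2/2\sigma^2}$; an exponent slip there is essentially the only way this argument can fail. Proposition~\ref{pro:stir} is not needed for this proof; it is recorded for the later asymptotic analysis of the $\Gamma$-ratios that appear in \eqref{equ:mean_chi}--\eqref{equ:var_chi}.
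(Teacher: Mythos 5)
Your proposal is correct, and it does strictly more than the paper, which offers no proof of Proposition~\ref{pro:chi} at all: the distributional identity $\ell_2(X)\sim\mathbf{Amoroso}(x|0,\sqrt{2}\sigma,\frac{k}{2},2)$ is simply cited to the Crooks survey, and the moment formulas are read off ``by Eq.~(\ref{equ:mean_amo}) and Eq.~(\ref{equ:var_amo})'' without further comment. Your step~(iii) is exactly the paper's implicit argument for the mean and variance; your steps~(i) and~(ii) supply the verification of the underlying density identity that the paper delegates to the reference, and your bookkeeping there checks out: the polar-coordinate density $\frac{2}{\Gamma(k/2)(2\sigma^2)^{k/2}}r^{k-1}e^{-r^2/2\sigma^2}$ does coincide with the Amoroso density at the stated parameters, since $\alpha\beta-1=k-1$ and $(\sqrt{2}\sigma)^{k}=(2\sigma^2)^{k/2}$. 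You also correctly flag the one genuine looseness in the paper's statement: Eq.~(\ref{equ:mean_amo}) as written gives only the first moment of the Amoroso distribution, whereas Eq.~(\ref{equ:mean_chi}) asserts the general $j$-th moment, so either the generalization $\mathbb{E}(X-a)^j=\theta^j\,\Gamma(\alpha+j/\beta)/\Gamma(\alpha)$ or your direct substitution $u=r^2/2\sigma^2$ is needed; both of your routes land on $2^{j/2}\sigma^j\,\Gamma(\frac{k+j}{2})/\Gamma(\frac{k}{2})$, and the $j=1,2$ cases recover Eq.~(\ref{equ:var_chi}). No gaps.
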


\begin{lemma} For big enough $x$ and $x \in \mathbb{R}^+$, we have
	\begin{equation}
	\lim_{x \to +\infty} \left[\frac{\Gamma(\frac{x+1}{2})}{\Gamma(\frac{x}{2})}\right]^2\cdot \frac{1}{x} = \frac{1}{2}.
	\end{equation}
	\label{lemma:gamma1}
	
	And 
	\begin{equation}
	\lim_{x \to +\infty} \frac{\Gamma(\frac{x}{2}+1)}{\Gamma(\frac{x}{2})} - \left[\frac{\Gamma(\frac{x+1}{2})}{\Gamma(\frac{x}{2})}\right]^2 = \frac{1}{4}.
	\end{equation}
	\label{lemma:gamma2}
\end{lemma}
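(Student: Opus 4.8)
The plan is to reduce both stated limits to a single second-order asymptotic for a ratio of Gamma values, namely
\begin{equation*}
\frac{\Gamma(y+\tfrac12)}{\Gamma(y)} = \sqrt{y}\Bigl(1 - \frac{1}{8y} + O(y^{-2})\Bigr) \qquad (y\to\infty),
\end{equation*}
applied with $y=x/2$. Granting this, the first limit is immediate: squaring gives $[\Gamma(\tfrac{x+1}{2})/\Gamma(\tfrac{x}{2})]^2 = \tfrac{x}{2}(1-\tfrac{1}{8y}+O(y^{-2}))^2 = \tfrac{x}{2}-\tfrac14+O(x^{-1})$, so multiplying by $1/x$ and letting $x\to\infty$ yields $\tfrac12$. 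For the second limit I additionally use the exact functional equation $\Gamma(\tfrac{x}{2}+1)=\tfrac{x}{2}\Gamma(\tfrac{x}{2})$, i.e.\ $\Gamma(\tfrac{x}{2}+1)/\Gamma(\tfrac{x}{2})=\tfrac{x}{2}$ with no error term; subtracting the expansion above then leaves $\tfrac{x}{2}-(\tfrac{x}{2}-\tfrac14+O(x^{-1}))=\tfrac14+O(x^{-1})\to\tfrac14$.

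To establish the displayed asymptotic I will pass to logarithms and invoke Stirling's formula (Proposition~\ref{pro:stir}) in the refined form $\ln\Gamma(z) = (z-\tfrac12)\ln z - z + \tfrac12\ln(2\pi) + R(z)$ with $R(z)=O(z^{-1})$; more precisely I will use Robbins' two-sided bound $\tfrac{1}{12z+1}<R(z)<\tfrac{1}{12z}$. Then
\begin{equation*}
\ln\frac{\Gamma(y+\tfrac12)}{\Gamma(y)} = y\ln\!\Bigl(1+\tfrac{1}{2y}\Bigr) + \tfrac12\ln y - \tfrac12 + \bigl(R(y+\tfrac12)-R(y)\bigr),
\end{equation*}
and the Taylor expansion $y\ln(1+\tfrac{1}{2y}) = \tfrac12 - \tfrac{1}{8y} + O(y^{-2})$ collapses the right-hand side to $\tfrac12\ln y - \tfrac{1}{8y} + O(y^{-2})$; exponentiating gives the claim. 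For the first limit in isolation only the leading-order statement $\Gamma(z+1)\approx\sqrt{2\pi z}(z/e)^z$ as literally written is needed, since there it suffices to know $\Gamma(y+\tfrac12)/\Gamma(y)\sim\sqrt{y}$.

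The main obstacle is controlling the remainder sharply enough for the second limit: that limit is a difference of two quantities each of size $x/2$, so the value $\tfrac14$ comes entirely from the $O(1)$ corrections, and the bare ``$\approx$'' in Proposition~\ref{pro:stir} is too weak for it. Concretely one must check (i) that $R(y+\tfrac12)-R(y)=O(y^{-2})$, which is where Robbins' bound is used, since each of $R(y+\tfrac12)$ and $R(y)$ equals $\tfrac{1}{12y}+O(y^{-2})$ and hence their difference is $O(y^{-2})$; and (ii) that $y\ln(1+\tfrac{1}{2y})$ is expanded to order $y^{-1}$ rather than merely to $O(1)$. Once these two remainder estimates are in place, both limits follow by the elementary arithmetic above; as a consistency check, the argument recovers the classical expansion $\Gamma(y+\tfrac12)/\Gamma(y)=\sqrt{y}\,(1-\tfrac{1}{8y}+\tfrac{1}{128y^2}+\cdots)$.
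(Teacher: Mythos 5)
Your proposal is correct, and it follows the same basic route as the paper --- Stirling's formula applied to the Gamma ratio, together with the exact identity $\Gamma(\tfrac{x}{2}+1)/\Gamma(\tfrac{x}{2})=\tfrac{x}{2}$ for the second limit --- but it executes the critical step more carefully than the paper does. The paper substitutes the leading-order approximation $\Gamma(z+1)\approx\sqrt{2\pi z}(z/e)^z$ directly into the ratio, simplifies, and for the second limit reduces everything to $\lim_{x\to\infty}\tfrac{x}{2e}\bigl(e-(1+\tfrac1x)^x\bigr)=\tfrac14$; it never addresses the fact that Stirling carries a multiplicative error of $1+O(1/z)$, which in a difference of two quantities of size $x/2$ could a priori contribute an $O(1)$ error and swamp the answer $\tfrac14$. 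The implicit rescue is that the $\tfrac{1}{12z}$ corrections nearly cancel in the ratio $\Gamma(y+\tfrac12)/\Gamma(y)$, leaving a relative error of $O(y^{-2})$ --- exactly the point you make explicit via Robbins' two-sided bound $\tfrac{1}{12z+1}<R(z)<\tfrac{1}{12z}$ and the observation that $R(y+\tfrac12)-R(y)=O(y^{-2})$. Your detour through the clean second-order asymptotic $\Gamma(y+\tfrac12)/\Gamma(y)=\sqrt{y}\bigl(1-\tfrac{1}{8y}+O(y^{-2})\bigr)$ then makes both limits a one-line arithmetic consequence, whereas the paper must separately expand $(1+\tfrac1x)^x$ to second order. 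In short: same Stirling-based strategy, but your version closes a remainder-control gap that the paper's own proof leaves implicit, and is the more rigorous of the two.
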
{}

\begin{proof}
	\begin{align*}
	\lim_{x \to +\infty}\left[\frac{\Gamma(\frac{x+1}{2})}{\Gamma(\frac{x}{2})}\right]^2\cdot \frac{1}{x} & \approx \lim_{x \to +\infty}\left( \frac{\sqrt{2\pi(\frac{x-1}{2})}\cdot(\frac{x-1}{2e})^{\frac{x-1}{2}}}{\sqrt{2\pi(\frac{x-2}{2})}\cdot(\frac{x-2}{2e})^{\frac{x-2}{2}}}\right)^2\cdot \frac{1}{x}\tag*{from Proposition. \ref{pro:stir}} \\
	& = \lim_{x \to +\infty}\left(\frac{x-1}{x-2}\right)\cdot \frac{(\frac{x-1}{2e})^{x-2}}{(\frac{x-2}{2e})^{x-2}} \cdot \left(\frac{x-1}{2e}\right)\cdot \frac{1}{x} \\
	&= \lim_{x \to +\infty}\left(1+\frac{1}{x-2}\right)^{x-2}\cdot \frac{x-1}{x-2}\cdot\frac{x-1}{2e}\cdot\frac{1}{x}\\
	& = \frac{1}{2}
	\end{align*}
	on the other hand, we have
	\begin{align*}
	\lim_{x \to +\infty}\frac{\Gamma(\frac{x}{2}+1)}{\Gamma(\frac{x}{2})} - \left[\frac{\Gamma(\frac{x+1}{2})}{\Gamma(\frac{x}{2})}\right]^2 &= \lim_{x \to +\infty} \frac{x}{2} - \left(1+\frac{1}{x-2}\right)^{x-2}\cdot \frac{x-1}{x-2}\cdot\frac{x-1}{2e}\\
	& = \lim_{x \to +\infty} \frac{x}{2e}\left(e - (1+\frac{1}{x})^x\right)\\
	&= \frac{1}{2}\left(-\frac{\frac{1}{e}(-e)}{2}\right)\\
	& = \frac{1}{4}
	\end{align*}
	
	\qedhere
\end{proof}

\begin{lemma} \cite{davies1980algorithm,davies1973numerical}
    Let $\mathbf{x}= (x_1,x_2,...,x_N)$, where $x_i,1\leq i \leq N$, are independently and identically distributed random variables with $x_i \sim \mathcal{N}(\mu,\sigma^2)$. In this case, the mathematical expectation $\mathbb{E}(\sum_{i=1}^Nx_i^2)$ can be calculated as $\sigma^2N + \mu^2N$.
    \label{lemma:chi}
\end{lemma}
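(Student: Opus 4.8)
The plan is to reduce the computation of $\mathbb{E}\big(\sum_{i=1}^N x_i^2\big)$ to a single per-coordinate second-moment calculation via linearity of expectation, and then to evaluate that second moment using the variance decomposition for a Gaussian. Concretely, the first step is to apply linearity of expectation to write $\mathbb{E}\big(\sum_{i=1}^N x_i^2\big) = \sum_{i=1}^N \mathbb{E}[x_i^2]$; note this step needs neither independence nor identical distribution.

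The second step is, for each fixed $i$, to use the elementary identity $\mathbb{E}[x_i^2] = \mathrm{Var}(x_i) + \big(\mathbb{E}[x_i]\big)^2$, valid for any square-integrable random variable. Since $x_i \sim \mathcal{N}(\mu,\sigma^2)$, we have $\mathrm{Var}(x_i)=\sigma^2$ and $\mathbb{E}[x_i]=\mu$, hence $\mathbb{E}[x_i^2]=\sigma^2+\mu^2$. Summing over $i=1,\dots,N$ and using that the $x_i$ are identically distributed yields $\mathbb{E}\big(\sum_{i=1}^N x_i^2\big) = N(\sigma^2+\mu^2) = \sigma^2 N + \mu^2 N$, which is exactly the claim.

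There is no genuine obstacle in this argument; it is a routine moment computation. The only point worth a moment's care is to keep track of which hypotheses are actually used: identical distribution is what lets the sum of second moments collapse to $N(\sigma^2+\mu^2)$, whereas independence plays no role in computing the expectation of the sum (it would only be needed for the variance of $\sum_i x_i^2$, which is not asked for here).
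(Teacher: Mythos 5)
Your proof is correct. Note that the paper itself does not actually prove this lemma: it states it as a known fact with citations, since $\sum_{i=1}^N x_i^2$ (after rescaling by $\sigma^2$) follows a non-central chi-square distribution whose mean $N + N\mu^2/\sigma^2$ is standard. Your elementary argument --- linearity of expectation followed by $\mathbb{E}[x_i^2]=\mathrm{Var}(x_i)+(\mathbb{E}[x_i])^2$ --- reaches the same conclusion without invoking any distributional machinery, and is in fact more self-contained than the paper's appeal to external references. Your side remark is also accurate and worth keeping: independence is not used anywhere in computing the expectation of the sum (nor, strictly, is Gaussianity --- only the first two moments matter), so the lemma holds under weaker hypotheses than stated.
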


\vspace{0.2cm}

\begin{lemma} \cite{huang2021rethinking}
    Let $F_{ij}\in \mathbb{R}^{N_i\times k\times k}$ be the $j$-th filter of the $i$-th convolutional layer. If all filters are initialized as Kaiming's initialization, i.e., $\mathcal{N}(0,\frac{2}{N_i\times k \times k})$, in $i$-th layer, $F_{ij}~( j=1,2,...,N_{i+1})$ are i.i.d and approximately follow such a distribution during training:
	\begin{equation}
	F_{ij} \sim \mathcal{N}(\mathbf{0}, \mathcal{O}[(k^2N_i)^{-1}]\cdot \mathbf{I}_{N_i\times k\times k}).
	\end{equation}
    \label{lemma:cwda}
\end{lemma}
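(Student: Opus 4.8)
The statement is quoted from \cite{huang2021rethinking}, so the plan is to recall its argument and supply the elementary second-moment bookkeeping. There are really two claims: (i) within layer $i$ the filters $\{F_{ij}\}_{j=1}^{N_{i+1}}$ stay (approximately) i.i.d.\ during training, and (ii) each entry keeps variance of order $(k^2N_i)^{-1}$. For (i) I would use an exchangeability argument. At Kaiming initialization the collection $\{F_{ij}\}_j$ is exactly i.i.d. Moreover the loss $L$ is invariant under any permutation $\pi$ of the output channels of layer $i$ provided the corresponding input channels of layer $i+1$ are permuted the same way; hence full-batch gradient flow commutes with this symmetry, so the joint law of the weights stays permutation-invariant for all $t$, which forces the marginal laws of the $F_{ij}$ to coincide across $j$. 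Exact independence is lost once training couples the filters through the shared loss, and here one invokes the mean-field / weak-interaction heuristic underlying both \cite{huang2021rethinking} and the Kaiming-initialization analysis — this is precisely where ``approximately'' enters.

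For (ii) I would track $v_t := \mathbb{E}\|F_{ij}^{(t)}\|_2^2$ along the update $F^{(t+1)} = F^{(t)} - \eta\, g^{(t)}$ with $g^{(t)} = \nabla_{F_{ij}} \ell$, which gives
\begin{equation}
v_{t+1} = v_t - 2\eta\, \mathbb{E}\langle F^{(t)}, g^{(t)}\rangle + \eta^2\, \mathbb{E}\|g^{(t)}\|_2^2 .
\end{equation}
At $t=0$, Lemma~\ref{lemma:chi} gives $v_0 = \tfrac{2}{k^2N_i}\cdot (k^2N_i) = 2 = \mathcal{O}(1)$, so the per-entry variance is $\mathcal{O}((k^2N_i)^{-1})$ initially. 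The gradient $g^{(t)}$ of a convolutional filter is a sum over spatial positions and minibatch examples of (an input patch) tensored with (the backpropagated signal of output channel $j$); under the Kaiming scaling these factors have $\mathcal{O}(1)$ per-coordinate variance, so the $k^2N_i$ coordinates of $g^{(t)}$ have per-coordinate second moment that does not grow with $k$ or $N_i$, i.e.\ $\mathbb{E}\|g^{(t)}\|_2^2 = \mathcal{O}(1)$ in the same normalization as $v_0$. With the standard (constant, $\mathcal{O}(1)$-scaled) step sizes and the finite training horizon used in practice, a telescoping / discrete Grönwall bound on the recursion above then keeps $v_t = \mathcal{O}(1)$ uniformly in $t$, hence the per-entry variance remains $\mathcal{O}((k^2N_i)^{-1})$ throughout training.

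For the Gaussian shape, each entry of $F_{ij}^{(t)}$ equals its Gaussian initial value plus a sum of many small gradient increments, so a central-limit / concentration heuristic (again as in \cite{huang2021rethinking}, and consistent with the weight histograms reported there) keeps the marginal close to Gaussian with the variance controlled as above. I would phrase this part as an approximation rather than an exact identity.

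The main obstacle is exactly this last point together with the independence claim: under generic training the weights do become correlated and non-Gaussian, so a fully rigorous version needs a lazy / wide-network regime in which the filters move by $o(1)$ and therefore remain inside an $o(1)$ neighborhood of their Gaussian initialization. The pragmatic route I would take — matching the source — is to establish the order bound in (ii) rigorously from the update recursion, and to treat the ``i.i.d.\ Gaussian during training'' part as a well-supported modeling approximation, citing \cite{huang2021rethinking} for the empirical validation.
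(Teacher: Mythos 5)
The paper offers no proof of this lemma: it is imported verbatim by citation from \cite{huang2021rethinking} and used downstream as a modeling assumption (e.g.\ in the proof of Theorem 3.1 and in Corollary \ref{coro:zhu}), so there is no in-paper argument to compare yours against. Judged on its own terms, your reconstruction is a reasonable and honestly-hedged account of why the claim should hold. The permutation-symmetry argument correctly yields that the $F_{ij}$ within a layer are exchangeable, hence identically distributed, for all training time; note, though, that exchangeability is strictly weaker than independence, so part (i) as you prove it only delivers the ``identically distributed'' half rigorously, with independence remaining the same approximation the source makes. For part (ii), the recursion on $v_t$ is the right bookkeeping device, but the pivotal input $\mathbb{E}\|g^{(t)}\|_2^2=\mathcal{O}(1)$ is asserted from the Kaiming-scaling heuristic rather than derived; it depends on the backpropagated signal retaining $\mathcal{O}(1)$ per-coordinate second moment at every layer, which is itself a forward/backward stability claim of the same nature as the lemma being proved, so there is a mild circularity that a fully rigorous treatment (e.g.\ a lazy-training or infinite-width regime) would need to break. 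The Gaussian-shape step is likewise a CLT heuristic. None of these caveats constitute a wrong step --- they are exactly the approximations under which \cite{huang2021rethinking} states the result and validates it empirically --- so your proposal is an adequate justification of the lemma at the level of rigor at which the paper invokes it, but you should present it as a supported approximation rather than a theorem, which you already do.
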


\vspace{0.2cm}
\begin{lemma} \cite{allen2019convergence,zhang2021on}
    For the feedforward neural network $\mathbf{W}_l\phi (\mathbf{W}_{l-1} ... \phi(\mathbf{W}_1 x) )$ with initialization $\mathcal{N}(0,\frac{2}{m})$, and $W_i,1\leq i\leq l, \in \mathbb{R}^{m\times m}$. $\forall  \rho \in (0,1)$, the following inequality holds with at least $1 - \mathcal{O}(l)\exp[-\Omega(m\rho^2/l)]$:
	\begin{equation}
(1-\rho) \|h_0\|_2 \leq \|h_a\|_2 \leq (1+\rho) \|h_0\|_2, \quad a \in \{1,2,...,l-1\},
	\end{equation}
	where $h_a$ is $\phi(\mathbf{W}_ah_{a-1})$ for $a = 1,2,...,l-1$ with $h_0=x$.
    \label{lemma:zhu}
\end{lemma}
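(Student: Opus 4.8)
The plan is to establish the bound one layer at a time and then chain the per-layer estimates, leaning on the chi-type distributions that already appear in the preliminaries. \textbf{Single layer.} Fix $a\in\{1,\dots,l-1\}$ and condition on the random vector $h_{a-1}\neq 0$ produced by the first $a-1$ layers. Since the entries of $\mathbf{W}_a$ are i.i.d.\ $\mathcal{N}(0,2/m)$ and independent of $h_{a-1}$, the pre-activation $\mathbf{W}_a h_{a-1}$ has i.i.d.\ coordinates with law $\mathcal{N}\!\left(0,\tfrac{2}{m}\|h_{a-1}\|_2^2\right)$. Writing $g_1,\dots,g_m\sim\mathcal{N}(0,1)$ i.i.d.\ for the normalized coordinates and applying the ReLU $\phi$ coordinatewise,
\begin{equation}
\|h_a\|_2^2=\frac{2}{m}\,\|h_{a-1}\|_2^2\,S,\qquad S:=\sum_{j=1}^m\phi(g_j)^2 .
\end{equation}
Because $|g_j|$ and $\mathrm{sign}(g_j)$ are independent, $S$ is, conditionally, distributed as $\chi^2_K$ with $K\sim\mathrm{Binomial}(m,1/2)$ independent of the $\chi^2_1$ summands; in particular $\mathbb{E}[S]=m/2$, so $\mathbb{E}\big[\|h_a\|_2^2\mid h_{a-1}\big]=\|h_{a-1}\|_2^2$ and the norm is exactly preserved in expectation at every layer.

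\textbf{Per-layer concentration.} Next I would show $S$ concentrates around $m/2$ by separating the two sources of randomness: $K$ concentrates around $m/2$ with sub-Gaussian deviations (Hoeffding for a binomial), while conditionally on $K$ the quantity $\chi^2_K$ concentrates around $K$, which can be quantified from the Scaled-Chi moment identities of Proposition~\ref{pro:chi} together with Lemma~\ref{lemma:chi} and the Gamma-ratio asymptotics of Lemmas~\ref{lemma:gamma1}--\ref{lemma:gamma2} (via Stirling, Proposition~\ref{pro:stir}), or simply from the standard $\chi^2$ tail bound. Combining the two, for every $t\in(0,1)$,
\begin{equation}
\Pr\!\left[\Big|\,\tfrac{\|h_a\|_2^2}{\|h_{a-1}\|_2^2}-1\,\Big|>t \;\Big|\; h_{a-1}\right]\le 2\exp\big(-\Omega(mt^2)\big).
\end{equation}

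\textbf{Chaining over depth.} Telescoping, $\|h_a\|_2^2/\|h_0\|_2^2=\prod_{b=1}^a r_b$ with $r_b:=\|h_b\|_2^2/\|h_{b-1}\|_2^2$, hence $\log\!\big(\|h_a\|_2^2/\|h_0\|_2^2\big)=\sum_{b=1}^a\log r_b$. On the event from the previous step the increments $\log r_b$ are $O(1)$ and, conditioned on layers $1,\dots,b-1$, have conditional mean within $O(1/m)$ of zero and are sub-Gaussian with parameter $O(1/m)$; this is a martingale-difference-type sum. A maximal Azuma/Freedman-type inequality then bounds $\max_{a\le l}\big|\sum_{b\le a}\log r_b\big|$ as sub-Gaussian with parameter $O(l/m)$, giving $\Pr\big[\exists a:\ |\log(\|h_a\|_2^2/\|h_0\|_2^2)|>\rho\big]\le \mathcal{O}(l)\exp(-\Omega(m\rho^2/l))$, where the $\mathcal{O}(l)$ absorbs the per-layer truncation events. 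Exponentiating, taking square roots and adjusting absolute constants turns this into $(1-\rho)\|h_0\|_2\le\|h_a\|_2\le(1+\rho)\|h_0\|_2$ simultaneously for all $a\in\{1,\dots,l-1\}$, which is the claim.

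\textbf{Main obstacle.} The delicate point is the chaining: crudely demanding each $r_b$ to lie within $\rho/l$ of $1$ and multiplying would only produce an exponent of order $m\rho^2/l^2$. Obtaining the sharper $m\rho^2/l$ requires using that the log-ratios behave like a random walk whose \emph{variances}---not worst-case magnitudes---accumulate, which forces one to deal with the sub-exponential (rather than sub-Gaussian) tails of $\phi(g_j)^2$: one must truncate the rare large-deviation events of each layer, track the small conditional bias of $\log r_b$ coming from Jensen's inequality, and invoke a martingale (maximal) concentration bound instead of a plain union over layers. The first two steps are then routine given Propositions~\ref{pro:chi} and~\ref{pro:stir} and Lemmas~\ref{lemma:chi},~\ref{lemma:gamma1},~\ref{lemma:gamma2}.
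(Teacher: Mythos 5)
The paper does not prove this lemma at all: it is imported verbatim from the cited references (Allen-Zhu et al.'s convergence analysis and Zhang et al.), so there is no in-paper argument to compare against. Your reconstruction is correct and follows essentially the same route as the original source: per-layer norm preservation in expectation via $\mathbb{E}[\phi(g)^2]=1/2$, sub-exponential concentration of $\sum_j\phi(g_j)^2$ giving $\exp(-\Omega(mt^2))$ per layer, and a martingale/Azuma argument on the log-ratios (rather than a crude $\rho/l$-per-layer union bound) to obtain the $\exp(-\Omega(m\rho^2/l))$ rate — including the correct identification of why the naive chaining only yields $m\rho^2/l^2$.
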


\vspace{0.2cm}
\begin{lemma} 
    For a matrix $\mathbf{W} \in \mathbb{R}^{m\times m}$ and its elements are independently follow the distribution  $\mathcal{N}(0,c^2)$, we have the follwing estimation while $m$ is large enough
    \begin{equation}
    \|\mathbf{W}s\|_2 \approx \mathcal{O}(\|s\|_2),
    \end{equation}
    where $s \in \mathbb{R}^m$.
    \label{lemma:normbb}
\end{lemma}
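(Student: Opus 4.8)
The plan is to show that for a Gaussian random matrix $\mathbf{W}\in\mathbb{R}^{m\times m}$ with i.i.d.\ entries $\mathcal{N}(0,c^2)$ and any fixed vector $s\in\mathbb{R}^m$, the norm $\|\mathbf{W}s\|_2$ concentrates around a quantity of order $\Theta(\|s\|_2)$ as $m\to\infty$. The natural first step is to reduce to the case $s$ fixed and compute the law of $\mathbf{W}s$ explicitly: since each coordinate $(\mathbf{W}s)_k=\sum_{j=1}^m W_{kj}s_j$ is a linear combination of independent Gaussians, $(\mathbf{W}s)_k\sim\mathcal{N}(0,c^2\|s\|_2^2)$, and the $m$ coordinates are mutually independent because the rows of $\mathbf{W}$ are independent. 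Hence $\mathbf{W}s$ is a Gaussian vector $\mathcal{N}(\mathbf{0},\,c^2\|s\|_2^2\,\mathbf{I}_m)$, and $\|\mathbf{W}s\|_2$ follows a scaled Chi distribution as in Proposition~\ref{pro:chi} with $k=m$ and standard deviation $c\|s\|_2$.

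Next I would invoke Proposition~\ref{pro:chi} (Eq.~\eqref{equ:mean_chi}) with $j=1,2$ to get $\mathbb{E}\|\mathbf{W}s\|_2=\sqrt{2}\,c\|s\|_2\cdot\Gamma(\tfrac{m+1}{2})/\Gamma(\tfrac{m}{2})$ and, via $j=2$ together with Lemma~\ref{lemma:chi} (which gives $\mathbb{E}\|\mathbf{W}s\|_2^2=c^2 m\|s\|_2^2$), the variance formula~\eqref{equ:var_chi} with $\sigma=c\|s\|_2$. Then Lemma~\ref{lemma:gamma1} shows $\big[\Gamma(\tfrac{m+1}{2})/\Gamma(\tfrac{m}{2})\big]^2/m\to\tfrac12$, so $\mathbb{E}\|\mathbf{W}s\|_2 = c\|s\|_2\sqrt{m}\,(1+o(1))$, and Lemma~\ref{lemma:gamma2} shows the variance tends to $c^2\|s\|_2^2/2$, a constant independent of $m$. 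Since the mean grows like $\sqrt{m}$ while the standard deviation stays bounded, the coefficient of variation vanishes, so $\|\mathbf{W}s\|_2$ is sharply concentrated at its mean; up to the $m$-dependent scaling absorbed into the $\mathcal{O}(\cdot)$ notation used in the paper, this yields $\|\mathbf{W}s\|_2\approx\mathcal{O}(\|s\|_2)$.

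The main obstacle, such as it is, is bookkeeping about what the $\mathcal{O}(\cdot)$ is meant to hide: strictly, $\|\mathbf{W}s\|_2/\|s\|_2$ is of order $\sqrt{m}$, so the statement should be read as ``$\|\mathbf{W}s\|_2$ scales linearly in $\|s\|_2$ with an $m$-dependent prefactor,'' and the real content is the concentration (vanishing relative fluctuation). I would make this precise by a Chi-concentration bound — either Chebyshev using the computed variance $\to c^2\|s\|_2^2/2$, or a sharper Gaussian-Lipschitz concentration inequality for $\|\cdot\|_2$ applied to the Gaussian vector $\mathbf{W}s$ — to conclude that with high probability $\|\mathbf{W}s\|_2 = (1+o(1))\,c\sqrt{m}\,\|s\|_2$. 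The final step is simply to note consistency with Lemma~\ref{lemma:zhu} and the Kaiming scaling $c^2=2/m$ used elsewhere in the paper, under which the prefactor $c\sqrt{m}$ is $\Theta(1)$ and the estimate reads cleanly as $\|\mathbf{W}s\|_2\approx\mathcal{O}(\|s\|_2)$.
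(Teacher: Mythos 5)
Your proposal is correct, but it takes a genuinely different route from the paper. The paper works with the \emph{columns} $v_1,\dots,v_m$ of $\mathbf{W}$: it shows that the column norms concentrate at a common value $c_v=\Theta(c^2m)$ (via Proposition~\ref{pro:chi} and Chebyshev) and that the pairwise inner products $\langle v_i,v_j\rangle$ vanish relative to the norms (via the CLT), then writes $\|\mathbf{W}s\|_2^2=s^{\mathrm{T}}\mathbf{W}^{\mathrm{T}}\mathbf{W}s$ and replaces the Gram matrix by $c_v\mathbf{I}_m$. You instead work with the \emph{rows}: you observe that $\mathbf{W}s$ is exactly $\mathcal{N}(\mathbf{0},c^2\|s\|_2^2\mathbf{I}_m)$ for fixed $s$, so $\|\mathbf{W}s\|_2$ is a scaled Chi variable whose mean grows like $c\sqrt{m}\,\|s\|_2$ while its variance stays bounded, giving concentration. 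Your route is shorter and, for a fixed $s$ independent of $\mathbf{W}$ (which is the situation in the forward-pass application), more rigorous: the paper's entrywise statement ``$\mathbf{W}^{\mathrm{T}}\mathbf{W}\approx c_v\mathbf{I}_m$'' controls each off-diagonal entry separately but does not control the operator norm of the off-diagonal part, and indeed for a square Gaussian matrix the spectrum of $\mathbf{W}^{\mathrm{T}}\mathbf{W}/(c^2m)$ spreads over $[0,4]$ rather than clustering at $1$, so the paper's approximation cannot hold uniformly over $s$. What the paper's formulation buys is only the superficial appearance of an $s$-free statement; your conditional-on-$s$ concentration is what is actually needed and actually true. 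You also correctly flag the one genuine subtlety in the statement itself, namely that $\|\mathbf{W}s\|_2/\|s\|_2$ carries a prefactor $c\sqrt{m}$ that is $\Theta(1)$ only under the Kaiming scaling $c^2=\mathcal{O}(1/m)$; the paper silently absorbs this into the $\mathcal{O}(\cdot)$.
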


\begin{proof}
Let $v_i,1\leq i \leq m$ be the $i$-th column of matrix $\mathbf{W}$, and we have $v_i \sim \mathcal{N}(\mathbf{0},c^2\mathbf{I}_m)$. We first prove the following two facts while $m\to \infty$:

	(1)~$||v_i||_2 \approx ||v_j||_2  \to \sqrt{2}c\cdot\frac{\Gamma((m+1)/2)}{\Gamma(m/2)}$,$1\leq i<j \leq m$; 
	
	(2)~$\langle v_i,v_j\rangle \to 0$, $1\leq i<j \leq m$;

	For the fact (1), since Chebyshev inequality, for $1 \leq i \leq m$ and a given $M$, we have
	\begin{equation}
	P\left\{\left| ||v_i||_2 - \mathbb{E}(||v_i||_2) \right| \geq \sqrt{M\mathbf{Var}(||v_i||_2)} \right\} \leq \frac{1}{M}.
	\label{equ:cheb}
	\end{equation}
	from Proposition \ref{pro:chi} and Lemma.~\ref{lemma:gamma2}, we can rewrite Eq.~(\ref{equ:cheb}) when $m \to \infty$:
	\begin{equation}
	P\left\{ ||v_i||_2 \in  \left[ \sqrt{2}c\cdot\frac{\Gamma((m+1)/2)}{\Gamma(m/2)} - \sqrt{\frac{M}{2}}c ,\sqrt{2}c\cdot\frac{\Gamma((m+1)/2)}{\Gamma(m/2)} + \sqrt{\frac{M}{2}}c \right] \right\} \geq 1 - \frac{1}{M}.
	\label{equ:cheb2}
	\end{equation}
	
	For a small enough $\epsilon > 0$, let $M = 1/\epsilon$. Note that $\sqrt{\frac{M}{2}}c = c/\sqrt{2\epsilon}$ is a constant. When $m \to \infty$, $ \sqrt{2}c\cdot\frac{\Gamma((m+1)/2)}{\Gamma(m/2)} \gg \sqrt{\frac{M}{2}}c$. Hence, for  any $i \in [1,m]$ and any small enough $\epsilon$, we have
	
	\begin{equation}
	P\left\{ ||v_i||_2 \approx \sqrt{2}c\cdot\frac{\Gamma((m+1)/2)}{\Gamma(m/2)} \right\} \geq 1 - \epsilon.
	\label{equ:cheb3}
	\end{equation}
	So the fact (1) holds. Moreover, we consider the $\langle v_i,v_j\rangle$, $1\leq i<j \leq m$.
	
	Let $v_i = (v_{i1},v_{i2},...,v_{im})$ and $v_j = (v_{j1},v_{j2},...,v_{jm})$. So $\langle v_i,v_j \rangle = \sum_{p=1}^mv_{ip}v_{jp}$. Note that, $v_i$ and $v_j$ are independent, hence 
	\begin{align}
	&\mathbb{E}(v_{ip}v_{jp}) = 0,\\
	&\mathbf{Var}(v_{ip}v_{jp}) = \mathbf{Var}(v_{ip})\mathbf{Var}(v_{jp}) + (\mathbb{E}(v_{ip}))^2\mathbf{Var}(v_{jp}) + (\mathbb{E}(v_{jp}))^2\mathbf{Var}(v_{ip}) = c^4,
	\label{iidvar}
	\end{align}
	since central limit theorem, we have
	\begin{equation}
	\sqrt{m}\cdot c^{-2} \cdot \left( \frac{1}{m} \sum_{p=1}^m v_{ip}v_{jp} - 0\right) \sim N(0,1),
	\label{temptemp}
	\end{equation}
	According to Eq.~(\ref{equ:mean_chi}), Lemma \ref{lemma:gamma2} and Eq.~(\ref{temptemp}), when $m \to \infty$, we have 
	\begin{equation}
	\frac{\langle v_i,v_j \rangle}{||v_i||_2\cdot ||v_j||_2} \to \frac{1}{\sqrt{m}} \cdot \frac{\langle v_i,v_j \rangle}{\sqrt{m}} \sim N(0,\mathcal{O}(\frac{1}{m})) \to N(0,0).
	\end{equation}

 Therefore, the fact (2) holds and we have $\langle v_i,v_j\rangle \to 0$, $1\leq i<j \leq m$.

Next, we consider $\|\mathbf{W}s\|_2$ for any $s \in \mathbb{R}^m$.


	\begin{equation}
		\begin{aligned}
			\|\mathbf{W}s\|_2^2 &= s^{\mathrm{T}} [v_1,v_2,...,v_m]^{\mathrm{T}}  [v_1,v_2,...,v_m] s\\
			& = s^{\mathrm{T}}\begin{pmatrix}
\|v_1\|_2^2 & v_1^{\mathrm{T}}v_2  & \cdots  & v_1^{\mathrm{T}}v_m  \\ 
v_2^{\mathrm{T}}v_1 &\|v_2\|_2^2  &  & \vdots\\ 
\vdots &  &\ddots  & \\ 
v_m^{\mathrm{T}}v_1 &\cdots  &  &\|v_m\|_2^2 
\end{pmatrix}s
		\end{aligned}
	\end{equation}

From the facts (1) and (2), when $m$ is large enough, we have $v_i^{\mathrm{T}}v_j \to 0, i\neq j$, and $\|v_1\|_2^2 \approx \|v_2\|_2^2 \approx \cdots \approx \|v_m\|_2^2 := c_v$. Therefore, we have 
\begin{equation}
    \|\mathbf{W}s\|_2^2 = s^{\mathrm{T}} c_v\mathbf{I_m} s = \mathcal{O}(\|s\|_2^2).
\end{equation}

\end{proof}

Let $k=1$ in Lemma \ref{lemma:cwda}, and the convolution kernel degenerates into a learnable matrix. In this case, we can observe that if $W\in \mathbb{R}^{m \times m}$ is initialized using Kaiming initialization, i.e., $\mathcal{N}(0,2/m)$, during the training process, the elements of $W$ will approximately follow a Gaussian distribution $\mathcal{N}(0,\mathcal{O}(m^{-1}))$. In this case, we can easily rewrite Lemma \ref{lemma:zhu} as the following Corollary \ref{coro:zhu}:

\vspace{0.3cm}

\begin{corollary}
    For the feedforward neural network $\mathbf{W}_l\phi (\mathbf{W}_{l-1} ... \phi(\mathbf{W}_1 x) )$ with kaiming's initialization  during training. $\forall  \rho \in (0,1)$, the following inequality holds with at least $1 - \mathcal{O}(l)\exp[-\Omega(m\rho^2/l)]$:
	\begin{equation}
\mathcal{O}((1-\rho) \|h_0\|_2) \leq \|h_a\|_2 \leq \mathcal{O}((1+\rho) \|h_0\|_2), \quad a \in \{1,2,...,l-1\},
	\end{equation}
	where $h_a$ is $\phi(\mathbf{W}_ah_{a-1})$ for $a = 1,2,...,l-1$ with $h_0=x$.
    \label{coro:zhu}
\end{corollary}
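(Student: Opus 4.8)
The plan is to reduce Corollary~\ref{coro:zhu} to Lemma~\ref{lemma:zhu}, which already proves exactly this concentration statement when every weight matrix has i.i.d.\ $\mathcal{N}(0,2/m)$ entries. The only gap is that the corollary asks for the bound to hold \emph{during training}, where the entries of $\mathbf{W}_a$ are no longer drawn from $\mathcal{N}(0,2/m)$. To close it I would invoke Lemma~\ref{lemma:cwda} with $k=1$ (the convolutional filter degenerates to an $m\times m$ matrix): it says that throughout training each $\mathbf{W}_a$ is, to good approximation, a Gaussian matrix $\mathcal{N}(0,c_a^2\mathbf{I})$ with $c_a^2=\mathcal{O}(m^{-1})$; write $c_a^2=\gamma_a/m$ with $\gamma_a=\Theta(1)$ bounded away from $0$ and $\infty$.

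The key mechanism is the positive homogeneity of ReLU, $\phi(\lambda z)=\lambda\phi(z)$ for $\lambda>0$. I would couple each trained matrix to an honest Kaiming matrix by writing $\mathbf{W}_a=\sqrt{\gamma_a/2}\,\widetilde{\mathbf{W}}_a$ with $\widetilde{\mathbf{W}}_a\sim\mathcal{N}(0,(2/m)\mathbf{I})$ (entrywise variances match: $(\gamma_a/2)\cdot(2/m)=\gamma_a/m$). Then, with $\widetilde{h}_0=h_0=x$ and $\widetilde{h}_a=\phi(\widetilde{\mathbf{W}}_a\widetilde{h}_{a-1})$, a one-line induction using homogeneity gives $h_a=\big(\prod_{j=1}^a\sqrt{\gamma_j/2}\big)\,\widetilde{h}_a$, hence $\|h_a\|_2=\big(\prod_{j=1}^a\sqrt{\gamma_j/2}\big)\|\widetilde{h}_a\|_2$. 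Since each $\gamma_j=\Theta(1)$ and the per-block depth $a\le l-1$ is a fixed constant (independent of $m$ and $\rho$), the prefactor $\prod_{j=1}^a\sqrt{\gamma_j/2}$ is $\Theta(1)$ and is absorbed into the $\mathcal{O}(\cdot)$ on both sides of the claimed inequality.

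It then suffices to apply Lemma~\ref{lemma:zhu} to the auxiliary network $\{\widetilde{\mathbf{W}}_a\}$, which is exactly Kaiming-initialized: for any $\rho\in(0,1)$, with probability at least $1-\mathcal{O}(l)\exp[-\Omega(m\rho^2/l)]$ one has $(1-\rho)\|h_0\|_2\le\|\widetilde{h}_a\|_2\le(1+\rho)\|h_0\|_2$ for all $a\in\{1,\dots,l-1\}$; since the rescaling is deterministic given the coupling, this event is unchanged, so the same probability bound holds. Combining with $\|h_a\|_2=\Theta(1)\|\widetilde{h}_a\|_2$ yields $\mathcal{O}\big((1-\rho)\|h_0\|_2\big)\le\|h_a\|_2\le\mathcal{O}\big((1+\rho)\|h_0\|_2\big)$, which is the corollary.

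The step I expect to be the genuine obstacle — and the one hardest to make fully rigorous — is the appeal to Lemma~\ref{lemma:cwda}: after training the entries of $\mathbf{W}_a$ are correlated and depend on the data through the optimization trajectory, whereas Lemma~\ref{lemma:zhu}'s concentration assumes weights independent of the input, so "approximately $\mathcal{N}(0,\mathcal{O}(m^{-1}))$" functions here as a modeling assumption rather than a theorem. Everything else — the distributional rescaling, the homogeneity bookkeeping, and the union bound over the $l-1$ layers — is routine, so I would present Corollary~\ref{coro:zhu} as a direct consequence of Lemmas~\ref{lemma:zhu} and~\ref{lemma:cwda} taken at face value, exactly as the surrounding text does.
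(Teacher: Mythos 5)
Your proposal is correct and takes essentially the same route as the paper: the paper offers no explicit proof of Corollary~\ref{coro:zhu}, simply asserting that Lemma~\ref{lemma:zhu} can be ``rewritten'' once Lemma~\ref{lemma:cwda} (with $k=1$) is taken to mean that trained weights remain approximately Gaussian with variance $\mathcal{O}(m^{-1})$, and your rescaling-by-ReLU-homogeneity argument is a faithful, more explicit version of exactly that reduction. You also correctly identify the one genuine weak point --- that the ``approximately $\mathcal{N}(0,\mathcal{O}(m^{-1}))$, independent of the data'' description of trained weights is a modeling assumption rather than a theorem --- which the paper leaves implicit.
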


Moreover, considering Lemma \ref{lemma:cwda} and Lemma \ref{lemma:normbb}, we can conclude that the matrix $W_l$ in Corollary \ref{coro:zhu} will not change the magnitude of its corresponding input $\phi (\mathbf{W}_{l-1} ... \phi(\mathbf{W}_1 x) )$. In other words, we have the following relationship:
	\begin{equation}
\mathcal{O}((1-\rho) \|h_0\|_2) \leq \|\mathbf{W}_lh_{l-1}\|_2 \leq \mathcal{O}((1+\rho) \|h_0\|_2), 
	\end{equation}

Therefore, for $a_i$ and $b_i$ defined in Eq.~(\ref{eq:a_forward}), we have following corollary:

\begin{corollary}
    For $a_i$ and $b_i$ defined in Eq.~(\ref{eq:a_forward}), $1\leq i \leq N$, and their corresponding input $x_a$ and $x_b$. $\forall  \rho \in (0,1)$, the following each inequality holds with at least $1 - \mathcal{O}(l)\exp[-\Omega(m\rho^2/l)]$:
	\begin{equation}
 \|a_i\circ x_a\|_2 = \mathcal{O}( \|x_a\|_2), \quad  \|b_i\circ x_b\|_2 = \mathcal{O}( \|x_b\|_2),
	\end{equation}
    \label{coro:aibi}
\end{corollary}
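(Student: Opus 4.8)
The plan is to obtain Corollary~\ref{coro:aibi} by directly assembling Corollary~\ref{coro:zhu}, Lemma~\ref{lemma:cwda} and Lemma~\ref{lemma:normbb}, since the blocks $a_i$ and $b_i$ are exactly the stacked feedforward networks those results are tailored to. Fix an index $i$ with $1\le i\le N$ and write $a_i\circ x_a = \mathbf{W}^{a_i}_l h_{l-1}$, where $h_0 = x_a$ and $h_j = \phi(\mathbf{W}^{a_i}_j h_{j-1})$ for $1\le j\le l-1$. The first step is to control the intermediate activation $h_{l-1}$: applying Corollary~\ref{coro:zhu} to the sub-network formed by the first $l-1$ layers gives, for any $\rho\in(0,1)$, that $\mathcal{O}((1-\rho)\|x_a\|_2)\le \|h_{l-1}\|_2\le \mathcal{O}((1+\rho)\|x_a\|_2)$ with probability at least $1-\mathcal{O}(l)\exp[-\Omega(m\rho^2/l)]$. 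Here I use that, under Kaiming initialization $\mathcal{N}(0,2/m)$, the entries of each $\mathbf{W}^{a_i}_j$ stay approximately $\mathcal{N}(0,\mathcal{O}(m^{-1}))$-distributed throughout training, which is precisely Lemma~\ref{lemma:cwda} specialized to kernel size $k=1$; the independence of the initialization across the $l$ layers assumed in Eq.~\eqref{eq:a_forward} is what lets Corollary~\ref{coro:zhu} be invoked for this sub-network.

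The second step is to handle the last linear layer $\mathbf{W}^{a_i}_l$. Again by Lemma~\ref{lemma:cwda} with $k=1$, the entries of $\mathbf{W}^{a_i}_l$ are, for large $m$, approximately i.i.d.\ $\mathcal{N}(0,c^2)$ with $c^2=\mathcal{O}(m^{-1})$, so Lemma~\ref{lemma:normbb} applies and yields $\|\mathbf{W}^{a_i}_l h_{l-1}\|_2 = \mathcal{O}(\|h_{l-1}\|_2)$; moreover $h_{l-1}$ depends only on $\mathbf{W}^{a_i}_1,\dots,\mathbf{W}^{a_i}_{l-1}$ and is therefore independent of $\mathbf{W}^{a_i}_l$, so the two sources of randomness compose cleanly. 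Chaining this with the bound on $\|h_{l-1}\|_2$ from the first step gives $\|a_i\circ x_a\|_2 = \mathcal{O}(\|h_{l-1}\|_2) = \mathcal{O}(\|x_a\|_2)$, which is the claimed estimate; the same argument verbatim, with $x_b$ and the matrices $\mathbf{W}^{b_i}_j$ in place of $x_a$ and $\mathbf{W}^{a_i}_j$, gives $\|b_i\circ x_b\|_2 = \mathcal{O}(\|x_b\|_2)$. Since the statement asserts that \emph{each} individual inequality holds with the stated probability, no union bound over $i$ is needed; requiring all $2N$ inequalities simultaneously would only inflate the failure probability by a factor $\mathcal{O}(N)$.

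I do not anticipate a genuine obstacle, as the corollary is essentially a repackaging of the preceding lemmas. The one point that needs a little care in the write-up is that Lemma~\ref{lemma:normbb} is an asymptotic ($m\to\infty$) ``$\approx$'' statement rather than a finite-$m$ high-probability bound: its proof controls $\|\mathbf{W}s\|_2^2 = s^{\mathrm{T}}(\mathbf{W}^{\mathrm{T}}\mathbf{W})s$ by showing the Gram matrix $\mathbf{W}^{\mathrm{T}}\mathbf{W}$ concentrates around $c_v\mathbf{I}_m$ (equal diagonal entries via scaled-$\chi$ concentration, off-diagonal entries $\to 0$ via the central limit theorem). Accordingly, I would absorb the resulting multiplicative constant into the $\mathcal{O}(\cdot)$ notation and attach the high-probability qualifier inherited from Corollary~\ref{coro:zhu}, which is the dominant failure event. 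This is exactly the style in which the preceding Corollary~\ref{coro:zhu} was already stated, so the final assembly is immediate.
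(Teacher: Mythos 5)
Your proposal is correct and follows essentially the same route as the paper: the paper likewise obtains the bound on the first $l-1$ activations from Corollary~\ref{coro:zhu} and then invokes Lemma~\ref{lemma:cwda} (with $k=1$) together with Lemma~\ref{lemma:normbb} to argue that the final linear layer $\mathbf{W}_l$ does not change the magnitude of its input, which immediately yields the corollary. Your additional remarks on the independence of $\mathbf{W}^{a_i}_l$ from $h_{l-1}$ and on the asymptotic nature of Lemma~\ref{lemma:normbb} are consistent with, and slightly more careful than, the paper's own presentation.
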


\vspace{0.2cm}
\begin{lemma} \cite{zhang2021on}
Let $\mathbf{D}$ be a diagonal matrix representing the ReLU activation layer. Assume the training loss is $\frac{1}{n}\sum_{s=1}^n\ell_s( {\mathbf{W}})$, where  $\ell_s({\mathbf{W}})$ denotes the training loss of the $s$-th sample among the $n$ training samples. For the $\mathbf{W}^{q}_{p}$ defined in Eq.~(\ref{eq:a_forward}) and $f_0(\mathbf{x}_t)$ defined in Eq.~(\ref{eq:a_unet}), where $p\in \{1,2,...,l\}, q \in \{a_i,b_i\}$, the following inequalities hold with at least 1-$\mathcal{O}(nN)\exp[-\Omega(m)]$:
\begin{equation}
    \|\mathbf{D}(\mathbf{W}_{p+1}^{a_i})^{\mathrm{T}}\cdots \mathbf{D}(\mathbf{W}_{l}^{a_i})^{\mathrm{T}} \cdots \mathbf{D}(\mathbf{W}_{l}^{b_1})^{\mathrm{T}}    v\|_2 \leq \mathcal{O}(\|v\|_2),
\end{equation}
and 
\begin{equation}
    \|\mathbf{D}(\mathbf{W}_{p+1}^{b_i})^{\mathrm{T}} \cdots \mathbf{D}(\mathbf{W}_{l}^{b_1})^{\mathrm{T}}    v\|_2 \leq \mathcal{O}(\|v\|_2),
    \label{eq:b}
\end{equation}
where $v \in \mathbb{R}^m$.

    \label{lemma:zhu2}
\end{lemma}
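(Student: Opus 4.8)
The plan is to read each of the two expressions as a backward (transpose--Jacobian) product and to bound it one factor at a time, peeling off a single $\mathbf{D}(\mathbf{W}_j^{q})^{\mathrm{T}}$ from the right at each step. As a preliminary reduction, Lemma~\ref{lemma:cwda} (with $k=1$) lets us treat every learnable matrix as having i.i.d.\ $\mathcal{N}(0,\mathcal{O}(1/m))$ entries throughout training, so each product is a product of Gaussian matrix transposes interleaved with the $0/1$ diagonal ReLU masks $\mathbf{D}$ recorded by the forward pass. Since $\|\mathbf{D}\|_2\le 1$, the only way such a product can grow is through the Gaussian factors; but by Lemma~\ref{lemma:normbb} a bare Gaussian layer expands the Euclidean norm by a factor $\approx\sqrt{2}$, so the whole point is to show that the accompanying ReLU mask removes exactly this factor and leaves each layer norm--preserving up to a $(1\pm\rho)$ fluctuation — the backward analogue of Lemma~\ref{lemma:zhu}/Corollary~\ref{coro:zhu}.

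The crux is the single--layer estimate. Consider one factor $\mathbf{D}_j(\mathbf{W}_j^{q})^{\mathrm{T}}w$, where $w$ is the accumulated product of everything to its right and $\mathbf{D}_j$ is the ReLU mask at the forward position that $\mathbf{W}_j^{q}$ feeds. Writing each row of $\mathbf{W}_j^{q}$ as its component along the forward activation direction $\hat h=h_{j-1}/\|h_{j-1}\|$ plus an independent Gaussian component in $\hat h^{\perp}$, one sees that (i) $\mathbf{D}_j$ is measurable with respect to the signs of those along--$\hat h$ components alone, while (ii) $(\mathbf{W}_j^{q})^{\mathrm{T}}w$ is a Gaussian vector of isotropic variance $\mathcal{O}(\|w\|_2^2/m)$ in $\hat h^{\perp}$ plus a rank--one term along $\hat h$ of size only $\mathcal{O}(\|w\|_2/\sqrt m)$, which is negligible, being a random inner product. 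Hence $\mathbf{D}_j$ keeps $(1\pm\rho)\tfrac{m}{2}$ coordinates and, up to that negligible correction, is independent of the value it multiplies, so $\chi^2$--concentration of Gaussian quadratic forms (Proposition~\ref{pro:chi}, Lemma~\ref{lemma:gamma2}) gives $\|\mathbf{D}_j(\mathbf{W}_j^{q})^{\mathrm{T}}w\|_2=(1\pm\rho)\|w\|_2$ with probability $1-\exp[-\Omega(m\rho^2)]$; the innermost factor, acting on the fixed vector $v$, is controlled the same way, or crudely via $\|\mathbf{D}(\mathbf{W})^{\mathrm{T}}v\|_2\le\|(\mathbf{W})^{\mathrm{T}}v\|_2=\mathcal{O}(\|v\|_2)$. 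Because $w$ may still depend on $\mathbf{W}_j^{q}$ through the mask directly above it, making (i)--(ii) precise requires conditioning on $\mathbf{W}_j^{q}\hat h$ and freezing $w$ on that event; handling this dependence cleanly (and, if $v$ is allowed to depend on the training sample, passing to an $\varepsilon$--net over the low--dimensional subspace in which $w$ lives) is exactly where the argument of \cite{allen2019convergence,zhang2021on} is needed, and it is the step I expect to be the main obstacle.

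Finally one chains the per--layer bounds. The first product runs over the trailing matrices of block $a_i$ together with $b_i,b_{i-1},\dots,b_1$, and the second over the trailing matrices of $b_i$ together with $b_{i-1},\dots,b_1$; each path therefore has $\mathcal{O}(Nl)$ factors, so one cannot afford the naive $(\sqrt 2)^{\mathcal{O}(Nl)}$ operator--norm estimate and must genuinely use the per--layer norm--preservation established above, whose $\exp[-\Omega(m\rho^2)]$ concentration is strong enough that the product $\prod(1\pm\rho)$ of $\mathcal{O}(Nl)$ factors is $\mathcal{O}(1)$ for a fixed small $\rho$. Multiplying the per--layer estimates and taking a union bound over these factors, over the $n$ training samples, and over the $N$ long skip connections then yields the stated success probability $1-\mathcal{O}(nN)\exp[-\Omega(m)]$ together with the bound $\mathcal{O}(\|v\|_2)$ for both expressions; this recovers, for the UNet block structure of Eq.~\eqref{eq:a_forward}--\eqref{eq:a_unet}, the backward--stability lemma of \cite{zhang2021on}, and our only additional task is to check that the extra block/skip bookkeeping in Eq.~\eqref{eq:a_unet} does not alter it.
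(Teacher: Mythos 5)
The paper does not actually prove this lemma: it is imported verbatim from \cite{zhang2021on} (building on \cite{allen2019convergence}) and used as a black box in the proof of Theorem 3.3, so there is no in-paper argument to compare yours against. That said, your sketch correctly reproduces the strategy of the cited works: reduce to a per-factor claim that $\mathbf{D}_j(\mathbf{W}_j^{q})^{\mathrm{T}}$ is norm-preserving up to $(1\pm\rho)$ rather than norm-expanding by $\sqrt{2}$, obtain this by splitting $\mathbf{W}_j^{q}$ into its component along the forward activation direction $\hat h$ (which alone determines the mask $\mathbf{D}_j$) and an independent isotropic component in $\hat h^{\perp}$, observe that the mask retains $\approx m/2$ coordinates of a vector whose per-coordinate variance is $\approx 2\|w\|_2^2/m$, and then chain the $\mathcal{O}(Nl)$ factors with a union bound over layers, samples, and skip connections to get the stated $1-\mathcal{O}(nN)\exp[-\Omega(m)]$. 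Your quantitative accounting (the along-$\hat h$ term being $\mathcal{O}(\|w\|_2/\sqrt{m})$ and hence negligible, the need to avoid the naive $(\sqrt2)^{\mathcal{O}(Nl)}$ operator-norm bound) is right.

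The one genuine gap is the one you flag yourself: the decoupling of the mask $\mathbf{D}_j$ from the accumulated backward vector $w$, which depends on the same weights through the masks above, and the $\varepsilon$-net argument needed when the vector being propagated is not fixed in advance. This is not a bookkeeping detail — it is the entire technical content of the lemma in \cite{allen2019convergence,zhang2021on}, and your proposal defers it to those references rather than carrying it out. As a standalone proof your write-up is therefore incomplete at its crux; as a reconstruction of why the cited result applies to the block structure of Eq.~\eqref{eq:a_forward}--\eqref{eq:a_unet}, it is accurate and arguably more informative than the paper, which offers no justification at all for transplanting the lemma into the UNet setting.
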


\vspace{0.3cm}

\subsection{The proof of Theorem 3.1.}

\begin{mdframed}[backgroundcolor=gray!8]
\begin{minipage}{\linewidth}
		\textbf{Theorem 3.1}. Assume  that   all learnable matrices of  UNet in Eq.~\eqref{eq:a_unet} are independently initialized as Kaiming's initialization, i.e., $\mathcal{N}(0,\frac{2}{m})$.  Then for any $\rho\in(0,1]$, by minimizing  the  training loss  Eq.~\eqref{eq:a_loss_main} of DMs, with probability at least $1-\mathcal{O}(N) \exp[-\Omega(m\rho^2)]$, we have 
		\begin{equation}
			(1-\rho)^2 \left[ c_1 \|\mathbf{x}_t\|_2^2 \cdot \sum_{j=i+1}^N\kappa_j^2    + c_2 \right]  \lesssim  \|h_i\|_2^2 \lesssim (1+\rho)^2 \left[ c_1 \|\mathbf{x}_t\|_2^2   \cdot \sum_{j=i+1}^N\kappa_j^2  + c_2 \right] ,
		\end{equation}
		where the hidden feature $h_i$ is the output of $f_i$ defined in Eq.~(\ref{eq:a_unet}),   $\mathbf{x}_t = \sqrt{\bar{\alpha}_t}\mathbf{x}_0 + \sqrt{1 - \bar{\alpha}_t}\epsilon_t$ is the input of UNet; $c_1$ and $c_2$ are two    constants;  $\kappa_i$ is the scaling coefficient of the $i$-th LSC in UNet.  

\end{minipage}
\end{mdframed}

\begin{proof}


Let $s$ represent the input of $f_i$ in Eq.~(\ref{eq:a_unet}), i.e., $s = a_i \circ a_{i-1} \circ \cdots a_1 \circ \mathbf{x}_t$, and let $h_i = f_i(s)$. By expanding Eq.~(\ref{eq:a_unet}), we can observe that $h_i = f_i(s)$ is mainly composed of the following $N-i$ terms in the long skip connection:
\begin{equation}
    \left\{
\begin{array}{l}
\kappa_{i+1}\cdot b_{i+1}\circ a_{i+1} \circ s ;\\
\kappa_{i+2}\cdot b_{i+1}\circ b_{i+2}\circ a_{i+2}\circ a_{i+1}\circ s ;\\ 
\cdots\\ 
\kappa_N\cdot b_{i+1}\circ b_{i+2}\cdots \circ b_N\circ a_N \cdots a_{i+1} \circ s;\\ 
\end{array}
\right.
\label{eq:path}
\end{equation}
When $N$ is sufficiently large, $h_i$ is predominantly dominated by $N$ long skip connection terms, while the information on the residual term is relatively minor. Hence, we can approximate Eq.~(\ref{eq:a_unet}) as follows:
\begin{equation}
    h_i = f_i(s) \approx \sum_{j=i+1}^N \kappa_j\cdot \pi_b^j\circ \pi_a^j \circ s, \quad 0 \leq i \leq N-1,
\end{equation}
where $\pi_b^j := b_{i+1}\circ b_{i+2}\cdots \circ b_j$, $\pi_a^j := a_{j}\circ a_{j-1}\cdots \circ a_{i+1}$.

In practice, the depth $l$ of block $a_i$ or $b_i$ is a universal constant, therefore,
from Corollary \ref{coro:zhu}, $\forall \rho \in (0,1)$ and $j \in \{i+1, i+2,...,N\}$, with probability at least $1-\mathcal{O}(N) \exp[-\Omega(m\rho^2)]$, we have 
\begin{equation}
    \|h_{\kappa_j}\|_2 \in \mathcal{O}((1\pm\rho) \|\mathbf{x}_t\|_2), 
    \label{eq:hkappa}
\end{equation}
where
\begin{equation}
     h_{\kappa_j} =  \phi (\mathbf{W}^{b_{i+1}}_{l-1} ... \phi(\mathbf{W}^{b_{i+1}}_1 r) ), \quad  r = b_{i+2}\cdots \circ b_j \circ \pi_a^j \circ s,
\end{equation}
s.t. $h_i = f_i(s) \approx \sum_{j=i+1}^N \kappa_j \mathbf{W}^{b_{i+1}}_l\circ h_{\kappa_j} $. From Lemma \ref{lemma:cwda}, the elements of $\mathbf{W}^{b_{i+1}}_l$ will approximately follow a Gaussian distribution $\mathcal{N}(0,\mathcal{O}(m^{-1}))$, therefore, $\mathbb{E} (\mathbf{W}^{b_{i+1}}_l\circ h_{\kappa_j}) = \mathbf{0}$. Moreover, from Lemma \ref{lemma:normbb},

\begin{equation}
    \mathbf{Var}(\mathbf{W}^{b_{i+1}}_l\circ h_{\kappa_j}) \approx  h_{\kappa_j}^{\mathrm{T}} \mathcal{O}(m^{-1})\cdot \mathbf{I}_{m\times m} h_{\kappa_j} = \mathcal{O}(m^{-1}) \|h_{\kappa_j}\|_2^2 \cdot \mathbf{I}_{m\times m}.
\end{equation}

Therefore \cite{zhang2021on}, $h_i \approx \sum_{j=i+1}^N \kappa_j \mathbf{W}^{b_{i+1}}_l\circ h_{\kappa_j} $ can be approximately regarded as a Gaussian variable with zero mean and covariance matrix $\sum_{j=i+1}^N \kappa_j^2 \mathcal{O}(m^{-1}) \|h_{\kappa_j}\|_2^2 \cdot \mathbf{I}_{m\times m}$. According to Lemma \ref{lemma:chi}, we have 
\begin{equation}
    \|h_i\|_2^2 \approx \mathbb{E}(\sum_{j=i+1}^N \kappa_j \mathbf{W}^{b_{i+1}}_l\circ h_{\kappa_j}) =  m \cdot \sum_{j=i+1}^N \kappa_j^2 \mathcal{O}(m^{-1}) \|h_{\kappa_j}\|_2^2 = \sum_{j=i+1}^N \kappa_j^2 \mathcal{O}(1) \|h_{\kappa_j}\|_2^2.
    \label{eq:hi}
\end{equation}

From Eq.~(\ref{eq:hkappa}) and Eq.~(\ref{eq:hi}), we have
		\begin{equation}
			(1-\rho)^2 \left[ c_1 \|\mathbf{x}_t\|_2^2 \cdot \sum_{j=i+1}^N\kappa_j^2    + c_2 \right]  \lesssim  \|h_i\|_2^2 \lesssim (1+\rho)^2 \left[ c_1 \|\mathbf{x}_t\|_2^2   \cdot \sum_{j=i+1}^N\kappa_j^2  + c_2 \right] ,
		\end{equation}
where we use $c_1$ to denote $\mathcal{O}(1)$ in Eq.~(\ref{eq:hi}) and we use the constant $c_2$ to compensate for the estimation loss caused by all the approximation calculations in the derivation process.

\end{proof}

\subsection{The proof of Lemma 3.2.}

\begin{mdframed}[backgroundcolor=gray!8]
\begin{minipage}{\linewidth}
\textbf{Lemma 3.2}. For $\mathbf{x}_t = \sqrt{\bar{\alpha}_t}\mathbf{x}_0 + \sqrt{1 - \bar{\alpha}_t}\epsilon_t$ defined in Eq.~(\ref{eq:a_xt}) as a input of UNet, $\epsilon_t \sim \mathcal{N}(0,\mathbf{I})$, if $\mathbf{x}_0$ follow the uniform distribution $U[-1,1]$, 
then
 we have 
 \begin{equation}
     \mathbb{E}\|\mathbf{x}_t\|_2^2 = (1-2\mathbb{E}_t \bar{\alpha}_t/3)m = \mathcal{O}(m).
 \end{equation}

\end{minipage}
\end{mdframed}

\begin{proof}
For $\mathbf{x}_t = \sqrt{\bar{\alpha}_t}\mathbf{x}_0 + \sqrt{1 - \bar{\alpha}_t}\epsilon_t$, we have
	\begin{equation}
		\begin{aligned}
			\mathbb{E}\|\mathbf{x}_t\|_2^2
			&= \mathbb{E}_{t,\mathbf{x}_0} ( \bar{\alpha}_t \sum_{j=1}^m  \mathbf{x}_0^2(j) + (1 - \bar{\alpha}_t)m ),
			\label{eq:esti1}
		\end{aligned}
	\end{equation}
	where $\mathbf{x}_0(j)$ represents the $j$-th element of $\mathbf{x}_0$. The equality in Eq.~(\ref{eq:esti1})  follows from the fact that, from Eq.~(\ref{eq:a_xt}) and Lemma \ref{lemma:chi}, we know that $\mathbf{x}_t$ follows $\mathcal{N}(\mathbf{x}_t; \sqrt{\bar{\alpha}_t}\mathbf{x}_0,(1-\bar{\alpha}_t)\mathbf{I})$. Therefore, $\|\mathbf{x}_t\|_2^2$ follows a chi-square distribution, whose  mathematical expectation is $\mathbb{E}_{t,\mathbf{x}_0} ( \bar{\alpha}_t \sum_{j=1}^m  \mathbf{x}_0^2(j) + (1 - \bar{\alpha}_t)m )$.

Moreover, since  $\mathbf{x}_0$ follow the uniform distribution $U[-1,1]$, we have

\begin{equation}
		\begin{aligned}
			\mathbb{E}\|\mathbf{x}_t\|_2^2 &= \mathbb{E}_t \bar{\alpha}_t \cdot  \sum_{j=1}^m \int_{-\infty}^\infty x^2 P_{x\sim U[-1,1]}(x) dx + \mathbb{E}_t (1-\bar{\alpha}_t) m\\
			& = \mathbb{E}_t \bar{\alpha}_t \cdot m/3 + \mathbb{E}_t (1-\bar{\alpha}_t) m  =(1-2\mathbb{E}_t \bar{\alpha}_t/3)m = \mathcal{O}(m).
			\label{eq:esti2}
		\end{aligned}
	\end{equation}
	where $P_{x\sim U[-1,1]}(x)$ is density function of the elements in $\mathbf{x}_0$. The last equality in Eq.~(\ref{eq:esti2}) follows the fact that,
according to the setup of the diffusion model, it is generally observed that $\bar{\alpha}_t$ monotonically decreases to 0 as $t$ increases, with $\bar{\alpha}_t \leq 1$. Consequently, we can easily deduce that $\mathbb{E}_t \bar{\alpha}_t$ is of the order $\mathcal{O}(1)$. For instance, in the setting of DDPM, we have $\mathbb{E}_t \bar{\alpha}_t = \mathbb{E}_t \left(\prod_{i=1}^t \sqrt{1- \frac{0.02i}{1000}}\right) \approx 0.27 = \mathcal{O}(1)$.

\end{proof}

\begin{mdframed}[backgroundcolor=gray!8]
\begin{minipage}{\linewidth}

		\textbf{Theorem 3.3}. Assume  that   all learnable matrices of  UNet in Eq.~\eqref{eq:a_unet} are independently initialized as Kaiming's initialization, i.e., $\mathcal{N}(0,\frac{2}{m})$.  Then for any $\rho\in(0,1]$, with probability at least $1-\mathcal{O}(nN) \exp[-\Omega(m)]$, for a sample $\mathbf{x}_t$ in training set, we have 
		\begin{equation}
			\|\nabla_{\mathbf{W}_p^{q}} \ell_s( {\mathbf{W}})\|_2^2 \lesssim \mathcal{O}\left( \ell_s( {\mathbf{W}}) \cdot \|\mathbf{x}_t\|_2^2 \cdot \sum\nolimits_{j=i}^N\kappa_j^2 + c_3 \right) ,\quad  (p\in \{1,2,...,l\}, q \in \{a_i,b_i\}),
			\label{eq:grad}
		\end{equation}
		where $\mathbf{x}_t = \sqrt{\bar{\alpha}_t}\mathbf{x}_0 + \sqrt{1 - \bar{\alpha}_t}\epsilon_t$ denotes the noisy sample of the $s$-th sample,   $\epsilon_t \sim \mathcal{N}(\mathbf{0},\mathbf{I})$,  $N$ is the   number of LSCs,  $c_3$ is a small constant. $n$ is the size of the training set.
\end{minipage}
\end{mdframed}

\begin{proof}
    From Eq.~(\ref{eq:a_unet}) and Eq.~(\ref{eq:path}), we can approximate the output of UNet $f_0(\mathbf{x}_t)$ by weight sum of $N$ forward path $g^j,1\leq j \leq N$, i.e.,
    \begin{equation}
        f_0(\mathbf{x}_t) \approx \sum_{j=1}^N \kappa_i \cdot g^j.
        \label{eq:f0app}
    \end{equation}
    where $g^j$ denote $b_1\circ b_2 \circ \cdots b_j \circ a_j \circ \cdots a_1 \circ \mathbf{x}_t$, and $1\leq j \leq N$.

    Moreover, we use $g^{j,q}_p$ represent the feature map after $\mathbf{W}_p^{q}$ in forward path $g^j$, i.e., $g^{j,q}_p = \mathbf{W}_p^{q} h_{p-1}^{j,q}$ and $h_{p}^{j,q} = \phi(g^{j,q}_p)$. Next, we first estimate $\|\nabla_{\mathbf{W}_p^{a_i}} \ell_s( {\mathbf{W}})\|_2^2$,

\begin{equation}
		\begin{aligned}
			\|\nabla_{\mathbf{W}_p^{a_i}} \ell_s( {\mathbf{W}})\|_2^2 &= \|\sum_{j=i}^N   \partial \ell_s( {\mathbf{W}})/ \partial g^j \cdot \partial g^j/\partial g^{j,a_i}_p \cdot \partial g^{j,a_i}_p/ \partial \mathbf{W}_p^{a_i}    \|_2^2\\
			& \leq \sum_{j=i}^N \|   \partial \ell_s( {\mathbf{W}})/ \partial g^j \cdot \partial g^j/\partial g^{j,a_i}_p \cdot \partial g^{j,a_i}_p/ \partial \mathbf{W}_p^{a_i}    \|_2^2\\
   & \leq \sum_{j=i}^N \|  \mathbf{D}(\mathbf{W}_{p+1}^{a_i})^{\mathrm{T}}\cdots \mathbf{D}(\mathbf{W}_{l}^{a_i})^{\mathrm{T}} \cdots \mathbf{D}(\mathbf{W}_{l}^{b_1})^{\mathrm{T}}    \cdot \partial \ell_s({\mathbf{W}})/\partial g^j    \|_2^2 \cdot \|h_{p-1}^{j,a_i}\|_2^2
		\end{aligned}
	\end{equation}

From Lemma \ref{lemma:zhu2} and Corollary \ref{coro:zhu},  with probability at least 1-$\mathcal{O}(nN)\exp[-\Omega(m)]$, we have

\begin{equation}
		\begin{aligned}
			\|\nabla_{\mathbf{W}_p^{a_i}} \ell_s( {\mathbf{W}})\|_2^2 &\leq \sum_{j=i}^N \mathcal{O}(\| \partial \ell_s({\mathbf{W}})/\partial g^j\|_2^2) \cdot \mathcal{O}(1)\cdot \|\mathbf{x}_t\|_2^2   \\
   &= \sum_{j=i}^N \left( \mathcal{O}(\kappa_j^2\ell_s({\mathbf{W}}) \right) \cdot \mathcal{O}(1)\cdot \|\mathbf{x}_t\|_2^2   \\
   &= \mathcal{O}\left( \ell_s( {\mathbf{W}}) \cdot \|\mathbf{x}_t\|_2^2 \cdot \sum\nolimits_{j=i}^N\kappa_j^2 + c_3 \right)
		\end{aligned}
	\end{equation}
where the constant $c_3$ can compensate for the estimation loss caused by all the approximation calculations in Eq.~(\ref{eq:f0app}). Similarly, by Eq.~(\ref{eq:b}), we can prove the situation while $q = b_i$, i.e., $\|\nabla_{\mathbf{W}_p^{b_i}} \ell_s( {\mathbf{W}})\|_2^2 \lesssim \mathcal{O}\left( \ell_s( {\mathbf{W}}) \cdot \|\mathbf{x}_t\|_2^2 \cdot \sum\nolimits_{j=i}^N\kappa_j^2 + c_3 \right)$.
    
\end{proof}

\subsection{The proof of Theorem 3.4.}

\begin{mdframed}[backgroundcolor=gray!8]
\begin{minipage}{\linewidth}
		\textbf{Theorem 3.4}. For   UNet in Eq.~(\ref{eq:a_unet}), assume $M_0 = \max \{ \| b_i\circ a_i\|_2 , 1\leq i\leq N\}$ and $f_N$ is $L_0$-Lipschitz continuous. $c_0$ is a constant related to $M_0$ and $L_0$. Suppose  $\mathbf{x}_t^{\epsilon_\delta}$ is an perturbated input of the vanilla input $\mathbf{x}_t$ with a small perturbation $ \epsilon_\delta =\|\mathbf{x}_t^{\epsilon_\delta} - \mathbf{x}_t\|_2 $.  Then we have 
		\begin{equation}
			\|\mathbf{UNet}(\mathbf{x}_t^{\epsilon_\delta}) - \mathbf{UNet}(\mathbf{x}_t)\|_2 \leq \epsilon_\delta \left[\sum\nolimits_{i=1}^N\kappa_i M_0^i +c_0\right],
			\label{eq:theo2}
   \vspace{-0.1cm}
		\end{equation}
		where $\mathbf{x}_t = \sqrt{\bar{\alpha}_t}\mathbf{x}_0 + \sqrt{1 - \bar{\alpha}_t}\epsilon_t$,   $\epsilon_t \sim \mathcal{N}(\mathbf{0},\mathbf{I})$,  $N$ is the number of the long skip connections. 

\end{minipage}
\end{mdframed}

\begin{proof}
From Eq.~(\ref{eq:a_unet}), we have
\begin{equation}
    \mathbf{UNet}(\mathbf{x}_t) = f_0(\mathbf{x}_t) = b_{1} \circ \left[\kappa_{1} \cdot a_{1} \circ \mathbf{x}_t+f_{1}\left(a_{1} \circ \mathbf{x}_t\right)\right],
\end{equation}
and
\begin{equation}
    \mathbf{UNet}(\mathbf{x}_t^{\epsilon_\delta}) = f_0(\mathbf{x}_t^{\epsilon_\delta}) = b_{1} \circ \left[\kappa_{1} \cdot a_{1} \circ \mathbf{x}_t^{\epsilon_\delta}+f_{1}\left(a_{1} \circ \mathbf{x}_t^{\epsilon_\delta}\right)\right].
\end{equation}

Using Taylor expansion, we have

\begin{equation}
		\begin{aligned}
			\mathbf{UNet}(\mathbf{x}_t^{\epsilon_\delta}) - \mathbf{UNet}(\mathbf{x}_t) &= b_{1} \circ \left[\kappa_{1} \cdot a_{1} \circ (\mathbf{x}_t^{\epsilon_\delta} - \mathbf{x}_t )+f_{1}\left(a_{1} \circ \mathbf{x}_t^{\epsilon_\delta}\right) - f_{1}\left(a_{1} \circ \mathbf{x}_t\right)\right]\\
			& = b_{1} \circ a_{1} \circ \left[\kappa_{1} \cdot  (\mathbf{x}_t^{\epsilon_\delta} - \mathbf{x}_t )+ (\mathbf{x}_t^{\epsilon_\delta} - \mathbf{x}_t )^T  \nabla_z f_{1}\left(z\right)|_{z = a_{1} \circ \mathbf{x}_t }\right].
			\label{eq:diffu}
		\end{aligned}
	\end{equation}

Therefore,
\begin{equation}
		\begin{aligned}
			\|\mathbf{UNet}(\mathbf{x}_t^{\epsilon_\delta}) - \mathbf{UNet}(\mathbf{x}_t)\|_2 &= \|b_{1} \circ a_{1} \circ \left[\kappa_{1} \cdot (\mathbf{x}_t^{\epsilon_\delta} - \mathbf{x}_t )+(\mathbf{x}_t^{\epsilon_\delta} - \mathbf{x}_t )^T  \nabla_z f_{1}\left(z\right)|_{z = a_{1} \circ \mathbf{x}_t }\right]\|_2\\
   &\leq \epsilon_\delta \cdot \|b_{1} \circ a_{1}\|_2 \cdot \big(\kappa_1 + \|\nabla_z f_{1}\left(z\right)|_{z = a_{1} \circ \mathbf{x}_t}\|_2  \big)
			\label{eq:diffu2}
		\end{aligned}
	\end{equation}
Next, we estimate the $\|\nabla_z f_{1}\left(z\right)|_{z = a_{1} \circ \mathbf{x}_t}\|_2$, From Eq.~(\ref{eq:a_unet}), we have

\begin{equation}
		\begin{aligned}
			\nabla_z f_1(z) &= \nabla_z b_{2} \circ \left[\kappa_{2} \cdot a_{2} \circ z+f_{2}\left(a_{2} \circ z\right)\right]\\
   &= \kappa_{2} \cdot b_{2} \circ a_{2} \circ \mathbf{I} +  b_{2} \circ a_{2} \circ \nabla_u f_{2}\left(u\right)|_{u = a_{2} \circ z},
		\end{aligned}
	\end{equation}
Therefore, we have
\begin{equation}
		\begin{aligned}
			\|\nabla_z f_{1}\left(z\right)|_{z = a_{1} \circ \mathbf{x}_t}\|_2 &= \|  \big(\kappa_{2} \cdot b_{2} \circ a_{2} \circ \mathbf{I} +  b_{2} \circ a_{2} \circ \nabla_u f_{2}\left(u\right)|_{u = a_{2} \circ z} \big) |_{z = a_{1} \circ \mathbf{x}_t}  \|_2\\
   &= \|  \kappa_{2} \cdot b_{2} \circ a_{2} \circ \mathbf{I} +  b_{2} \circ a_{2} \circ \nabla_u f_{2}\left(u\right)|_{u = a_{2} \circ a_{1} \circ \mathbf{x}_t}   \|_2\\
   &\leq \|b_2\circ a_2\|_2 \cdot \big(\kappa_2 + \|\nabla_u f_{2}\left(u\right)|_{u = a_{2} \circ a_{1} \circ \mathbf{x}_t}\|_2 \big).
   \label{eq:nabla}
		\end{aligned}
	\end{equation}

From Eq.~(\ref{eq:diffu2}) and Eq. (\ref{eq:nabla}), let $\Delta = \|\mathbf{UNet}(\mathbf{x}_t^{\epsilon_\delta}) - \mathbf{UNet}(\mathbf{x}_t)\|_2$, we have

\begin{equation}
		\begin{aligned}
			 \Delta& \leq \epsilon_\delta \cdot \|b_{1} \circ a_{1}\|_2 \cdot \big(\kappa_1 + \|\nabla_z f_{1}\left(z\right)|_{z = a_{1} \circ \mathbf{x}_t}\|_2  \big)\\
   &\leq \epsilon_\delta \cdot \|b_{1} \circ a_{1}\|_2 \cdot \big[\kappa_1 + \|b_2\circ a_2\|_2 \cdot \big(\kappa_2 + \|\nabla_u f_{2}\left(u\right)|_{u = a_{2} \circ a_{1} \circ \mathbf{x}_t}\|_2 \big)  \big]\\
   &\leq \epsilon_\delta \cdot  \left( \kappa_1 \|b_{1} \circ a_{1}\|_2  +  \kappa_2 \|b_{1} \circ a_{1}\|_2 \cdot  \|b_{2} \circ a_{2}\|_2 + \|b_{1} \circ a_{1}\|_2 \cdot  \|b_{2} \circ a_{2}\|_2 \cdot  \|\nabla_u f_{2}\left(u\right)|_{u = a_{2} \circ a_{1} \circ \mathbf{x}_t}\|_2   \right)\\
   & \leq \epsilon_\delta \big[ \sum\nolimits_{i=1}^N \big(\kappa_i \prod\nolimits_{j=1}^i \|b_j\circ a_j\|_2 \big) + \prod\nolimits_{j=1}^N \|b_j\circ a_j\|_2 \cdot L_0   \big] \\
   &\leq \epsilon_\delta \cdot \left( \kappa_1 M_0 + \kappa_2 M_0^2 \mathellipsis + \kappa_N M_0^N + c_0  \right),
			\label{eq:diffu3}
		\end{aligned}
	\end{equation}
where $c_0 = M_0^N L_0$, and hence we have $\|\mathbf{UNet}(\mathbf{x}_t^{\epsilon_\delta}) - \mathbf{UNet}(\mathbf{x}_t)\|_2 \leq \epsilon_\delta \left[\sum\nolimits_{i=1}^N\kappa_i M_0^i +c_0\right]$ for UNet in Eq.~(\ref{eq:a_unet}) and $\mathbf{x}_t = \sqrt{\bar{\alpha}_t}\mathbf{x}_0 + \sqrt{1 - \bar{\alpha}_t}\epsilon_t$,   $\epsilon_t \sim \mathcal{N}(\mathbf{0},\mathbf{I})$, with small perturbation $\epsilon_\delta$.

\end{proof}
Next, we demonstrate through empirical experiments that $M_0$ in Theorem 3.4 is generally greater than 1. As shown in Fig.~\ref{fig:m0}, we directly computed $\|b_i\circ a_i\|_2$ for the Eq.~(\ref{eq:a_unet}) on the CIFAR10 and CelebA datasets. We observe that $\|b_i\circ a_i\|_2$ exhibits a monotonically increasing trend with training iterations and is always greater than 1 at any iteration. Since $M_0 = \max { \| b_i\circ a_i\|_2 , 1\leq i\leq N}$, we conclude that $M_0\geq 1$.

\begin{figure*}[h]
  \centering
 \includegraphics[width=0.8\linewidth]{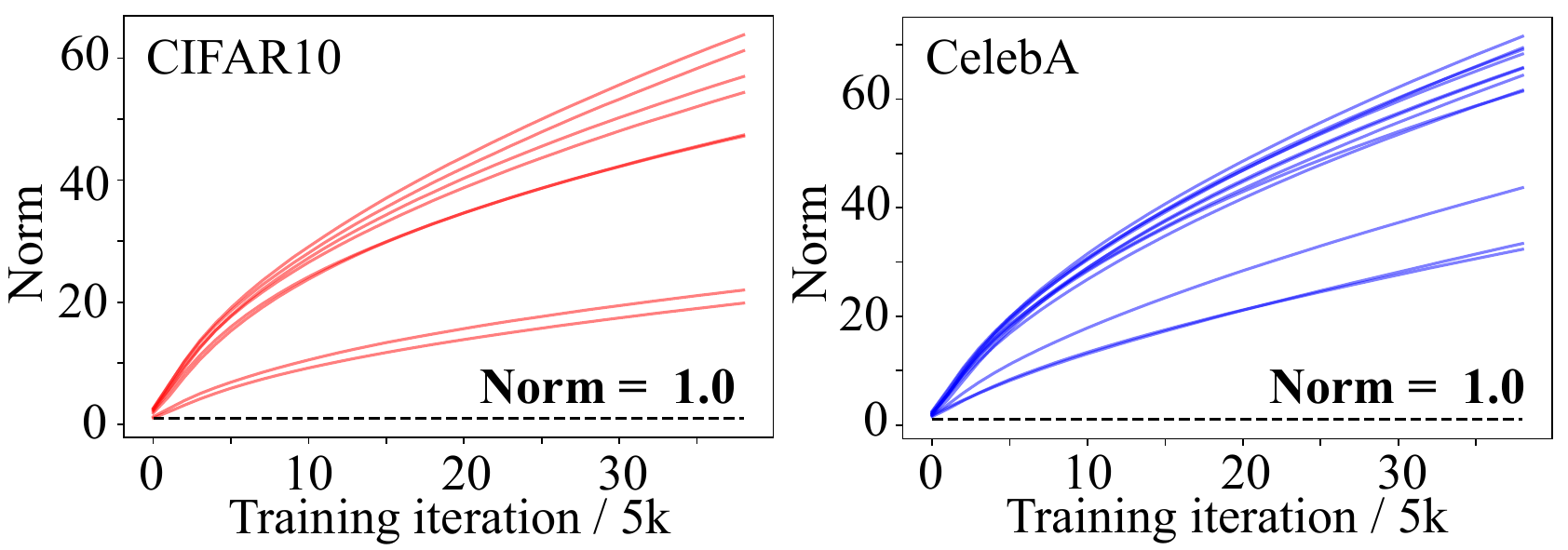}
  \caption{The norm of $\|b_i\circ a_i\|_2$ on different dataset. }
\label{fig:m0}
\end{figure*}

\clearpage
\section{Other experimental observations}
\label{app:other}
Fig.~\ref{fig:zd} is a supplement to Fig.~1 in the main text. We investigate the case of UNet with a depth of 12 on the CIFAR10 dataset, where each image represents the visualization results obtained with different random seeds. We observe that the original UNet consistently exhibits feature oscillation, while our proposed method effectively reduces the issue of oscillation, thereby stabilizing the training of the diffusion model.
\begin{figure*}[h]
  \centering
 \includegraphics[width=0.99\linewidth]{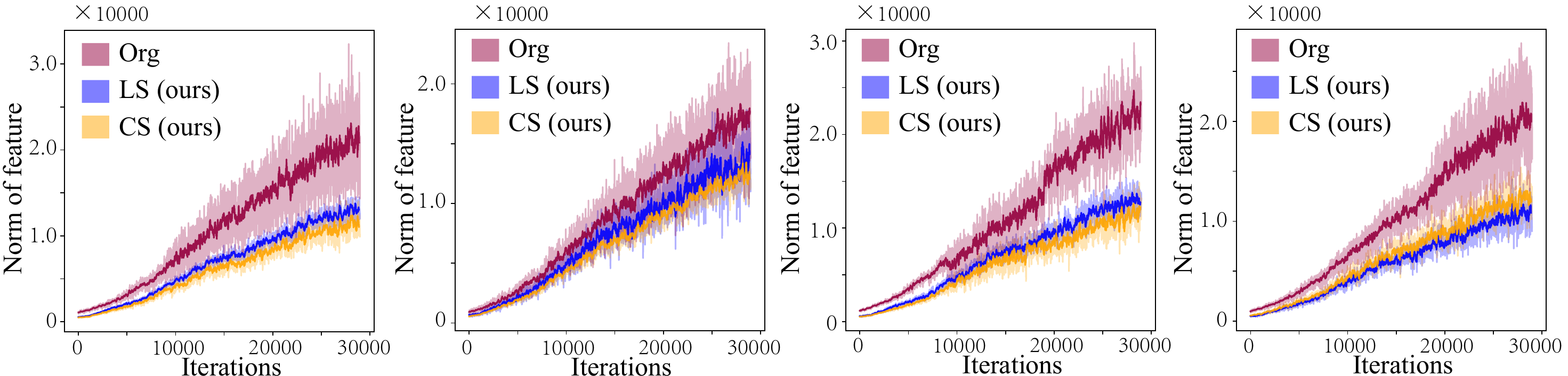}
  \caption{More examples for feature oscillation under different random seeds (UNet's depth = 12). }
\label{fig:zd}
\end{figure*}

In the main text, we specify that the domain of $\kappa$ in CS should be (0,1). This ensures stable forward and backward propagation of the model and enhances the model's robustness to input noise, as discussed in Section 3. Otherwise, it would exacerbate the instability of the model training.

In Fig.~\ref{fig:kap}, we illustrate the impact of having $\kappa>1$ on the performance of the diffusion model. We observe that generally, having $\kappa>1$ leads to a certain degree of performance degradation, especially in unstable training settings, such as when the batch size is small. For example, in the case of ImageNet64, when the batch size is reduced to 256, the model's training rapidly collapses when $\kappa>1$. These experimental results regarding $\kappa$ align with the analysis presented in Section 3.
\begin{figure*}[h]
  \centering
 \includegraphics[width=0.99\linewidth]{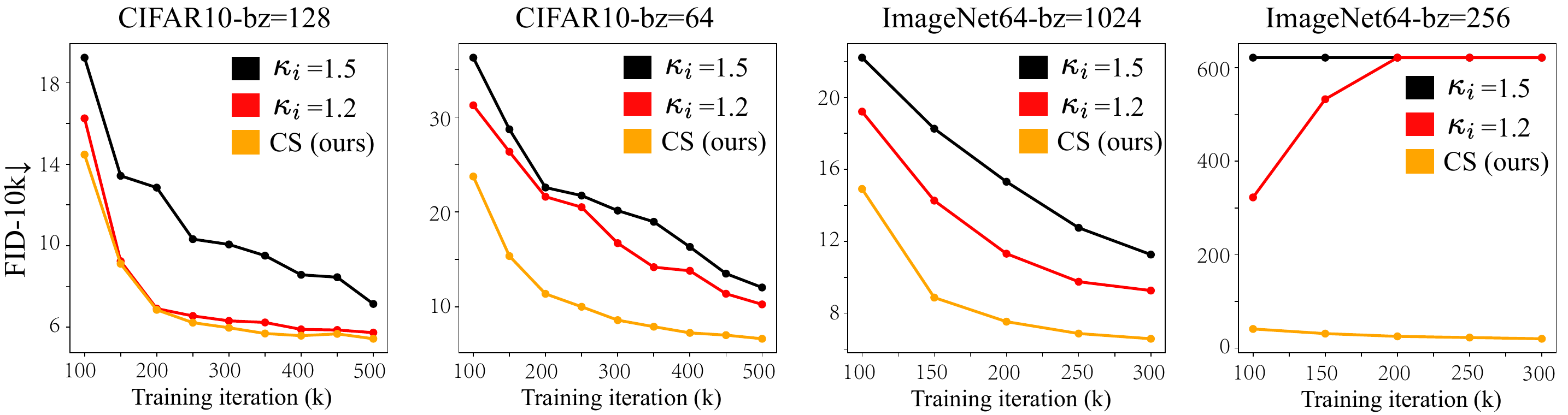}
  \caption{The situation that the $\kappa_i = \kappa > 1$ under different settings. }
\label{fig:kap}
\end{figure*}

\end{document}